%
%
%
%
%
%
%
%
\documentclass{article}
\usepackage{authblk}
\usepackage[utf8]{inputenc}
\usepackage[T1]{fontenc}
\usepackage[a4paper]{geometry}
\usepackage{graphicx}
\usepackage[textwidth=8em,textsize=small]{todonotes}
\setlength{\marginparwidth}{2cm}

\usepackage{amsmath,amsthm}
\usepackage{natbib}
\usepackage{hyperref}
\usepackage[nameinlink]{cleveref}
\usepackage{bbm}
\usepackage{amsfonts}
\usepackage{mathtools}
\usepackage[titletoc,toc]{appendix}
\usepackage{url}
\usepackage[ruled, vlined, linesnumbered]{algorithm2e}
\usepackage{algpseudocode}
\usepackage{multirow}
\usepackage{booktabs}
\usepackage[normalem]{ulem}
\usepackage[absolute]{textpos}

\definecolor{darkblue}{HTML}{0c7dbb}
\definecolor{darkgreen}{rgb}{0,0.48,0.65}
\hypersetup{
    colorlinks = true,
    citecolor=darkgreen,
    filecolor=black,
    linkcolor=darkblue,
    urlcolor=blue
}
\providecommand{\keywords}[1]
{
    \small
    \textbf{\textit{Keywords---}} #1
}

\DeclareMathOperator*{\argmin}{arg\,min}
\DeclareMathOperator*{\argmax}{arg\,max}
\newtheorem{theorem}{Theorem}[section]
\newtheorem{corollary}{Corollary}[section]

\newtheorem{lemma}[theorem]{Lemma}   
\Crefname{algocf}{Algorithm}{Algorithms}

\title{Automatic Change-Point Detection in Time Series via Deep Learning}
\author[1]{Jie Li\footnote{Addresses for correspondence: Jie Li, Department of Statistics, London School of Economics and Political Science, London, WC2A 2AE.\ \textbf{Email}: j.li196@lse.ac.uk}}
\author[2]{Paul Fearnhead}
\author[1]{Piotr Fryzlewicz}
\author[1]{Tengyao Wang}
\affil[1]{Department of Statistics, London School of Economics and Political Science, London, UK}
\affil[2]{Department of Mathematics and Statistics, Lancaster University, Lancaster, UK}
\begin{document}
\maketitle
\begin{textblock}{12.5}[0,0](2,1)
    \noindent \textsf{[\textit{To be read before }The Royal Statistical Society \textit{at the Society's 2023 annual conference held in} Harrogate \textit{on Wednesday, September 6th, 2023,} the President, Dr Andrew Garrett, \textit{in the Chair}.]}
\end{textblock}
\begin{textblock}{12.5}[0,0](2,2)
    \noindent \textsf{[\textcolor{darkblue}{Accepted (with discussion), to appear}]}
\end{textblock}
\begin{abstract}
    Detecting change-points in data is challenging because of the range of possible types of change and types of behaviour of data when there is no change. Statistically efficient methods for detecting a change will depend on both of these features, and it can be difficult for a practitioner to develop an appropriate detection method for their application of interest. We show how to automatically generate new offline detection methods based on training a neural network. Our approach is motivated by many existing tests for the presence of a change-point being representable by a simple neural network, and thus a neural network trained with sufficient data should have performance at least as good as these methods. We present theory that quantifies the error rate for such an approach, and how it depends on the amount of training data. Empirical results show that, even with limited training data, its performance is competitive with the standard CUSUM-based classifier for detecting a change in mean when the noise is independent and Gaussian, and can substantially outperform it in the presence of auto-correlated or heavy-tailed noise. Our method also shows strong results in detecting and localising changes in activity based on accelerometer data.
    
    \keywords{ Automatic statistician; Classification; Likelihood-free inference; Neural networks;   Structural breaks;  Supervised learning }
    
\end{abstract}
\section{Introduction}\label{sec:Introduction}
	Detecting change-points in data sequences is of interest in many application areas such as bioinformatics~\citep{picardStatisticalApproachArray2005}, climatology~\citep{reevesReviewComparisonChangepoint2007},
	signal processing~\citep{haynesComputationallyEfficientChangepoint2017} and neuroscience~\citep{ohVarianceChangePoint2005}. 
	In this work, we are primarily concerned with the problem of offline change-point detection, where the entire data is available to the analyst beforehand. Over the past few decades, various methodologies have been extensively studied in this area, see~\citet{killickOptimalDetectionChangepoints2012,jandhyala2013inference,fryzlewiczWildBinarySegmentation2014,fryzlewiczNarrowestSignificancePursuit2021, wangHighDimensionalChange2018,truong2020selective} and references therein. Most research on change-point detection has concentrated on detecting and localising different types of change, e.g.\ change in mean~\citep{killickOptimalDetectionChangepoints2012,fryzlewiczWildBinarySegmentation2014}, variance~\citep{gaoVarianceChangePoint2019,liVarianceChangepointDetection2015}, median~\citep{fryzlewiczRobustNarrowestSignificance2021} or slope~\citep{baranowskiNarrowestoverthresholdDetectionMultiple2019,fearnheadDetectingChangesSlope2019}, amongst many others.
	
	Many change-point detection methods are based upon modelling data when there is no change and when there is a single change, and then constructing an appropriate test statistic to detect the presence of a change~\cite[e.g.][]{james1987tests,fearnhead2020relating}. The form of a good test statistic will vary with our modelling assumptions and the type of change we wish to detect. This can lead to difficulties in practice. As we use new models, it is unlikely that there will be a change-point detection method specifically designed for our modelling assumptions. Furthermore, developing an appropriate method under a complex model may be challenging, while in some applications an appropriate model for the data may be unclear but we may have substantial historical data that shows what patterns of data to expect when there is, or is not, a change.
	
	In these scenarios, currently a practitioner would need to choose the existing change detection method which seems the most appropriate for the type of data they have and the type of change they wish to detect. To obtain reliable performance, they would then need to adapt its implementation, for example tuning the choice of threshold for detecting a change. Often, this would involve applying the method to simulated or historical data.
	
	To address the challenge of automatically developing new change detection methods, this paper is motivated by the question: Can we construct new test statistics for detecting a change based only on having labelled examples of change-points?
	We show that this is indeed possible by training a neural network to classify whether or not a data set has a change of interest. This turns change-point detection in a supervised learning problem.
	
	A key motivation for our approach are results that show many common test statistics for detecting changes, such as the CUSUM test for detecting a change in mean, can be represented by simple neural networks. This means that with sufficient training data, the classifier learnt by such a neural network will give performance at least as good as classifiers corresponding to these standard tests. In scenarios where a standard test, such as CUSUM, is being applied but its modelling assumptions do not hold, we can expect the classifier learnt by the neural network to outperform it.
	
	There has been increasing recent interest in whether ideas from machine learning, and methods for classification, can be used for change-point detection. Within computer science and engineering, these include a number of methods designed for and that show promise on specific applications~\cite[e.g.][]{ahmadzadeh2018change,deryckChangePointDetection2021,GUPTA2022118260,huangetal2003syntheticdata}. Within statistics,~\cite{londschien2022random} and~\cite{lee2023training} consider training a classifier as a way to estimate the likelihood-ratio statistic for a change. However these methods train the classifier in an un-supervised way on the data being analysed, using the idea that a classifier would more easily distinguish between two segments of data if they are separated by a change-point.~\cite{changKernelChangepointDetection2019} use simulated data to help tune a kernel-based change detection method. Methods that use historical, labelled data have been used to train the tuning parameters of change-point algorithms~\cite[e.g.][]{hocking2015peakseg,liehrmann2021increased}. Also, neural networks have been employed to construct similarity scores of new observations to learned pre-change distributions for online change-point detection \citep{lee2023training}. However, we are unaware of any previous work using historical, labelled data to develop offline change-point methods.
	As such, and for simplicity, we focus on the most fundamental aspect, namely the problem of detecting a single change. Detecting and localising multiple changes is considered in~\Cref{sec:Detecting multiple changes and multiple change-types} when analysing activity data. We remark that by viewing the change-point detection problem as a classification instead of a testing problem, we aim to control the overall misclassification error rate instead of handling the Type I and Type II errors separately. In practice, asymmetric treatment of the two error types can be achieved by suitably re-weighting misclassification in the two directions in the training loss function.
	
	The method we develop has parallels with likelihood-free inference methods~\cite[]{gourieroux1993indirect,beaumont2019approximate} in that one application of our work is to use the ability to simulate from a model so as to circumvent the need to analytically calculate likelihoods. However, the approach we take is very different from standard likelihood-free methods which tend to use simulation to estimate the likelihood function itself. By comparison, we directly target learning a function of the data that can discriminate between instances that do or do not contain a change~\cite[though see][for likelihood-free methods based on re-casting the likelihood as a classification problem]{gutmann2018likelihood}.
	
	For an introduction to the statistical aspects of neural network-based classification, albeit not specifically in a change-point context, see~\citet{ripley1994neuralnetwork}.
	
	We now briefly introduce our notation. For any $n\in\mathbb{Z}^{+}$, we define $[n]\coloneqq\{1,\ldots,n\}$. We take all vectors to be column vectors unless otherwise stated.
	Let $\boldsymbol{1}_n$ be the all-one vector of length $n$.   Let $\mathbbm{1}\{\cdot\}$ represent the indicator function. The vertical symbol \( |\cdot| \) represents the absolute value or cardinality of \( \cdot \) depending on the context. For vector $\boldsymbol{x}=(x_1,\ldots,x_n)^\top$, we define its $p$-norm as $\|\boldsymbol{x}\|_p\coloneqq\big(\sum_{i=1}^n |x_i|^p\big)^{1/p}, p\geq 1$; when $p=\infty$, define $\|\boldsymbol{x}\|_\infty\coloneqq\max_i|x_i|$.
	All proofs, as well as additional simulations and real data analyses appear in the supplement.

\section{Neural networks}\label{sec:nn}
	The initial focus of our work is on the binary classification problem for whether a change-point exists in a given time series.
	We will work with multilayer neural networks with Rectified Linear Unit (ReLU) activation functions and binary output.
	The multilayer neural network consists of an input layer, hidden layers and an output layer, and can be represented by a directed acyclic graph, see~\Cref{fig:Arch-NN.eps}.
	\begin{figure}[t]
	    \centering
	    \makebox{\includegraphics[width=0.3\textwidth,height=0.25\textwidth]{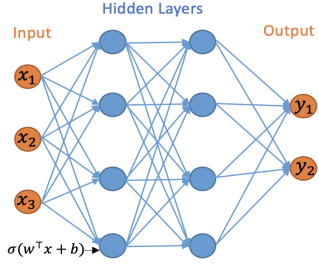}}
	    \caption{A neural network with 2 hidden layers and width vector $\mathbf{m}=(4,4)$.}\label{fig:Arch-NN.eps}
	\end{figure}
	Let  \( L\in\mathbb{Z}^{+} \) represent the number of hidden layers and \( \boldsymbol{m}={( m_{1},\ldots,m_{L} )}^{\top} \) the vector of the hidden layers widths, i.e.\ $m_i$ is the number of nodes in the $i$th hidden layer. For a neural network with $L$ hidden layers we use the convention that \( m_{0}= n\) and \( m_{L+1}=1 \). For any bias vector \( \boldsymbol{b}={ ( b_{1},b_{2},\ldots,b_{r} ) }^{\top}\in \mathbb{R}^{r} \), define the shifted activation function \( \sigma_{\boldsymbol{b}}:\mathbb{R}^{r}\to\mathbb{R}^{r}\):
	\begin{equation*}
	    \sigma_{\boldsymbol{b}}((y_1,\ldots,y_r)^{\top})=(\sigma(y_1-b_{1}),\ldots,\sigma(y_r-b_{r}))^{\top},
	\end{equation*}
	where $\sigma(x)=\max(x,0)$ is the ReLU activation function. The neural network can be mathematically represented by the composite function \( h:\mathbb{R}^{n}\to \{0,1\} \) as
	\begin{equation}\label{eq:h_fun}
	    h(\boldsymbol{x}) \coloneqq \sigma^{*}_{\lambda}W_{L}\sigma_{\boldsymbol{b}_{L}}W_{L-1}\sigma_{\boldsymbol{b}_{L-1}}\cdots W_{1}\sigma_{\boldsymbol{b}_{1}}W_{0}\boldsymbol{x},
	\end{equation}
	where \( \sigma^{*}_{\lambda}(x)=\mathbbm{1}\{x > \lambda\} \), \( \lambda>0 \) and $W_\ell\in\mathbb{R}^{m_{\ell+1}\times m_{\ell}}$ for $\ell \in\{0,\ldots,L\}$ represent the weight matrices. We define the function class \( \mathcal{H}_{L,\boldsymbol{m}} \) to be the class of functions $h(\boldsymbol{x})$ with $L$ hidden layers and width vector $\boldsymbol{m}$.
	
	The output layer in~\eqref{eq:h_fun} employs the shifted heaviside function $\sigma^{*}_{\lambda}(x)$, which is used for binary classification as the final activation function. This choice is guided by the fact that we use the 0-1 loss, which focuses on the percentage of samples assigned to the correct class, a natural performance criterion for binary classification. Besides its wide adoption in machine learning practice, another advantage of using the 0-1 loss is that it is possible to utilise the theory of the Vapnik--Chervonenkis (VC) dimension~\citep[see, e.g.][Definition~6.5]{shalev-shwartzUnderstandingMachineLearning2014} to bound the generalisation error of a binary classifier equipped with this loss; indeed, this is the approach we take in this work. The relevant results regarding the VC dimension of neural network classifiers are e.g.\ in~\cite{bartlettNearlytightVCdimensionPseudodimension2019}. As in~\cite{schmidt-hieberNonparametricRegressionUsing2020}, we work with the exact minimiser of the empirical risk. In both binary or multiclass classification, it is possible to work with other losses which make it computationally easier to minimise the corresponding risk, see e.g.~\cite{bosConvergenceRatesDeep2022}, who use a version of the cross-entropy loss. However, loss functions different from the 0-1 loss make it impossible to use VC-dimension arguments to control the generalisation error, and more involved arguments, such as those using the covering number~\citep{bosConvergenceRatesDeep2022} need to be used instead. We do not pursue these generalisations in the current work.

\section{CUSUM-based classifier and its generalisations are neural networks}\label{sec:CUSUM test and its generalisations are neural networks}
	\subsection{Change in mean}\label{subsec:Change_in_mean}
	    We initially consider the case of a single change-point with an unknown location \( \tau\in[n-1] \), $n \ge 2$, in the model
	    \begin{equation*}
	        \begin{aligned} \boldsymbol{X}&=\boldsymbol{\mu}+\boldsymbol{\xi},\\
	            \boldsymbol{\mu} & = {(\mu_{\mathrm{L}}\mathbbm{1}\{i\leq \tau\} + \mu_{\mathrm{R}}\mathbbm{1}\{i> \tau\})}_{i\in [n]} \in \mathbb{R}^n,
	        \end{aligned}
	    \end{equation*}
	    where \( \mu_{\mathrm{L}},\mu_{\mathrm{R}} \) are the unknown signal values before and after the change-point; $\boldsymbol{\xi}\sim N_{n}(0,I_{n})$. The CUSUM test is widely used to detect mean changes in univariate data. For the observation $\boldsymbol{x}$, the CUSUM transformation $\mathcal{C}:\mathbb{R}^n\to\mathbb{R}^{n-1}$ is defined as $\mathcal{C}(\boldsymbol{x}) := (\boldsymbol{v}_1^\top \boldsymbol{x}, \ldots, \boldsymbol{v}_{n-1}^\top \boldsymbol{x})^\top$, where $\boldsymbol{v}_i \coloneqq\bigl(\sqrt{\frac{n-i}{in}}\boldsymbol{1}_{i}^\top, -\sqrt{\frac{i}{(n-i)n}}\boldsymbol{1}_{n-i}^\top\bigr)^{\top}$ for $i\in[n-1]$. Here, for each $i\in[n-1]$, $(\boldsymbol{v}_i^\top \boldsymbol{x})^2$ is the log likelihood-ratio statistic for testing a change at time $i$ against the null of no change~\citep[e.g.][]{baranowskiNarrowestoverthresholdDetectionMultiple2019}.
	    For a given threshold $\lambda>0$, the classical CUSUM test for a change in the mean of the data is defined as
	    \[
	        h^{\mathrm{CUSUM}}_\lambda(\boldsymbol{x}) = \mathbbm{1}\{\|\mathcal{C}(\boldsymbol{x})\|_\infty > \lambda\}.
	    \]
	    The following lemma shows that $h^{\mathrm{CUSUM}}_\lambda(\boldsymbol{x})$ can be represented as a neural network.
	    \begin{lemma}\label{lem:CUSUMinNNet}
	        For any $\lambda > 0$, we have $h^{\mathrm{CUSUM}}_\lambda(\boldsymbol{x}) \in \mathcal{H}_{1, 2n-2}$.
	    \end{lemma}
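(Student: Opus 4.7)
My plan is to exhibit explicit network parameters realising $h^{\mathrm{CUSUM}}_\lambda$. The guiding observation is that the CUSUM event decomposes as the union of $2(n-1)$ half-spaces,
\[
\{\|\mathcal{C}(\boldsymbol{x})\|_\infty > \lambda\} = \bigcup_{i=1}^{n-1} \bigl(\{\boldsymbol{v}_i^\top \boldsymbol{x} > \lambda\} \cup \{-\boldsymbol{v}_i^\top \boldsymbol{x} > \lambda\}\bigr),
\]
which immediately explains the width $2n-2$: dedicate one ReLU unit to each of the $2(n-1)$ half-space conditions.

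Concretely, I would take $W_0 \in \mathbb{R}^{(2n-2)\times n}$ whose $(2i-1)$-th and $(2i)$-th rows are $\boldsymbol{v}_i^\top$ and $-\boldsymbol{v}_i^\top$ respectively for $i \in [n-1]$, set the hidden-layer biases $\boldsymbol{b}_1 = \lambda\,\boldsymbol{1}_{2n-2}$, and choose the output weights $W_1 = \boldsymbol{1}_{2n-2}^\top$. The $j$-th hidden activation then becomes $\sigma(\pm\boldsymbol{v}_i^\top\boldsymbol{x} - \lambda)$, which is strictly positive exactly when the associated half-space condition holds. Using the elementary pointwise identity $\sigma(t-\lambda) + \sigma(-t-\lambda) = \sigma(|t|-\lambda)$ (valid for $\lambda \ge 0$), the pre-threshold output collapses to
\[
g(\boldsymbol{x}) = \sum_{i=1}^{n-1}\bigl[\sigma(\boldsymbol{v}_i^\top \boldsymbol{x} - \lambda) + \sigma(-\boldsymbol{v}_i^\top \boldsymbol{x} - \lambda)\bigr] = \sum_{i=1}^{n-1} \sigma(|\boldsymbol{v}_i^\top \boldsymbol{x}| - \lambda),
\]
which equals $0$ on the null region $\{\|\mathcal{C}(\boldsymbol{x})\|_\infty \le \lambda\}$ and is strictly positive on its complement.

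The remaining step is to verify that applying the thresholding activation $\sigma^{*}_{\lambda'}$ to $g$ recovers the indicator of $\{g > 0\}$ for an appropriate choice of $\lambda' > 0$. Since $g$ is non-negative and vanishes exactly on the null, this reduces to calibrating a small positive threshold (or, equivalently, rescaling $W_1$ by a large positive constant) so that the sub-level set $\{g \le \lambda'\}$ coincides with the closed null region. I expect this threshold reconciliation to be the main (mild) technical point, because $g$ takes arbitrarily small positive values near the CUSUM decision boundary $\{\|\mathcal{C}(\boldsymbol{x})\|_\infty = \lambda\}$; the construction and scaling of parameters handle this routinely, placing $h^{\mathrm{CUSUM}}_\lambda$ in $\mathcal{H}_{1, 2n-2}$ as claimed.
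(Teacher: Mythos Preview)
Your construction is exactly the paper's: the paper takes $W_0=(\boldsymbol{v}_1,\ldots,\boldsymbol{v}_{n-1},-\boldsymbol{v}_1,\ldots,-\boldsymbol{v}_{n-1})^\top$, $\boldsymbol{b}_1=\lambda\boldsymbol{1}_{2n-2}$, $W_1=\boldsymbol{1}_{2n-2}^\top$, obtains the same pre-threshold output $g(\boldsymbol{x})=\sum_{i=1}^{n-1}\{(\boldsymbol{v}_i^\top\boldsymbol{x}-\lambda)_+ +(-\boldsymbol{v}_i^\top\boldsymbol{x}-\lambda)_+\}$, and then simply sets the output threshold to $b_2=0$, so that $h(\boldsymbol{x})=\mathbbm{1}\{g(\boldsymbol{x})>0\}=h^{\mathrm{CUSUM}}_\lambda(\boldsymbol{x})$ on the nose.

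The one point where your proposal diverges is the last paragraph, and there your suggested fix is actually incorrect. Rescaling $W_1$ by a constant $c>0$ and thresholding at some $\lambda'>0$ gives $\mathbbm{1}\{cg(\boldsymbol{x})>\lambda'\}$, and for every finite $c$ and every $\lambda'>0$ the set $\{\boldsymbol{x}:0<g(\boldsymbol{x})\le \lambda'/c\}$ is nonempty (it contains, for instance, all $\boldsymbol{x}$ with $\lambda<\|\mathcal{C}(\boldsymbol{x})\|_\infty\le \lambda+\lambda'/c$). On that set $h^{\mathrm{CUSUM}}_\lambda=1$ while your network outputs $0$, so no choice of scaling yields exact equality. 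The paper sidesteps this entirely by taking the output threshold equal to $0$; the constraint ``$\lambda>0$'' appearing in the definition of $\sigma^*_\lambda$ in Section~\ref{sec:nn} is a minor slip, and the paper's own proof reads it as $\lambda\ge 0$. You should do the same rather than invoke a scaling argument that cannot close the gap.
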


	    The fact that the widely-used CUSUM statistic can be viewed as a simple neural network has far-reaching consequences: this means that given enough training data, a neural network architecture that permits the CUSUM-based classifier as its special case cannot do worse than CUSUM in classifying change-point versus no-change-point signals. This serves as the main motivation for our work, and a prelude to our next results.
	
	\subsection{Beyond the mean change model}
	    
	    We can generalise the simple change in mean model to allow for different types of change or for non-independent noise. In this section, we consider change-point models that can be expressed as a change in regression problem, where the model for data given a change at $\tau$ is of the form
	    \begin{equation} \label{eq:generalCP}
	        \boldsymbol{X} = \boldsymbol{Z}\boldsymbol{\beta}+\boldsymbol{c}_{\tau} \phi + \boldsymbol{\Gamma}\boldsymbol{\xi},
	    \end{equation}
	    where for some $p\geq 1$, $\boldsymbol{Z}$ is an $n\times p$ matrix of covariates for the model with no change, $\boldsymbol{c}_{\tau}$ is an $n\times 1$ vector of covariates specific to the change at $\tau$, and the parameters $\boldsymbol{\beta}$ and $\phi$ are, respectively, a $p\times 1$ vector and a scalar. The noise is defined in terms of an $n\times n$ matrix $\boldsymbol{\Gamma}$ and an $n\times 1$ vector of independent standard normal random variables, $\boldsymbol{\xi}$.
	    
	    For example, the change in mean problem has $p=1$, with $\boldsymbol{Z}$ a column vector of ones, and $\boldsymbol{c}_\tau$ being a vector whose first $\tau$ entries are zeros, and the remaining entries are ones. In this formulation $\beta$ is the pre-change mean, and $\phi$ is the size of the change. The change in slope problem~\cite[]{fearnheadDetectingChangesSlope2019} has $p=2$ with the columns of $\boldsymbol{Z}$ being a vector of ones, and a vector whose $i$th entry is $i$; and $\boldsymbol{c}_{\tau}$ has $i$th entry that is $\max\{0,i-\tau\}$. In this formulation $\boldsymbol{\beta}$ defines the pre-change linear mean, and $\phi$ the size of the change in slope.  Choosing $\boldsymbol{\Gamma}$ to be proportional to the identity matrix gives a model with independent, identically distributed noise; but other choices would allow for auto-correlation.
	    
	    The following result is a generalisation of~\Cref{lem:CUSUMinNNet}, which shows that the likelihood-ratio test for~\eqref{eq:generalCP}, viewed as a classifier, can be represented by our neural network.
	    \begin{lemma}\label{lem:generaltest}
	        Consider the change-point model~\eqref{eq:generalCP} with a possible change at $\tau \in [n-1]$. Assume further that $\boldsymbol{\Gamma}$ is invertible.  Then there is an $h^* \in\mathcal{H}_{1,2n-2}$  equivalent to the likelihood-ratio test for testing $\phi = 0$ against $\phi\neq 0$.
	    \end{lemma}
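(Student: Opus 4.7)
The plan is to reduce the generalised likelihood-ratio test (GLRT) under model~\eqref{eq:generalCP} to a test of the same shape as the CUSUM classifier treated in~\Cref{lem:CUSUMinNNet}---namely a thresholded maximum of absolute linear functionals of $\boldsymbol{x}$---and then to copy the single-hidden-layer construction used to prove that lemma, simply substituting a different set of $n-1$ vectors.

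First, I would whiten by pre-multiplying~\eqref{eq:generalCP} by $\boldsymbol{\Gamma}^{-1}$, which exists by hypothesis, to obtain $\tilde{\boldsymbol{X}}=\tilde{\boldsymbol{Z}}\boldsymbol{\beta}+\tilde{\boldsymbol{c}}_{\tau}\phi+\boldsymbol{\xi}$ with $\tilde{\boldsymbol{Z}}\coloneqq\boldsymbol{\Gamma}^{-1}\boldsymbol{Z}$, $\tilde{\boldsymbol{c}}_{\tau}\coloneqq\boldsymbol{\Gamma}^{-1}\boldsymbol{c}_{\tau}$ and $\boldsymbol{\xi}\sim N_{n}(0,I_{n})$. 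Because the change of variables is one-to-one, the likelihood ratio is unchanged. For a fixed candidate $\tau$, profiling out the nuisance parameter $\boldsymbol{\beta}$ turns the problem into a Gaussian linear-regression test with known variance, and a standard computation identifies the log-likelihood ratio as $(\tilde{\boldsymbol{c}}_{\tau}^{\top}P\tilde{\boldsymbol{X}})^{2}/(\tilde{\boldsymbol{c}}_{\tau}^{\top}P\tilde{\boldsymbol{c}}_{\tau})$, where $P\coloneqq I_{n}-\tilde{\boldsymbol{Z}}(\tilde{\boldsymbol{Z}}^{\top}\tilde{\boldsymbol{Z}})^{-1}\tilde{\boldsymbol{Z}}^{\top}$ is the orthogonal projection onto the complement of the column space of $\tilde{\boldsymbol{Z}}$. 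Maximising over $\tau\in[n-1]$ and translating back via $\tilde{\boldsymbol{X}}=\boldsymbol{\Gamma}^{-1}\boldsymbol{X}$ then writes the GLRT statistic as $\max_{\tau\in[n-1]}|\boldsymbol{u}_{\tau}^{\top}\boldsymbol{X}|$ for the deterministic vectors $\boldsymbol{u}_{\tau}\coloneqq\boldsymbol{\Gamma}^{-\top}P\tilde{\boldsymbol{c}}_{\tau}/\sqrt{\tilde{\boldsymbol{c}}_{\tau}^{\top}P\tilde{\boldsymbol{c}}_{\tau}}$, so the LRT as a classifier takes the form $\mathbbm{1}\{\max_{\tau}|\boldsymbol{u}_{\tau}^{\top}\boldsymbol{x}|>\lambda\}$ for some $\lambda>0$.

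At this point the classifier has exactly the shape of the CUSUM classifier, with the vectors $\boldsymbol{v}_{i}$ appearing in~\Cref{lem:CUSUMinNNet} replaced by $\boldsymbol{u}_{\tau}$. I would then reuse the construction behind that lemma verbatim: populate a single hidden layer of width $2(n-1)$ with ReLU units encoding $\sigma(\boldsymbol{u}_{\tau}^{\top}\boldsymbol{x}-\cdot)$ and $\sigma(-\boldsymbol{u}_{\tau}^{\top}\boldsymbol{x}-\cdot)$ for each $\tau\in[n-1]$, choosing the output weights and hidden biases so that the pre-threshold scalar exceeds $\lambda$ if and only if $\max_{\tau}|\boldsymbol{u}_{\tau}^{\top}\boldsymbol{x}|>\lambda$. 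This exhibits the required $h^{*}\in\mathcal{H}_{1,2n-2}$.

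The main obstacle is the whitened GLRT computation, and in particular handling degenerate cases where $\tilde{\boldsymbol{c}}_{\tau}\in\mathrm{col}(\tilde{\boldsymbol{Z}})$ for some $\tau$: when this occurs $\phi$ is non-identifiable at that $\tau$ and the denominator $\tilde{\boldsymbol{c}}_{\tau}^{\top}P\tilde{\boldsymbol{c}}_{\tau}$ vanishes, so that candidate must be dropped from the maximum; the resulting network uses fewer units than $2(n-1)$ but still fits inside the stated width budget. Once the thresholded maximum form is in hand, the neural-network representation is inherited from~\Cref{lem:CUSUMinNNet} and requires no new ideas.
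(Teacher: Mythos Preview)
Your proposal is correct and follows essentially the same route as the paper: whiten by $\boldsymbol{\Gamma}^{-1}$, profile out $\boldsymbol{\beta}$ to reduce the per-$\tau$ log-likelihood ratio to a squared linear functional of $\boldsymbol{X}$, maximise over $\tau$, and then invoke the single-hidden-layer construction from~\Cref{lem:CUSUMinNNet}. The only cosmetic difference is that you work directly with the projector $P=I_n-\tilde{\boldsymbol{Z}}(\tilde{\boldsymbol{Z}}^{\top}\tilde{\boldsymbol{Z}})^{-1}\tilde{\boldsymbol{Z}}^{\top}$ whereas the paper achieves the same thing by replacing $\tilde{\boldsymbol{c}}_\tau$ with its component orthogonal to $\mathrm{col}(\tilde{\boldsymbol{Z}})$ ``without loss of generality''; your explicit handling of the degenerate case $\tilde{\boldsymbol{c}}_\tau\in\mathrm{col}(\tilde{\boldsymbol{Z}})$ also matches the paper's observation that the LR statistic is zero there.
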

	    Importantly, this result shows that for this much wider class of change-point models, we can replicate the likelihood-ratio-based classifier for change using a simple neural network.
	    
	    Other types of changes can be handled by suitably pre-transforming the data. For instance, squaring the input data would be helpful in detecting changes in the variance and if the data followed an AR(1) structure, then changes in autocorrelation could be handled by including transformations of the original input of the form $(x_{t}x_{t+1})_{t=1,\ldots,n-1}$. On the other hand, even if such transformations are not supplied as the input, a neural network of suitable depth is able to approximate these transformations and consequently successfully detect the change~\citep[Lemma~A.2]{schmidt-hieberNonparametricRegressionUsing2020}. This is illustrated in~\Cref{fig:AR} of appendix, where we compare the performance of neural network based classifiers of various depths constructed with and without using the transformed data as inputs.

\section{Generalisation error of neural network change-point classifiers}\label{sec:Generalisation_Error}
	In~\Cref{sec:CUSUM test and its generalisations are neural networks}, we showed that CUSUM and generalised CUSUM could be represented by a neural network. Therefore, with a large enough amount of training data, a trained neural network classifier that included CUSUM, or generalised CUSUM, as a special case, would perform no worse than it on unseen data. In this section, we provide generalisation bounds for a neural network classifier for the change-in-mean problem, given a finite amount of training data. En route to this main result, stated in~\Cref{thm:VC_Generalisation_bound_new}, we provide generalisation bounds for the CUSUM-based classifier, in which the threshold has been chosen on a finite training data set.
	
	We write $P(n,\tau,\mu_{\mathrm{L}},\mu_{\mathrm{R}})$ for the distribution of the multivariate normal random vector $\boldsymbol{X} \sim N_n(\boldsymbol{\mu}, I_n)$ where $\boldsymbol{\mu} \coloneqq {(\mu_{\mathrm{L}}\mathbbm{1}\{i\leq \tau\} + \mu_{\mathrm{R}}\mathbbm{1}\{i> \tau\})}_{i\in [n]}$. Define \( \eta\coloneqq\tau/n \).~\Cref{lem:hypothesis_test} and~\Cref{cor:Generalisation} control the misclassification error of the CUSUM-based classifier.
	\begin{lemma}\label{lem:hypothesis_test}
	    Fix $\varepsilon \in (0,1)$. Suppose $\boldsymbol{X}\sim P(n,\tau,\mu_{\mathrm{L}},\mu_{\mathrm{R}})$ for some $\tau\in\mathbb{Z}^{+}$ and $\mu_{\mathrm{L}},\mu_{\mathrm{R}}\in\mathbb{R}$.
	    \begin{enumerate}
	        \item[(a)] If $\mu_{\mathrm{L}} = \mu_{\mathrm{R}}$, then
	        $
	            \mathbb{P}\bigl\{\|\mathcal{C}(\boldsymbol{X})\|_\infty > \sqrt{2\log(n/\varepsilon)}\bigr\}\leq\varepsilon.
	        $
	        \item[(b)] If $|\mu_{\mathrm{L}} - \mu_{\mathrm{R}}|\sqrt{\eta( 1-\eta)} > \sqrt{8\log(n/\varepsilon)/n}$, then
	        $
	            \mathbb{P}\bigl\{\|\mathcal{C}(\boldsymbol{X})\|_\infty \leq  \sqrt{2\log(n/\varepsilon)}\bigr\}\leq\varepsilon.
	        $
	    \end{enumerate}
	\end{lemma}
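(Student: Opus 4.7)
The plan is to reduce both parts to one-dimensional Gaussian tail bounds on the coordinates of $\mathcal{C}(\boldsymbol{X})$. The enabling preliminary computations are that each $\boldsymbol{v}_i$ is a unit vector and satisfies $\boldsymbol{v}_i^\top \boldsymbol{1}_n = 0$; both are one-line verifications from the definition of $\boldsymbol{v}_i$. Since $\boldsymbol{X}\sim N_n(\boldsymbol{\mu}, I_n)$, these facts imply that $\boldsymbol{v}_i^\top \boldsymbol{X}\sim N(\boldsymbol{v}_i^\top \boldsymbol{\mu}, 1)$ for every $i\in[n-1]$, so the whole lemma comes down to controlling the means $\boldsymbol{v}_i^\top \boldsymbol{\mu}$ under the two hypotheses.

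For part (a), under $\mu_{\mathrm{L}} = \mu_{\mathrm{R}}$ the mean vector $\boldsymbol{\mu}$ is a scalar multiple of $\boldsymbol{1}_n$, so $\boldsymbol{v}_i^\top \boldsymbol{\mu}=0$ and each coordinate of $\mathcal{C}(\boldsymbol{X})$ is standard normal. I would then apply a union bound over $i\in[n-1]$ together with the sharp Gaussian tail inequality $\mathbb{P}(|Z|>t)\leq e^{-t^2/2}$ to obtain
\[
\mathbb{P}\bigl\{\|\mathcal{C}(\boldsymbol{X})\|_\infty > \lambda\bigr\} \leq (n-1)\,e^{-\lambda^2/2},
\]
and substituting $\lambda = \sqrt{2\log(n/\varepsilon)}$ yields the desired bound of $(n-1)\varepsilon/n \leq \varepsilon$.

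For part (b), the essential single calculation is the mean of the coordinate at the true change-point: expanding directly,
\[
\boldsymbol{v}_\tau^\top \boldsymbol{\mu} = \mu_{\mathrm{L}}\sqrt{\tau(n-\tau)/n} - \mu_{\mathrm{R}}\sqrt{\tau(n-\tau)/n} = (\mu_{\mathrm{L}} - \mu_{\mathrm{R}})\sqrt{n\eta(1-\eta)}.
\]
The hypothesis $|\mu_{\mathrm{L}}-\mu_{\mathrm{R}}|\sqrt{\eta(1-\eta)}>\sqrt{8\log(n/\varepsilon)/n}$ therefore translates to $|\boldsymbol{v}_\tau^\top \boldsymbol{\mu}| > 2\sqrt{2\log(n/\varepsilon)} = 2\lambda$. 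Using $\|\mathcal{C}(\boldsymbol{X})\|_\infty \geq |\boldsymbol{v}_\tau^\top \boldsymbol{X}|$ and writing $\boldsymbol{v}_\tau^\top \boldsymbol{X} = \boldsymbol{v}_\tau^\top \boldsymbol{\mu} + Z$ with $Z\sim N(0,1)$, the event $\{|\boldsymbol{v}_\tau^\top \boldsymbol{X}| \leq \lambda\}$ forces $|Z| > \lambda$, which again has probability at most $e^{-\lambda^2/2} = \varepsilon/n \leq \varepsilon$.

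I do not anticipate any real obstacle: both parts are essentially applications of Gaussian concentration after a short algebraic computation. The only step requiring mild care is choosing the Gaussian tail bound sharply enough in part (a); one needs the two-sided refinement $\mathbb{P}(|Z|>t) \leq e^{-t^2/2}$ (which follows by a standard calculus argument from $\mathbb{P}(Z>t) \leq \tfrac{1}{2}e^{-t^2/2}$ for $t\geq 0$) rather than the more conservative $2e^{-t^2/2}$, in order to absorb the $n-1$ union-bound factor into the stated threshold $\sqrt{2\log(n/\varepsilon)}$ cleanly.
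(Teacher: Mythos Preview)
Your proposal is correct and follows essentially the same argument as the paper: a union bound over the $n-1$ standard-normal coordinates for part~(a), and for part~(b) the computation $|\boldsymbol{v}_\tau^\top \boldsymbol{\mu}| = |\mu_{\mathrm{L}}-\mu_{\mathrm{R}}|\sqrt{n\eta(1-\eta)} > 2\lambda$ combined with a triangle-inequality/Gaussian-tail step. The only cosmetic difference is that in~(b) the paper invokes part~(a) to control the full noise vector $\|\mathcal{C}(\boldsymbol{Z})\|_\infty$ before applying the triangle inequality, whereas you work directly with the single coordinate $\boldsymbol{v}_\tau^\top \boldsymbol{X}$; your route is marginally more direct and even yields the slightly stronger bound $\varepsilon/n$.
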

	For any $B>0$, define
	\[
	    \Theta(B)\coloneqq\left\{(\tau, \mu_{\mathrm{L}}, \mu_{\mathrm{R}}) \in [n-1] \times \mathbb{R} \times \mathbb{R}: |\mu_{\mathrm{L}} - \mu_{\mathrm{R}}|\sqrt{\tau(n-\tau)}/n \in \{0\} \cup \left(B, \infty\right)\right\}.
	\]
	Here, $|\mu_{\mathrm{L}}-\mu_{\mathrm{R}}|\sqrt{\tau(n-\tau)}/n=|\mu_{\mathrm{L}}-\mu_{\mathrm{R}}|\sqrt{\eta(1-\eta)}$ can be interpreted as the signal-to-noise ratio of the mean change problem. Thus, $\Theta(B)$ is the parameter space of data distributions where there is either no change, or a single change-point in mean whose signal-to-noise ratio is at least $B$.
	The following corollary controls the misclassification risk of a CUSUM statistics-based classifier:
	\begin{corollary}\label{cor:Generalisation}
	    Fix $B>0$. Let $\pi_0$ be any prior distribution on $\Theta(B)$, then draw $(\tau, \mu_{\mathrm{L}}, \mu_{\mathrm{R}}) \sim \pi_0$ and $\boldsymbol{X} \sim P(n, \tau, \mu_{\mathrm{L}}, \mu_{\mathrm{R}})$, and define $Y = \mathbbm{1}\{\mu_{\mathrm{L}} \neq \mu_{\mathrm{R}}\}$.  For $\lambda = B\sqrt{n}/2$, the classifier $h^{\mathrm{CUSUM}}_{\lambda}$ satisfies
	    \[
	        \mathbb{P}(h^{\mathrm{CUSUM}}_{\lambda}(\boldsymbol{X}) \neq Y) \leq ne^{-nB^2/8}.
	    \]
	\end{corollary}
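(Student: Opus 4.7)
The plan is to obtain the corollary as a direct consequence of \Cref{lem:hypothesis_test} by choosing $\varepsilon$ so that the CUSUM threshold $\lambda = B\sqrt{n}/2$ matches the threshold $\sqrt{2\log(n/\varepsilon)}$ appearing in both parts of the lemma. Solving $\sqrt{2\log(n/\varepsilon)} = B\sqrt{n}/2$ gives $\varepsilon = n e^{-nB^2/8}$, which is precisely the bound we are trying to prove. The rest of the argument is then a case analysis over the two possible values of $Y$.

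First I would condition on $(\tau,\mu_{\mathrm{L}},\mu_{\mathrm{R}})$ and write
\begin{equation*}
    \mathbb{P}\bigl(h^{\mathrm{CUSUM}}_\lambda(\boldsymbol{X}) \neq Y\bigr) = \int_{\Theta(B)} \mathbb{P}\bigl(h^{\mathrm{CUSUM}}_\lambda(\boldsymbol{X}) \neq Y \bigm| \tau,\mu_{\mathrm{L}},\mu_{\mathrm{R}}\bigr)\,\mathrm{d}\pi_0(\tau,\mu_{\mathrm{L}},\mu_{\mathrm{R}}).
\end{equation*}
It therefore suffices to show that the conditional misclassification probability is at most $\varepsilon = ne^{-nB^2/8}$ for every parameter triple in $\Theta(B)$. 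On the event $\{Y=0\}$ we have $\mu_{\mathrm{L}} = \mu_{\mathrm{R}}$, so a misclassification means $\|\mathcal{C}(\boldsymbol{X})\|_\infty > \lambda = \sqrt{2\log(n/\varepsilon)}$, and \Cref{lem:hypothesis_test}(a) bounds this probability by $\varepsilon$. On the event $\{Y=1\}$ we have $\mu_{\mathrm{L}} \neq \mu_{\mathrm{R}}$, and the definition of $\Theta(B)$ forces the signal-to-noise ratio $|\mu_{\mathrm{L}} - \mu_{\mathrm{R}}|\sqrt{\eta(1-\eta)}$ to exceed $B$; with our calibration, $B = \sqrt{8\log(n/\varepsilon)/n}$, so the hypothesis of \Cref{lem:hypothesis_test}(b) is satisfied, and the probability of the event $\{\|\mathcal{C}(\boldsymbol{X})\|_\infty \leq \lambda\}$ (the misclassification event when $Y=1$) is again at most $\varepsilon$.

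Combining the two cases and integrating over $\pi_0$ yields the stated bound $ne^{-nB^2/8}$. There is no real obstacle here: the entire argument is essentially a bookkeeping step once \Cref{lem:hypothesis_test} is available. The only place where one must be mildly careful is in verifying that the boundary case $|\mu_{\mathrm{L}} - \mu_{\mathrm{R}}|\sqrt{\eta(1-\eta)} = B$ is excluded from $\Theta(B)$, which is exactly why $\Theta(B)$ is defined using the open interval $(B,\infty)$ rather than $[B,\infty)$; this ensures the strict inequality required by \Cref{lem:hypothesis_test}(b) holds throughout the alternative.
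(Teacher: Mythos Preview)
Your proposal is correct and follows essentially the same route as the paper: set $\varepsilon = n e^{-nB^2/8}$ in \Cref{lem:hypothesis_test}, apply parts (a) and (b) to bound the conditional misclassification probability on each of the events $\{Y=0\}$ and $\{Y=1\}$, and then integrate over $\pi_0$. Your write-up is in fact more detailed than the paper's, which compresses the entire argument into two lines.
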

	\Cref{thm:VC_Generalisation_bound_old} below, which is based on~\Cref{cor:Generalisation},~\citet[Theorem~7]{bartlettNearlytightVCdimensionPseudodimension2019} and~\citet[Corollary~3.4]{mohriFoundationsMachineLearning2012}, shows that the empirical risk minimiser in the neural network class $\mathcal{H}_{1,2n-2}$ has good generalisation properties over the class of change-point problems parameterised by $\Theta(B)$. Given training data $(\boldsymbol{X}^{(1)},Y^{(1)}),\ldots,(\boldsymbol{X}^{(N)},Y^{(N)})$ and any \( h:\mathbb{R}^n\to\{0,1\} \), we define the empirical risk of $h$ as
	\begin{equation*}
	    L_{N}(h)\coloneqq\frac{1}{N}\sum_{i=1}^N \mathbbm{1}\{Y^{(i)}\neq h(\boldsymbol{X}^{(i)})\}.
	\end{equation*}
	\begin{theorem}\label{thm:VC_Generalisation_bound_old}
	    Fix $B>0$ and let $\pi_0$ be any prior distribution on $\Theta(B)$. We draw $(\tau, \mu_{\mathrm{L}}, \mu_{\mathrm{R}}) \sim \pi_0$, $\boldsymbol{X} \sim P(n, \tau, \mu_{\mathrm{L}}, \mu_{\mathrm{R}})$, and set $Y = \mathbbm{1}\{\mu_{\mathrm{L}} \neq \mu_{\mathrm{R}}\}$. Suppose that the training data $\mathcal{D}:= \bigl((\boldsymbol{X}^{(1)},Y^{(1)}),\ldots,(\boldsymbol{X}^{(N)},Y^{(N)})\bigr)$ consist of independent copies of $(\boldsymbol{X},Y)$ and \sloppy  $h_{\mathrm{ERM}} \coloneqq\argmin_{h\in \mathcal{H}_{1,2n-2}}L_{N}(h)$ is the empirical risk minimiser.
	    There exists a universal constant $C>0$ such that for any $\delta \in(0,1)$,~\eqref{eq:bound_old} holds with probability \( 1- \delta \).
	    \begin{equation}\label{eq:bound_old}
	        \mathbb{P}(h_{\mathrm{ERM}}(\boldsymbol{X})\neq Y\mid \mathcal{D}) \leq ne^{-nB^2/8}+ C\sqrt{\frac{n^2\log(n)\log(N)+\log(1/\delta)}{N}}.
	    \end{equation}
	\end{theorem}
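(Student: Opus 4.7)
The plan is to decompose the generalisation error via the standard oracle inequality for empirical risk minimisation, combined with a uniform deviation bound driven by the VC dimension of $\mathcal{H}_{1,2n-2}$. The key observation is that \Cref{lem:CUSUMinNNet} places $h^{\mathrm{CUSUM}}_{\lambda} \in \mathcal{H}_{1,2n-2}$ for any threshold $\lambda>0$, in particular for $\lambda = B\sqrt{n}/2$, while \Cref{cor:Generalisation} bounds its population risk under the data-generating distribution by $ne^{-nB^2/8}$. So the CUSUM classifier acts as a good benchmark element already contained in the hypothesis class, giving us control over the approximation error that the ERM has to beat.

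First, I would write the standard ERM comparison inequality, namely
\begin{equation*}
\mathbb{P}(h_{\mathrm{ERM}}(\boldsymbol{X}) \neq Y \mid \mathcal{D}) \leq \mathbb{P}(h^{\mathrm{CUSUM}}_{\lambda}(\boldsymbol{X}) \neq Y) + 2\sup_{h \in \mathcal{H}_{1,2n-2}} \bigl|L_N(h) - \mathbb{P}(h(\boldsymbol{X}) \neq Y)\bigr|,
\end{equation*}
which follows since $L_N(h_{\mathrm{ERM}}) \leq L_N(h^{\mathrm{CUSUM}}_{\lambda})$ by definition of the ERM. The first term on the right is bounded by $ne^{-nB^2/8}$ via \Cref{cor:Generalisation}, producing the first summand in \eqref{eq:bound_old}.

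Next, I would bound the uniform deviation term. The class $\mathcal{H}_{1,2n-2}$ of ReLU networks with one hidden layer of width $2n-2$ has $O(n^2)$ trainable parameters (weight matrices of sizes $(2n-2)\times n$ and $1\times (2n-2)$ together with the associated bias vectors and final threshold). Applying Theorem~7 of~\citet{bartlettNearlytightVCdimensionPseudodimension2019} yields $\mathrm{VCdim}(\mathcal{H}_{1,2n-2}) \leq C_1 n^2 \log n$ for a universal constant $C_1$. Feeding this into the classical VC uniform convergence bound, Corollary~3.4 of~\citet{mohriFoundationsMachineLearning2012}, gives that with probability at least $1-\delta$ over $\mathcal{D}$,
\begin{equation*}
\sup_{h \in \mathcal{H}_{1,2n-2}} \bigl|L_N(h) - \mathbb{P}(h(\boldsymbol{X}) \neq Y)\bigr| \leq C_2 \sqrt{\frac{n^2\log(n)\log(N) + \log(1/\delta)}{N}}.
\end{equation*}
Substituting this and the CUSUM risk bound into the ERM inequality, and absorbing the factor of $2$ together with $C_1, C_2$ into a single universal constant $C$, produces precisely \eqref{eq:bound_old}.

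The proof is essentially an assembly of previously stated ingredients, so the main bookkeeping concern is matching the form of the VC tail bound in~\citet{mohriFoundationsMachineLearning2012}, which typically carries a $\log(N/\mathrm{VCdim})$ inside the square root, to the slightly looser $n^2\log(n)\log(N)$ expression in \eqref{eq:bound_old}; this is harmless since $\log(N/\mathrm{VCdim}) \leq \log N$. I would also briefly verify that the measurability/suprema can be taken in the standard sense for this finite-VC class so that the uniform bound applies to the data-dependent $h_{\mathrm{ERM}}$, which is automatic here because $\mathcal{H}_{1,2n-2}$ has finite VC dimension.
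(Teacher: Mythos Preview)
Your proposal is correct and follows essentially the same approach as the paper's proof: both invoke \Cref{lem:CUSUMinNNet} and \Cref{cor:Generalisation} to control the approximation term via the CUSUM classifier, and both combine \citet[Theorem~7]{bartlettNearlytightVCdimensionPseudodimension2019} for the VC dimension bound $O(n^2\log n)$ with \citet[Corollary~3.4]{mohriFoundationsMachineLearning2012} for the generalisation term. The only cosmetic difference is that the paper quotes the oracle inequality from Mohri et al.\ directly, whereas you spell out the standard ERM decomposition via the uniform deviation supremum before applying the same reference; these are equivalent.
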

	The theoretical results derived for the neural network-based classifier, here and below, all rely on the fact that the training and test data are drawn from the same distribution. 
	However, we observe that in practice, even when the training and test sets have different error distributions, neural network-based classifiers still provide accurate results on the test set; see our discussion of~\Cref{fig:simulation_result} in~\Cref{sec:Simulation_Study} for more details.
	The misclassification error in~\eqref{eq:bound_old} is bounded by two terms.
	The first term represents the misclassification error of CUSUM-based classifier, see~\Cref{cor:Generalisation}, and the second term depends on the complexity of the neural network class measured in its VC dimension.~\Cref{thm:VC_Generalisation_bound_old} suggests that for training sample size $N\gg n^2\log n$, a well-trained single-hidden-layer neural network with $2n-2$ hidden nodes would have comparable performance to that of the CUSUM-based classifier.
	However, as we will see in~\Cref{sec:Simulation_Study}, in practice, a much smaller training sample size $N$ is needed for the neural network to be competitive in the change-point detection task. This is because the $2n-2$ hidden layer nodes in the neural network representation of $h^{\mathrm{CUSUM}}_\lambda$ encode the components of the CUSUM transformation $(\pm \boldsymbol{v}_t^\top \boldsymbol{x}: t\in[n-1])$, which are highly correlated.

	By suitably pruning the hidden layer nodes, we can show that a single-hidden-layer neural network with $O(\log n)$ hidden nodes is able to represent a modified version of the CUSUM-based classifier with  essentially the same misclassification error. More precisely, let $Q:=\lfloor \log_2(n/2)\rfloor$ and write $T_0 := \{2^q: 0\leq q\leq Q\} \cup \{n - 2^q: 0\leq q\leq Q\}$.
	We can then define
	\[
	    h^{\mathrm{CUSUM}_*}_{\lambda^{*}}(\boldsymbol{X}) = \mathbbm{1}\Bigl\{\max_{t\in T_0} |\boldsymbol{v}_t^\top \boldsymbol{X}| > \lambda^{*}\Bigr\}.
	\]
	By the same argument as in~\Cref{lem:CUSUMinNNet}, we can show that \( h^{\mathrm{CUSUM}_*}_{\lambda^{*}} \in \mathcal{H}_{1,4\lfloor\log_2(n)\rfloor} \) for any $\lambda^* > 0$. The following Theorem shows that high classification accuracy can be achieved under a weaker training sample size condition compared to~\Cref{thm:VC_Generalisation_bound_old}.

	\begin{theorem}\label{thm:VC_Generalisation_bound_new}
	    Fix $B>0$ and let the training data $\mathcal{D}$
	    be generated as in~\Cref{thm:VC_Generalisation_bound_old}. Let  $h_{\mathrm{ERM}} \coloneqq\argmin_{h\in \mathcal{H}_{L,\boldsymbol{m}}}L_{N}(h)$ be the empirical risk minimiser for a neural network with $L\geq 1$ layers and $\boldsymbol{m} = (m_1,\ldots,m_L)^\top$ hidden layer widths. If $m_1\geq 4\lfloor\log_2(n)\rfloor$ and $m_r m_{r+1} = O(n\log n)$ for all $r\in[L-1]$, then there exists a universal constant $C>0$ such that for any $\delta \in(0,1)$,~\eqref{eq:bound_new} holds with probability \( 1- \delta \).
	    \begin{equation}\label{eq:bound_new}
	        \mathbb{P}(h_{\mathrm{ERM}}(\boldsymbol{X})\neq Y\mid\mathcal{D}) \leq 2\lfloor\log_2(n)\rfloor e^{-nB^2/24}+ C\sqrt{\frac{L^2 n\log^2(Ln)\log(N)+\log(1/\delta)}{N}}.
	    \end{equation}
	\end{theorem}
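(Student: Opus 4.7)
The plan is to mirror the proof of~\Cref{thm:VC_Generalisation_bound_old}, but using $h^{\mathrm{CUSUM}_*}_{\lambda^*}$ as the in-class comparator instead of $h^{\mathrm{CUSUM}}_{\lambda}$. The pruned classifier has only $O(\log n)$ active hidden units, which is what makes the weaker sample-size requirement possible. Concretely, I would first embed $h^{\mathrm{CUSUM}_*}_{\lambda^*}$ into $\mathcal{H}_{L,\boldsymbol m}$: the construction of~\Cref{lem:CUSUMinNNet} restricted to the index set $T_0$ realises it as a single-hidden-layer network with $4\lfloor\log_2 n\rfloor$ active units, and the hypothesis $m_1 \ge 4\lfloor\log_2 n\rfloor$ makes room in the first hidden layer (the remaining nodes are padded with zero weights). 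For $L \ge 2$, the subsequent layers propagate the non-negative aggregate $z := \sum_{t\in T_0}\bigl[\sigma(\boldsymbol v_t^\top \boldsymbol x - \lambda^*) + \sigma(-\boldsymbol v_t^\top \boldsymbol x - \lambda^*)\bigr]$ through a single ``carrier'' node at each level via identity-style weights (ReLU acts as identity on non-negative inputs), with the remaining nodes zeroed out; the widths $m_r m_{r+1} = O(n\log n)$ are more than adequate.

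The heart of the argument is then a dyadic-approximation variant of~\Cref{cor:Generalisation}: for a suitable $\lambda^* \asymp B\sqrt n$, I would show $\mathbb{P}(h^{\mathrm{CUSUM}_*}_{\lambda^*}(\boldsymbol X) \ne Y) \le 2\lfloor\log_2 n\rfloor\, e^{-nB^2/24}$. Under the null, a union bound over $|T_0| \le 2\lfloor\log_2 n\rfloor$ combined with $\|\boldsymbol v_t\|_2 = 1$ yields $\mathbb{P}\bigl(\max_{t \in T_0}|\boldsymbol v_t^\top \boldsymbol X| > \lambda^*\bigr) \le 2|T_0|\, e^{-(\lambda^*)^2/2}$. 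Under the alternative, I would exploit the dyadic structure of $T_0$: for any $\tau \in [n-1]$ with $\tau \le n/2$ there is $t \in T_0$ with $\tau/2 < t \le \tau$, and a symmetric choice in $\{n - 2^q\}$ covers $\tau > n/2$. From the closed form $\boldsymbol v_s^\top \boldsymbol\mu = (\mu_{\mathrm L}-\mu_{\mathrm R})(n-\tau)\sqrt{s/(n(n-s))}$ valid for $s \le \tau$, the substitution $\tau \mapsto t$ loses only a bounded constant factor, giving $|\boldsymbol v_t^\top \boldsymbol \mu| \ge c\,|\boldsymbol v_\tau^\top \boldsymbol \mu| \ge c B\sqrt n$ for an explicit $c$. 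A one-sided Gaussian tail bound on $|\boldsymbol v_t^\top \boldsymbol X|$ at this $t$ then controls the type-II error, and optimising $\lambda^*$ to balance the two tail bounds delivers the exponent $nB^2/24$.

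The generalisation step is essentially bookkeeping. Under the stated width conditions the total number of weights is $W = n m_1 + \sum_{r=1}^{L-1} m_r m_{r+1} + m_L = O(Ln\log n)$, so by~\citet[Theorem~7]{bartlettNearlytightVCdimensionPseudodimension2019} the VC dimension of $\mathcal{H}_{L,\boldsymbol m}$ is at most a universal constant times $L^2 n\log^2(Ln)$. Plugging this into the VC-based uniform convergence bound of~\citet[Corollary~3.4]{mohriFoundationsMachineLearning2012} yields, with probability at least $1 - \delta$, $|L_N(h) - \mathbb{P}(h(\boldsymbol X)\ne Y)| \le C\sqrt{(L^2 n\log^2(Ln)\log N + \log(1/\delta))/N}$ uniformly over $\mathcal{H}_{L,\boldsymbol m}$. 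Applying this to $h_{\mathrm{ERM}}$ and to the embedded copy of $h^{\mathrm{CUSUM}_*}_{\lambda^*}$, and using $L_N(h_{\mathrm{ERM}}) \le L_N(h^{\mathrm{CUSUM}_*}_{\lambda^*})$, delivers~\eqref{eq:bound_new}. The main obstacle is the approximation step: verifying that the dyadic net $T_0$ is fine enough to preserve the signal up to a constant for every $\tau$ (including endpoints near $1$ and $n-1$), and tracking constants carefully enough to land on the advertised $1/24$ exponent; the VC/estimation piece is a fairly direct instantiation of the cited results.
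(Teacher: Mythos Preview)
Your proposal is correct and follows essentially the same route as the paper: embed $h^{\mathrm{CUSUM}_*}_{\lambda^*}$ in $\mathcal{H}_{L,\boldsymbol m}$, establish the $2\lfloor\log_2 n\rfloor e^{-nB^2/24}$ misclassification bound via a dyadic approximation that yields the constant $c=\sqrt{3}/3$ (the paper packages this as a separate lemma showing $\min_{|t_0-\tau|\le\min(\tau,n-\tau)/2}|\boldsymbol v_{t_0}^\top\boldsymbol\mu|\ge(\sqrt{3}/3)|\boldsymbol v_\tau^\top\boldsymbol\mu|$ and then takes $\lambda^*=B\sqrt{3n}/6$), and finish with the same Bartlett--Mohri VC step using $W^*=O(Ln\log n)$. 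Your additional detail on carrier nodes and zero-padding for the multi-layer embedding is more explicit than the paper, which simply asserts the inclusion $\mathcal{H}_{1,4\lfloor\log_2 n\rfloor}\subseteq\mathcal{H}_{L,\boldsymbol m}$.
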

	\Cref{thm:VC_Generalisation_bound_new} generalises the single hidden layer neural network representation in~\Cref{thm:VC_Generalisation_bound_old} to multiple hidden layers.
	In practice, multiple hidden layers help keep the misclassification error rate low even when $N$ is small, see the numerical study in~\Cref{sec:Simulation_Study}.
	\Cref{thm:VC_Generalisation_bound_old,thm:VC_Generalisation_bound_new} are examples of how to derive generalisation errors of a neural network-based classifier in the change-point detection task. The same workflow can be employed in other types of changes, provided that suitable representation results of likelihood-based tests in terms of neural networks (e.g.~\Cref{lem:generaltest}) can be obtained. In a general result of this type, the generalisation error of the neural network will again be bounded by a sum of the error of the likelihood-based classifier together with a term originating from the VC-dimension bound of the complexity of the neural network architecture.
	
	We further remark that for simplicity of discussion, we have focused our attention on data models where the noise vector $\boldsymbol{\xi} = \boldsymbol{X}-\mathbb{E}\boldsymbol{X}$ has independent and identically distributed normal components. However, since CUSUM-based tests are available for temporally correlated or sub-Weibull data, with suitably adjusted test threshold values, the above theoretical results readily generalise to such settings. See~\Cref{Thm:Temporal,Thm:SubWeibull} in appendix for more details.

\section{Numerical study}\label{sec:Simulation_Study}
	We now investigate empirically our approach of learning a change-point detection method by training a neural network. Motivated by the results from the previous section we will fit a neural network with a single layer and consider how varying the number of hidden layers and the amount of training data affects performance. We will compare to a test based on the CUSUM statistic, both for scenarios where the noise is independent and Gaussian, and for scenarios where there is auto-correlation or heavy-tailed noise. The CUSUM test can be sensitive to the choice of threshold, particularly when we do not have independent Gaussian noise, so we tune its threshold based on training data.
	
	When training the neural network, we first standardise the data onto \( [0,1] \), i.e.\ \( \tilde{\boldsymbol{x}}_{i}=((x_{ij}-x_{i}^{\mathrm{min}})/(x_{i}^{\mathrm{max}}-x_{i}^{\mathrm{min}}))_{j\in[n]} \) where \( x_{i}^{\mathrm{max}}:=\max_{j}x_{ij}, x_{i}^{\mathrm{min}}:=\min_j x_{ij} \).
	This  makes the neural network procedure invariant to either adding a constant to the data or scaling the data by a constant, which are natural properties to require. We train the neural network by minimising the cross-entropy loss on the training data. We run training for 200 epochs with a batch size of 32 and a learning rate of 0.001 using the Adam optimiser~\citep{kingma2014adam}. These hyperparameters are chosen based on a training dataset with cross-validation, more details can be found in~\Cref{sec:simulation_and_result}.

	We generate our data as follows. Given a sequence of length $n$, we draw $\tau\sim \mathrm{Unif}\{2,\ldots,n-2\}$, set $\mu_{\mathrm{L}}=0$ and draw $\mu_{\mathrm{R}}| \tau \sim \mathrm{Unif}([-1.5b, -0.5b]\cup[0.5b, 1.5b])$, where $b:=\sqrt{\frac{8n\log(20n)}{\tau(n-\tau)}}$ is chosen in line with~\Cref{lem:hypothesis_test} to ensure a good range of signal-to-noise ratios. We then generate $\boldsymbol{x}_1 = (\mu_{\mathrm{L}}\mathbbm{1}_{\{t\leq \tau\}} + \mu_{\mathrm{R}}\mathbbm{1}_{\{t> \tau\}} + \varepsilon_t)_{t\in[n]}$, with the noise $(\varepsilon_t)_{t\in[n]}$ following an $\mathrm{AR}(1)$ model with possibly time-varying autocorrelation $\varepsilon_t|\rho_t =\xi_1$ for $t=1$ and $\rho_t \varepsilon_{t-1} + \xi_t$ for $t\geq 2$, where $(\xi_t)_{t\in[n]}$ are independent, possibly heavy-tailed noise. The autocorrelations $\rho_t$ and innovations $\xi_t$ are from one of the three scenarios:
	{\small\begin{enumerate}
	    \item[S1:] \(n=100\), \( N\in\{ 100,200,\ldots,700\} \), \( \rho_{t}=0 \) and \( \xi_{t}\sim N(0,1) \).
	    \item[S1$^{\prime}$:] \(n=100\), \( N\in\{ 100,200,\ldots,700\} \), \( \rho_{t}= 0.7 \) and \( \xi_{t}\sim N(0,1) \).
	    \item[S2:] \(n=100\), \( N\in\{ 100,200,\ldots,1000\} \), \( \rho_{t}\sim \mathrm{Unif}([0,1]) \) and \( \xi_{t}\sim N(0,2) \).
	    \item[S3:] \(n=100\), \( N\in\{ 100,200,\ldots,1000\} \), \( \rho_{t}=0 \) and \( \xi_{t}\sim \text{Cauchy}(0,0.3) \).
	\end{enumerate}}
	The above procedure is then repeated $N/2$ times to generate independent sequences \sloppy $\boldsymbol{x}_1,\ldots, \boldsymbol{x}_{N/2}$ with a single change, and the associated labels are $(y_1,\ldots,y_{N/2})^\top = \mathbf{1}_{N/2}$. We then repeat the process another $N/2$ times with $\mu_{\mathrm{R}}=\mu_{\mathrm{L}}$ to generate  sequences without changes $\boldsymbol{x}_{N/2+1},\ldots, \boldsymbol{x}_{N}$ with $(y_{N/2+1},\ldots,y_{N})^\top = \mathbf{0}_{N/2}$. The data with and without change $(\boldsymbol{x}_i, y_i)_{i\in[N]}$ are combined and randomly shuffled to form the training data. The test data are generated in a similar way, with a sample size $N_{\mathrm{test}} = 30000$ and the slight modification that $\mu_{\mathrm{R}}|\tau \sim \mathrm{Unif}([-1.75b, -0.25b]\cup[0.25b, 1.75b])$ when a change occurs. We note that the test data is drawn from the same distribution as the training set, though potentially having changes with signal-to-noise ratios outside the range covered by the training set. We have also conducted robustness studies to investigate the effect of training the neural networks on scenario S1 and test on S1$^{\prime}$, S2 or S3. Qualitatively similar results to~\Cref{fig:simulation_result} have been obtained in this misspecified setting (see~\Cref{fig:simulation_resultA2Others} in appendix).
	\begin{figure}[ht]
	    \begin{minipage}{1\linewidth}
	        \makebox[.5\linewidth]{\includegraphics[width=.45\linewidth, height=0.27\textwidth]{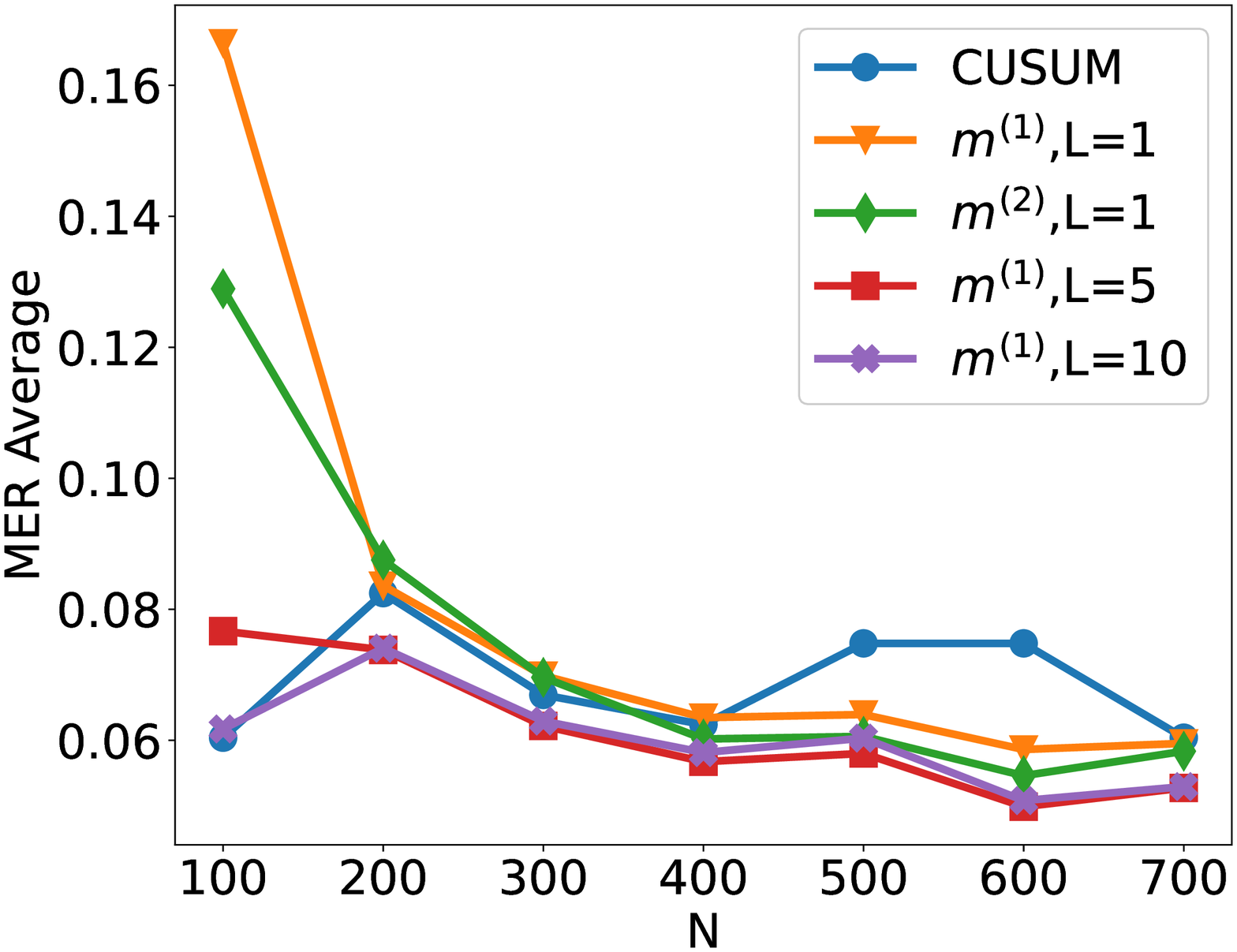}}%
	        \makebox[.5\linewidth]{\includegraphics[width=.45\linewidth, height=0.27\textwidth]{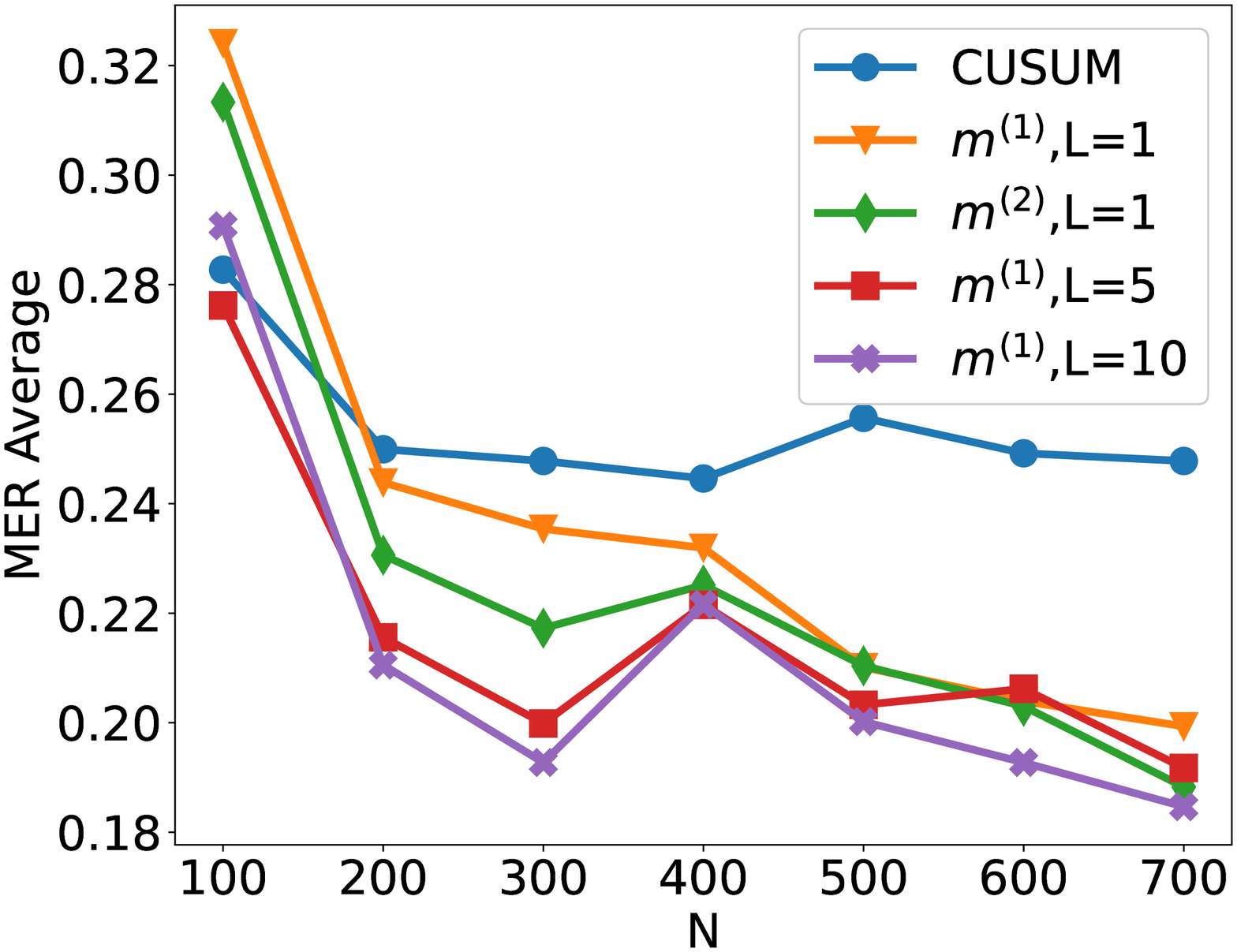}}
	        \makebox[.5\linewidth]{\small (a) Scenario S1 with \( \rho_{t}= 0 \)}%
	        \makebox[.5\linewidth]{\small (b) Scenario S1$^{\prime}$ with \( \rho_{t}= 0.7 \)}%
	        
	        \medskip
	        
	        \makebox[.5\linewidth]{\includegraphics[width=.45\linewidth, height=0.27\textwidth]{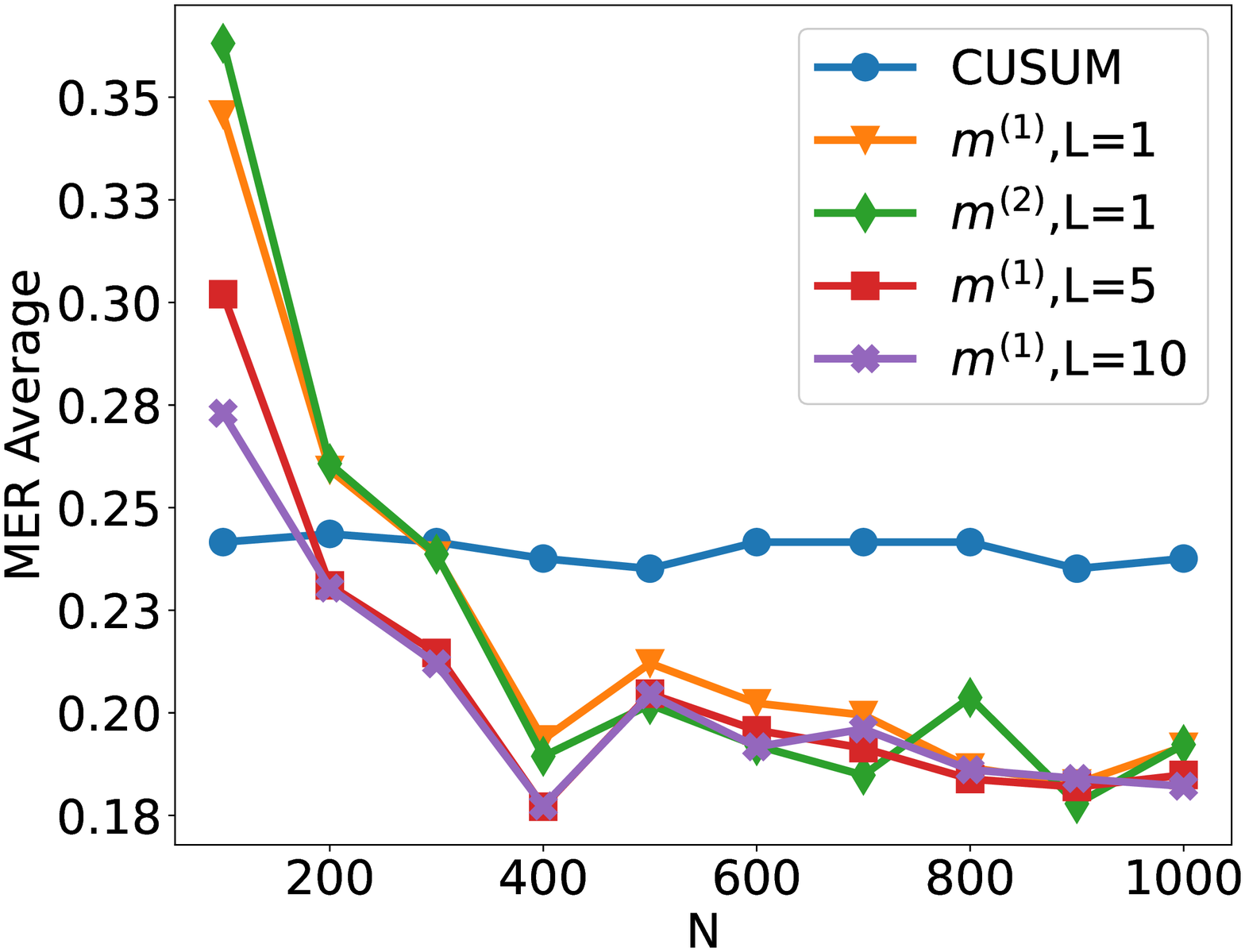}}
	        \makebox[.5\linewidth]{\includegraphics[width=.45\linewidth, height=0.27\textwidth]{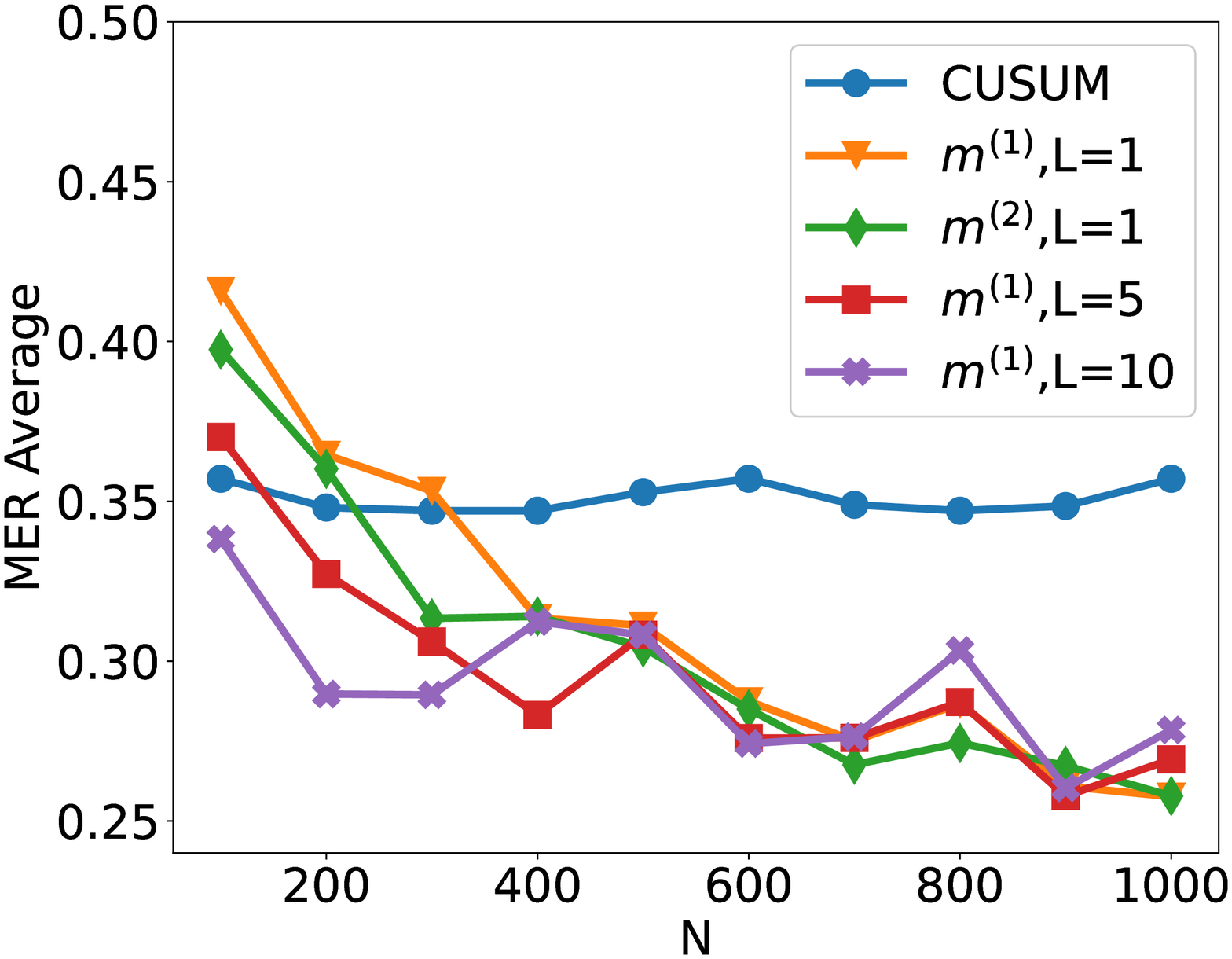}}
	        \makebox[.5\linewidth]{\small (c) Scenario S2 with \( \rho_{t}\sim \text{Unif}([0,1]) \)}%
	        \makebox[.5\linewidth]{\small (d) Scenario S3 with Cauchy noise}%
	    \end{minipage}
	    \caption{Plot of the test set MER, computed on a test set of size $N_{\mathrm{test}}=30000$,  against training sample size $N$ for detecting the existence of a change-point on data series of length $n=100$. We compare the performance of the CUSUM test and neural networks from four function classes: $\mathcal{H}_{1,m^{(1)}}$,$\mathcal{H}_{1,m^{(2)}}$, $\mathcal{H}_{5,m^{(1)}\mathbf{1}_{5}}$ and $\mathcal{H}_{10,m^{(1)}\mathbf{1}_{10}}$ where $m^{(1)} = 4\lfloor\log_2(n)\rfloor$ and $m^{(2)} = 2n-2$ respectively under scenarios S1, S1$^{\prime}$, S2 and S3 described in~\Cref{sec:Simulation_Study}. }\label{fig:simulation_result}
	\end{figure}
	We compare the performance of the CUSUM-based classifier with the threshold cross-validated on the training data with neural networks from four function classes: $\mathcal{H}_{1,m^{(1)}}$,$\mathcal{H}_{1,m^{(2)}}$, $\mathcal{H}_{5,m^{(1)}\mathbf{1}_{5}}$ and $\mathcal{H}_{10,m^{(1)}\mathbf{1}_{10}}$ where $m^{(1)} = 4\lfloor\log_2(n)\rfloor$ and $m^{(2)} = 2n-2$ respectively (cf.~\Cref{thm:VC_Generalisation_bound_new} and~\Cref{lem:CUSUMinNNet}).~\Cref{fig:simulation_result} shows the test misclassification error rate (MER) of the four procedures in the four scenarios S1, S1$^{\prime}$, S2 and S3. We observe that when data are generated with independent Gaussian noise (~\Cref{fig:simulation_result}(a)), the trained neural networks with $m^{(1)}$ and $m^{(2)}$ single hidden layer nodes attain very similar test MER compared to the CUSUM-based classifier. This is in line with our~\Cref{thm:VC_Generalisation_bound_new}. More interestingly, when noise has either autocorrelation (~\Cref{fig:simulation_result}(b, c)) or heavy-tailed distribution (~\Cref{fig:simulation_result}(d)), trained neural networks with $(L,\mathbf{m})$: $(1,m^{(1)})$, $(1,m^{(2)})$, $(5,m^{(1)}\mathbf{1}_{5})$ and $(10,m^{(1)}\mathbf{1}_{10})$ outperform the CUSUM-based classifier, even after we have optimised the threshold choice of the latter. In addition, as shown in~\Cref{fig:S3WilcoxonTruncation.eps} in the online supplement, when the first two layers of the network are set to carry out truncation, which can be seen as a composition of two ReLU operations, the resulting neural network outperforms the Wilcoxon statistics-based classifier~\citep{dehling2013changepoint}, which is a standard benchmark for change-point detection in the presence of heavy-tailed noise. Furthermore, from~\Cref{fig:simulation_result}, we see that increasing $L$ can significantly reduce the average MER when $N\leq 200$. Theoretically, as the number of layers $L$ increases, the neural network is better able to approximate the optimal decision boundary, but it becomes increasingly difficult to train the weights due to issues such as vanishing gradients~\citep{heDeepResidualLearning2016}. A combination of these considerations leads us to develop deep neural network architecture with residual connections for detecting multiple changes and multiple change types in~\Cref{sec:Detecting multiple changes and multiple change-types}.

\section{Detecting multiple changes and multiple change types -- case study}\label{sec:Detecting multiple changes and multiple change-types}
	
	From the previous section, we see that single and multiple hidden layer neural networks can represent CUSUM or generalised CUSUM tests and may perform better than likelihood-based test statistics when the model is misspecified. This prompted us to seek a general network architecture that can detect, and even classify, multiple types of change.
	Motivated by the similarities between signal processing and image recognition, we employed a deep convolutional neural network (CNN)~\citep{yamashita2018convolutional} to learn the various features of multiple change-types. However, stacking more CNN layers cannot guarantee a better network because of vanishing gradients in training~\citep{heDeepResidualLearning2016}. Therefore, we adopted the residual block structure~\citep{heDeepResidualLearning2016} for our neural network architecture. After experimenting with various architectures with different numbers of residual blocks and fully connected layers on synthetic data, we arrived at a network architecture with 21 residual blocks followed by a number of fully connected layers.~\Cref{fig:Arch-ResNet.eps} shows an overview of the architecture of the final general-purpose deep neural network for change-point detection. The precise architecture and training methodology of this network $\widehat{NN}$ can be found in~\Cref{sec:More_Details_of_Numerical_Study_and_Real_Data_Analysis}. Neural Architecture Search (NAS) approaches~\citep[see][Section 2.4.3]{paab2023foundation} offer principled ways of selecting neural architectures. Some of these approaches could be made applicable in our setting.
	
	We demonstrate the power of our general purpose change-point detection network in a numerical study. We train the network on $N=10000$ instances of data sequences generated from a mixture of no change-point in mean or variance, change in mean only, change in variance only, no-change in a non-zero slope and change in slope only, and compare its classification performance on a test set of size $2500$ against that of oracle likelihood-based classifiers (where we pre-specify whether we are testing for change in mean, variance or slope) and adaptive likelihood-based classifiers (where we combine likelihood based tests using the Bayesian Information Criterion). Details of the data-generating mechanism and classifiers can be found in~\Cref{sec:simulation_and_result}. The classification accuracy of the three approaches in weak and strong signal-to-noise ratio settings are reported in~\Cref{tab:The accuracy of LR and NN}. We see that the neural network-based approach achieves similar classification accuracy as adaptive likelihood based method for weak SNR and higher classification accuracy than the adaptive likelihood based method for strong SNR.\ We would not expect the neural network to outperform the oracle likelihood-based classifiers as it has no knowledge of the exact change-type of each time series.\

	\begin{table}
	    \caption{\label{tab:The accuracy of LR and NN} Test classification accuracy of oracle likelihood-ratio based method (LR$^{\mathrm{oracle}}$), adaptive likelihood ratio method (LR$^{\mathrm{adapt}}$) and our residual neural network (NN) classifier for setups with weak and strong signal-to-noise ratios (SNR). Data are generated as a mixture of no change-point in mean or variance (Class 1), change in mean only (Class 2), change in variance only (Class 3), no-change in a non-zero slope (Class 4), change in slope only (Class 5). We report the true positive rate of each class and the accuracy in the last row.}
	    \centering
	    \fbox{%
	        \begin{tabular}{*{7}c}
	            \multirow{2}{*}{}&  \multicolumn{3}{c}{Weak SNR} &  \multicolumn{3}{c}{Strong SNR}\\
	            \cmidrule(lr){2-4}\cmidrule(lr){5-7}
	            & LR$^{\mathrm{oracle}}$ & LR$^{\mathrm{adapt}}$  & NN & LR$^{\mathrm{oracle}}$ & LR$^{\mathrm{adapt}}$ & NN \\ \hline
	            Class 1 &	0.9787 & 0.9457 & 0.8062 &   0.9787 & 0.9341 & 0.9651\\
	            Class 2	&   0.8443 &  0.8164 & 0.8882 &  1.0000 & 0.7784 &  0.9860\\
	            Class 3 & 	0.8350 &  0.8291 & 0.8585 &  0.9902 & 0.9902 &  0.9705\\
	            Class 4 & 	0.9960 &  0.9453 & 0.8826 &  0.9980 & 0.9372 &  0.9312\\
	            Class 5 & 	0.8729 &  0.8604 & 0.8353 &  0.9958 & 0.9917 &  0.9147\\
	            Accuracy & 	0.9056 &  0.8796 & 0.8660 &  0.9924 & 0.9260 &  0.9672
	        \end{tabular}}
	\end{table}

	We now consider an application to detecting different types of change.
	\href{http://hasc.jp/hc2011/index-en.html}{The HASC (Human Activity Sensing Consortium) project} data contain motion sensor measurements during a sequence of human activities, including ``stay'', ``walk'', ``jog'', ``skip'', ``stair up'' and ``stair down''. Complex changes in sensor signals occur during transition from one activity to the next (see~\Cref{fig: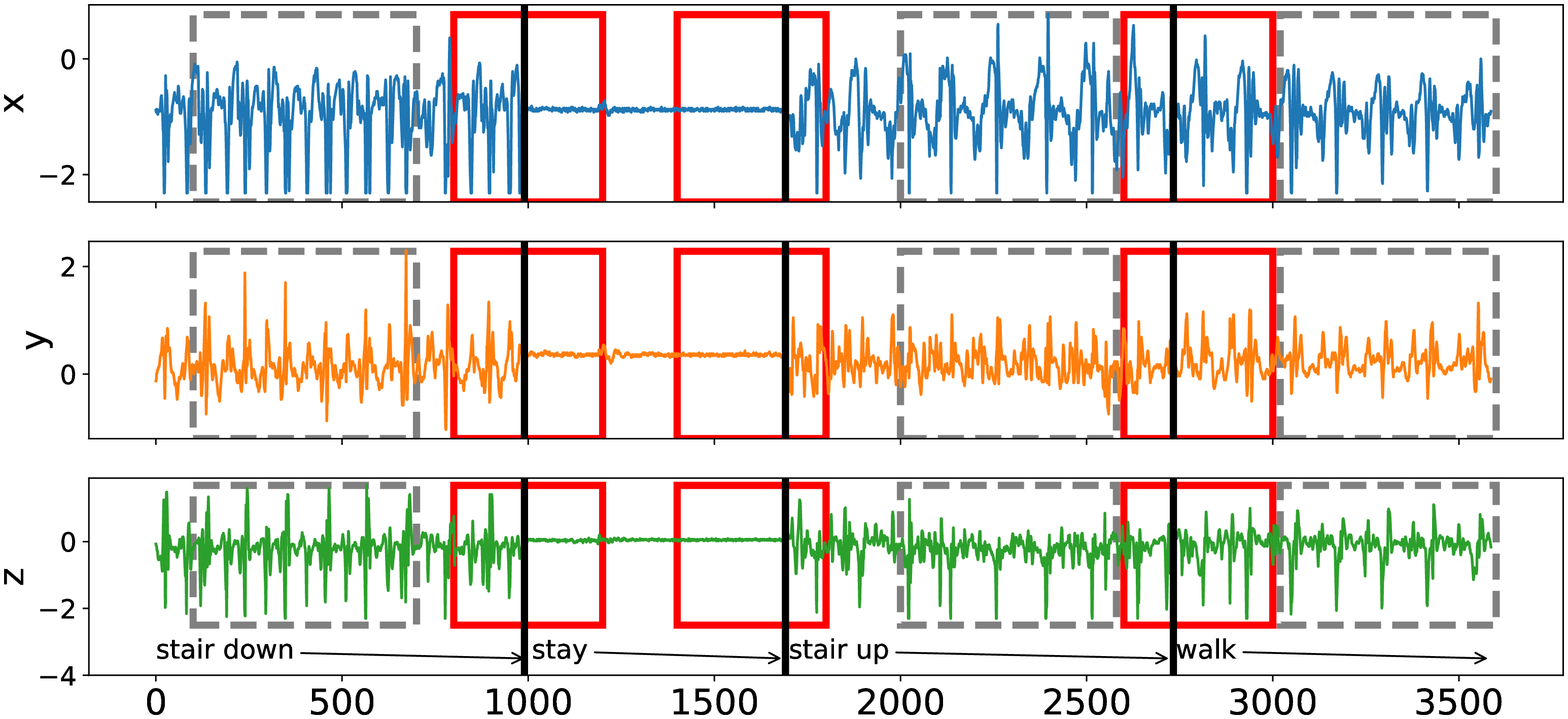}). We have 28 labels in HASC data, see~\Cref{fig: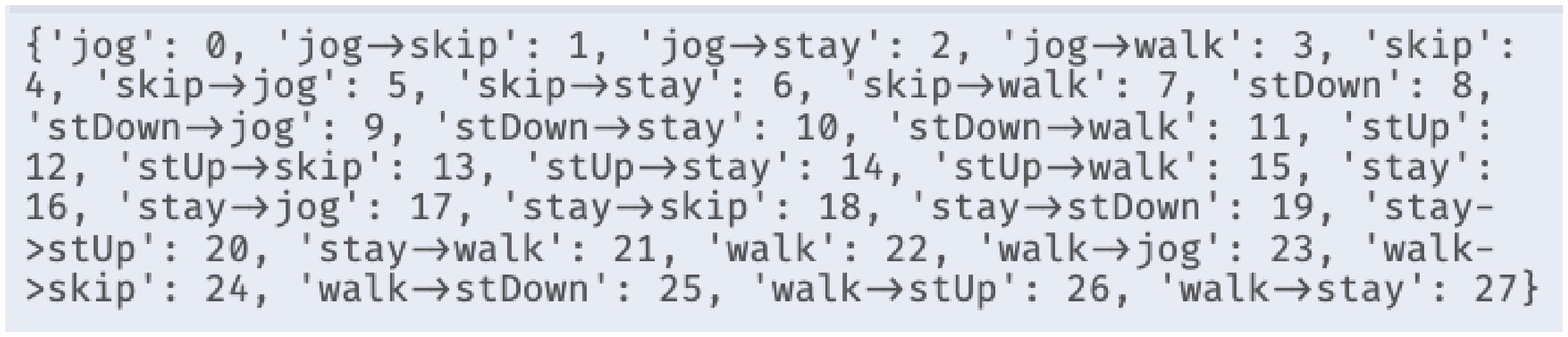} in appendix. To agree with the dimension of the output, we drop two dense layers ``Dense(10)'' and ``Dense(20)'' in~\Cref{fig:Arch-ResNet.eps}.
	The resulting network can be effectively applied for change-point detection in sensory signals of human activities, and can achieve high accuracy in change-point classification tasks (\Cref{fig: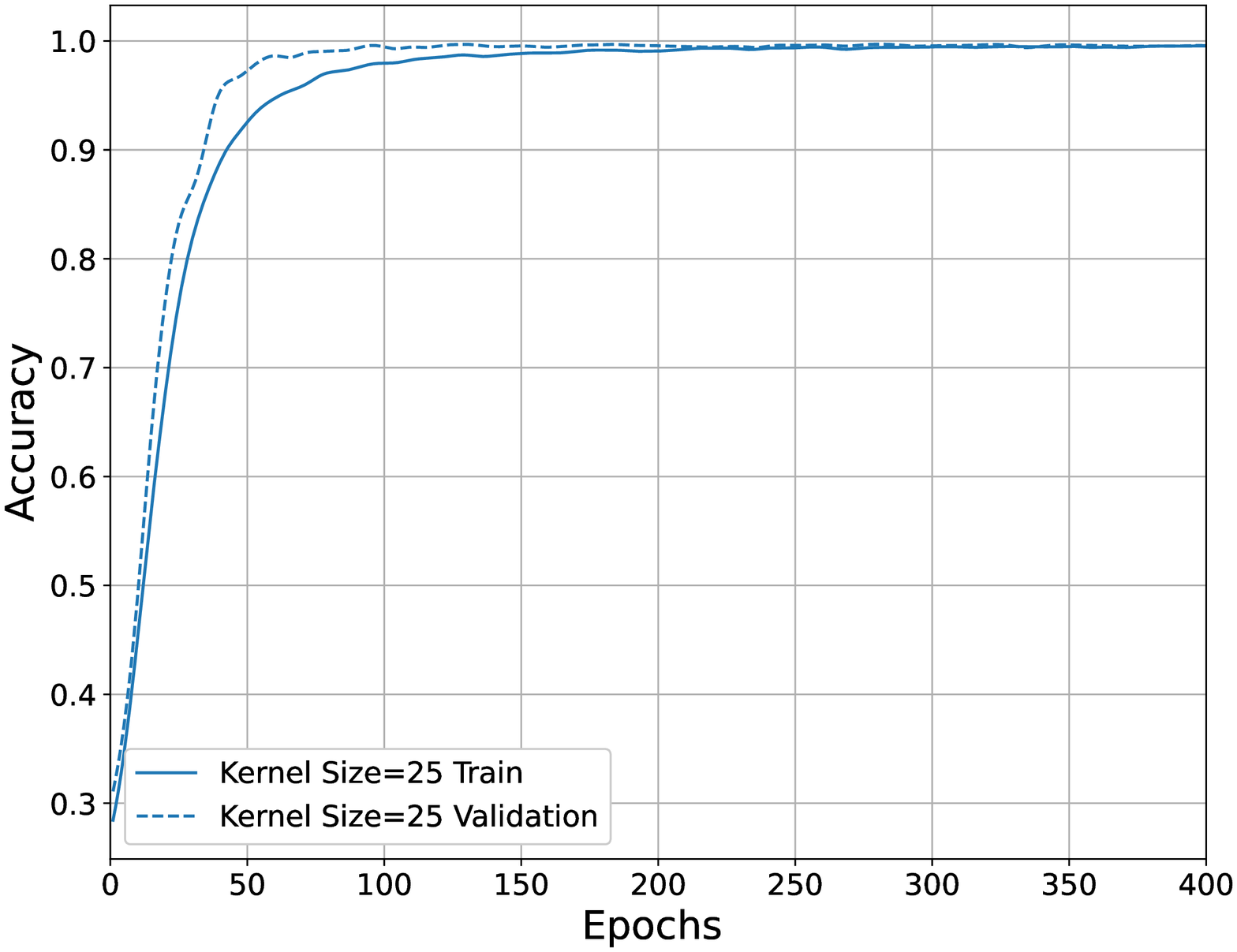} in appendix).
	
	Finally, we remark that our neural network-based change-point detector can be utilised to detect multiple change-points.~\Cref{alg:change_point_detection} outlines a general scheme for turning a change-point classifier into a location estimator, where we employ an idea similar to that of MOSUM~\citep{eichingerMOSUMProcedureEstimation2018} and repeatedly apply a classifier $\psi$ to data from a sliding window of size $n$. Here, we require $\psi$ applied to each data segment $\boldsymbol{X}^*_{[i,i+n)}$ to output both the class label $L_i = 0$ or $1$ if no change or a change is predicted and the corresponding probability $p_i$ of having a change. In our particular example, for each data segment $\boldsymbol{X}^*_{[i,i+n)}$ of length $n=700$, we define $\psi(\boldsymbol{X}^*_{[i,i+n)}) = 0$  if $\widehat{NN}(\boldsymbol{X}^*_{[i,i+n)})$ predicts a class label in $\{0,4,8,12,16,22\}$ (see~\Cref{fig:figures/label_dict.eps} in appendix) and 1 otherwise. The thresholding parameter $\gamma \in \mathbb{Z}^{+}$ is chosen to be $1/2$.
	\begin{algorithm}[htbp]
	    \SetAlgoLined
	    \caption{Algorithm for change-point localisation}\label{alg:change_point_detection}
	    \KwIn{new data $\boldsymbol{x}_1^*,\ldots,\boldsymbol{x}_{n^*}^*\in\mathbb{R}^d$,  a trained classifier $\psi: \mathbb{R}^{d\times n}\to \{0,1\}$,  $\gamma > 0$.}
	    Form $\boldsymbol{X}_{[i,i+n)}^* := (\boldsymbol{x}_i^*,\ldots,\boldsymbol{x}_{i+n-1})$ and compute $L_{i} \gets \psi(\boldsymbol{X}^{*}_{[i,i+n)})$ for all $i = 1,\ldots,n^*-n+1$\;
	    Compute $\bar L_i \gets n^{-1}\sum_{j=i-n+1}^i L_j$ for $i=n,\ldots,n^*-n+1$\;
	    Let $\{[s_1,e_1],\ldots,[s_{\hat \nu},e_{\hat \nu}]\}$ be the set of all maximal segments such that $\bar L_i \geq \gamma$ for all $i\in [s_r,e_r]$, $r\in[\hat \nu]$ \;
	    Compute $\hat \tau_r \gets \argmax_{i\in [s_r,e_r]} \bar L_i$ for all $r\in[\hat\nu]$\;
	    \KwOut{Estimated change-points $\hat\tau_1,\ldots,\hat\tau_{\hat\nu}$}
	\end{algorithm}
	\Cref{fig: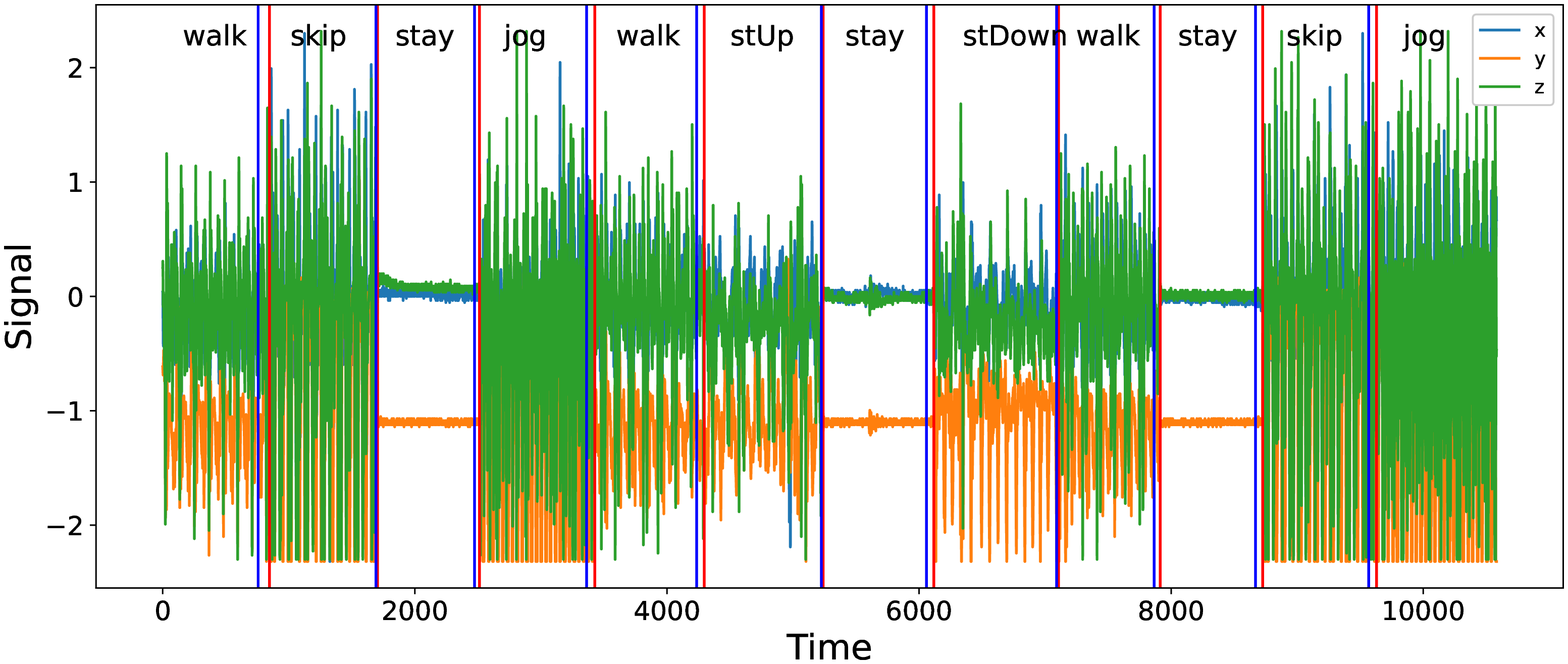} illustrates the result of multiple change-point detection in HASC data which provides evidence that the trained neural network can detect both the multiple change-types and multiple change-points.
	\begin{figure}[ht]
	    \centering
	    \makebox{\includegraphics[width=0.78\textwidth,height=0.3\textwidth]{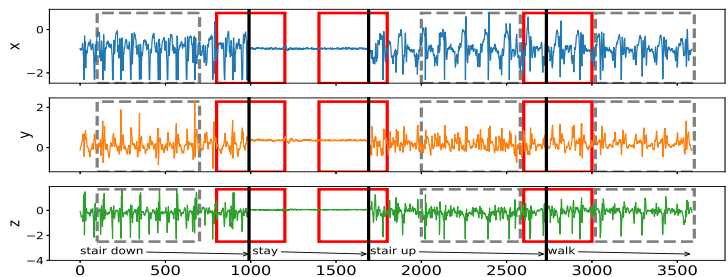}}
	    \caption{The sequence of accelerometer data in \( x, y \) and \( z \) axes. From left to right, there are 4 activities: ``stair down'', ``stay'', ``stair up'' and ``walk'', their change-points are 990, 1691, 2733 respectively marked by black solid lines. The grey rectangles represent the group of ``no-change'' with labels: ``stair down'',  ``stair up'' and ``walk''; The red rectangles represent the group of ``one-change'' with labels: ``stair down$\to$stay'', ``stay$\to$stair up'' and ``stair up$\to$walk''.}\label{fig:figures/HASC2011.eps}
	\end{figure}
	\begin{figure}[ht]
	    \centering
	    \makebox{\includegraphics[width=0.78\textwidth,height=0.3\textwidth]{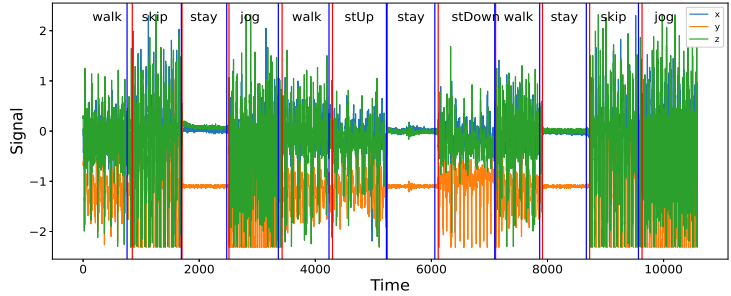}}
	    \caption{Change-point detection in HASC data. The red vertical lines represent the underlying change-points, the blue vertical lines represent the estimated change-points.  More details on multiple change-point detection  can be found in~\Cref{sec:More_Details_of_Numerical_Study_and_Real_Data_Analysis}.}\label{fig:figures/RealDataCPD.eps}
	\end{figure}

\section{Discussion}\label{sec:Discussion}
	
	Reliable testing for change-points and estimating their locations, especially in the presence of multiple change-points, other heterogeneities or untidy data, is typically a difficult problem for the applied statistician: they need to understand what type of change is sought, be able to characterise it mathematically, find a satisfactory stochastic model for the data, formulate the appropriate statistic, and fine-tune its parameters. This makes for a long workflow, with scope for errors at its every stage.
	
	In this paper, we showed how a carefully constructed statistical learning framework could automatically take over some of those tasks, and perform many of them `in one go' when provided with examples of labelled data. This turned the change-point detection problem into a supervised learning problem, and meant that the task of learning the appropriate test statistic and fine-tuning its parameters was left to the `machine' rather than the human user.
	
	The crucial question was that of choosing an appropriate statistical learning framework. The key factor behind our choice of neural networks was the discovery that the traditionally-used likelihood-ratio-based change-point detection statistics could be viewed as simple neural networks, which (together with bounds on generalisation errors beyond the training set) enabled us to formulate and prove the corresponding learning theory. However, there are a plethora of other excellent predictive frameworks, such as XGBoost, LightGBM or Random Forests~\citep{chenXGBoost2016,ke2017lightgbm,breiman2001random} and it would be of interest to establish whether and why they could or could not provide a viable alternative to neural nets here. Furthermore, if we view the neural network as emulating the likelihood-ratio test statistic, in that it will create test statistics for each possible location of a change and then amalgamate these into a single classifier, then we know that test statistics for nearby changes will often be similar. This suggests that imposing some smoothness on the weights of the neural network may be beneficial.
	
	A further challenge is to develop methods that can adapt easily to input data of different sizes, without having to train a different neural network for each input size. For changes in the structure of the mean of the data, it may be possible to use ideas from functional data analysis so that we pre-process the data, with some form of smoothing or imputation, to produce input data of the correct length.
	
	If historical labelled examples of change-points, perhaps provided by subject-matter experts (who are not necessarily statisticians) are not available, one question of interest is whether simulation can be used to obtain such labelled examples artificially, based on (say) a single dataset of interest. Such simulated examples would need to come in two flavours: one batch `likely containing no change-points' and the other containing some artificially induced ones. How to simulate reliably in this way is an important problem, which this paper does not solve. Indeed, we can envisage situations in which simulating in this way may be easier than solving the original unsupervised change-point problem involving the single dataset at hand, with the bulk of the difficulty left to the `machine' at the learning stage when provided with the simulated data.
	
	For situations where there is no historical data, but there are statistical models, one can obtain training data by simulation from the model. In this case, training a neural network to detect a change has similarities with likelihood-free inference methods in that it replaces analytic calculations associated with a model by the ability to simulate from the model. It is of interest  whether ideas from that area of statistics can be used here.
	
	The main focus of our work was on testing for a single offline change-point, and we treated location estimation and extensions to multiple-change scenarios only superficially, via the heuristics of testing-based estimation in~\Cref{sec:Detecting multiple changes and multiple change-types}.  Similar extensions can be made to the online setting once the neural network is trained, by retaining the final $n$ observations in an online stream in memory and applying our change-point classifier sequentially. One question of interest is whether and how these heuristics can be made more rigorous: equipped with an offline classifier only, how can we translate the theoretical guarantee of this offline classifier to that of the corresponding location estimator or online detection procedure?
	In addition to this approach, how else can a neural network, however complex, be trained to estimate locations or detect change-points sequentially? In our view, these questions merit further work. 
	

\section*{Availability of data and computer code} The data underlying this article are available in~\href{http://hasc.jp/hc2011/index-en.html}{http://hasc.jp/hc2011/index-en.html}. The computer code and algorithm are available in \href{https://pypi.org/project/autocpd/}{Python Package: AutoCPD}.
\section*{Acknowledgement}
	This work was supported by the High End Computing Cluster at Lancaster University, and EPSRC grants
	EP/V053590/1, EP/V053639/1 and EP/T02772X/1. We highly appreciate Yudong Chen's contribution to debug our Python scripts and improve their readability.
\section*{Conflicts of Interest}\label{sec:Conflict_of_Interest}
	We have no conflicts of interest to disclose.
	\small\bibliographystyle{chicago}
	\bibliography{reference}
	\clearpage
	\newpage
	\appendix
	This is the appendix for the main paper~\citet*{CPDNN2022}, hereafter referred to as the main text. We present  proofs of our main lemmas and theorems. Various technical details, results of numerical study and real data analysis are also listed here.
\section{Proofs}
	\subsection{The proof of~\Cref{lem:CUSUMinNNet}}
	    Define $W_0 \coloneqq (\boldsymbol{v}_1,\ldots,\boldsymbol{v}_{n-1},-\boldsymbol{v}_1,\ldots,-\boldsymbol{v}_{n-1})^\top$ and $W_1 \coloneqq \boldsymbol{1}_{2n-2}$, $\boldsymbol{b}_1\coloneqq\lambda\boldsymbol{1}_{2n-2}$ and $b_2\coloneqq0$. Then $h(\boldsymbol{x}) \coloneqq \sigma^*_{b_2} W_1 \sigma_{\boldsymbol{b}_1}W_0 \boldsymbol{x} \in\mathcal{H}_{1,2n-2}$ can be rewritten as
	    \[
	        h(\boldsymbol{x}) = \mathbbm{1}\biggl\{\sum_{i=1}^{n-1}\bigl\{ (\boldsymbol{v}_i^\top \boldsymbol{x} - \lambda)_+ + (-\boldsymbol{v}_i^\top \boldsymbol{x} - \lambda)_+\bigr\} > b_2\biggr\} = \mathbbm{1}\{\|\mathcal{C}(\boldsymbol{x})\|_\infty > \lambda\} = h_{\lambda}^{\mathrm{CUSUM}}(\boldsymbol{x}),
	    \]
	    as desired.
	
	\subsection{The Proof of~\Cref{lem:generaltest}}\label{subsec:proofofgeneral LRtest}%
	    As $\boldsymbol{\Gamma}$ is invertible, (\ref{eq:generalCP}) in main text is equivalent to
	    \[
	        \boldsymbol{\Gamma}^{-1}\boldsymbol{X} = \boldsymbol{\Gamma}^{-1}\boldsymbol{Z}\boldsymbol{\beta}+\boldsymbol{\Gamma}^{-1}\boldsymbol{c}_{\tau} \phi + \boldsymbol{\xi}.
	    \]
	    Write $\tilde{\boldsymbol{X}}=\boldsymbol{\Gamma}^{-1}\boldsymbol{X}$, $\tilde{\boldsymbol{Z}}=\boldsymbol{\Gamma}^{-1}\boldsymbol{Z}$ and $\tilde{\boldsymbol{c}}_{\tau}=\boldsymbol{\Gamma}^{-1}\boldsymbol{c}_{\tau}$. If $\tilde{\boldsymbol{c}}_{\tau}$  lies in the column span of $\tilde{\boldsymbol{Z}}$, then the model with a change at $\tau$ is equivalent to the model with no change, and the likelihood-ratio test statistic will be 0. 
	    Otherwise we can assume, without loss of generality that $\tilde{\boldsymbol{c}}_{\tau}$ is orthogonal to each column of $\tilde{\boldsymbol{Z}}$: if this is not the case we can construct an equivalent model where we replace $\tilde{\boldsymbol{c}}_{\tau}$ with its projection to the space that is orthogonal to the column span of $\tilde{\boldsymbol{Z}}$.
	    
	    As $\boldsymbol{\xi}$ is a vector of independent standard normal random variables, the likelihood-ratio statistic for a change at $\tau$ against no change is a monotone function of the reduction in the residual sum of squares of the model with a change at $\tau$. The residual sum of squares of the no change model is
	    \[
	        \tilde{\boldsymbol{X}}^\top\tilde{\boldsymbol{X}} - \tilde{\boldsymbol{X}}^\top \tilde{\boldsymbol{Z}} (\tilde{\boldsymbol{Z}}^\top\tilde{\boldsymbol{Z}})^{-1} \tilde{\boldsymbol{Z}}^\top \tilde{\boldsymbol{X}}.
	    \]
	    The residual sum of squares for the model with a change at $\tau$ is
	    \[
	        \tilde{\boldsymbol{X}}^\top\tilde{\boldsymbol{X}} - \tilde{\boldsymbol{X}}^\top [\tilde{\boldsymbol{Z}}, \tilde{\boldsymbol{c}}_{\tau}]  ([\tilde{\boldsymbol{Z}},\tilde{\boldsymbol{c}}_{\tau}]^\top [\tilde{\boldsymbol{Z}},\tilde{\boldsymbol{c}}_{\tau}])^{-1} [\tilde{\boldsymbol{Z}},\tilde{\boldsymbol{c}}_{\tau}]^\top \tilde{\boldsymbol{X}} =
	        \tilde{\boldsymbol{X}}^\top\tilde{\boldsymbol{X}} - \tilde{\boldsymbol{X}}^\top \tilde{\boldsymbol{Z}} (\tilde{\boldsymbol{Z}}^\top\tilde{\boldsymbol{Z}})^{-1} \tilde{\boldsymbol{Z}}^\top \tilde{\boldsymbol{X}} - \tilde{\boldsymbol{X}}^\top \tilde{\boldsymbol{c}}_{\tau} (\tilde{\boldsymbol{c}}_{\tau}^\top\tilde{\boldsymbol{c}}_{\tau})^{-1} \tilde{\boldsymbol{c}}_{\tau}^\top \tilde{\boldsymbol{X}}.
	    \]
	    Thus, the reduction in residual sum of square of the model with the change at $\tau$ over the no change model is
	    \[
	        \tilde{\boldsymbol{X}}^\top \tilde{\boldsymbol{c}}_{\tau} (\tilde{\boldsymbol{c}}_{\tau}^\top\tilde{\boldsymbol{c}}_{\tau})^{-1} \tilde{\boldsymbol{c}}_{\tau}^\top \tilde{\boldsymbol{X}} =
	        \left( \frac{1}{\sqrt{\tilde{\boldsymbol{c}}_{\tau}^\top\tilde{\boldsymbol{c}}_{\tau}}}  \tilde{\boldsymbol{c}}_{\tau}^\top \tilde{\boldsymbol{X}} \right)^2
	    \]
	    Thus if we define
	    \[
	        \boldsymbol{v}_\tau=\frac{1}{\sqrt{\tilde{\boldsymbol{c}}_{\tau}^\top\tilde{\boldsymbol{c}}_{\tau}}}\tilde{\boldsymbol{c}}_{\tau}^{\top} \boldsymbol{\Gamma}^{-1,}
	    \]
	    then the likelihood-ratio test statistic is a monotone function of $|\boldsymbol{v}_\tau \boldsymbol{X}|$. This is true for all $\tau$ so the likelihood-ratio test is equivalent to
	    \[
	        \max_{\tau\in[n-1]} |\boldsymbol{v}_\tau \boldsymbol{X}| > \lambda,
	    \]
	    for some $\lambda$. This is of a similar form to the standard CUSUM test, except that the form of $\boldsymbol{v}_\tau$ is different. Thus, by the same argument as for~\Cref{lem:CUSUMinNNet} in main text, we can replicate this test with $h(\boldsymbol{x})\in \mathcal{H}_{1,2n-2}$, but with different weights to represent the different form for $\boldsymbol{v}_\tau$.
	\subsection{The Proof of~\Cref{lem:hypothesis_test}}\label{subsec:The_Proof_of_Lemma_lem:hypothesis_test}
	    \begin{proof}
	        (a) For each $i\in[n-1]$, since \( {\|\boldsymbol{v}_{i}\|_2}=1 \), we have \( \boldsymbol{v}_i^\top \boldsymbol{X}\sim N(0,1) \).
	        Hence, by the Gaussian tail bound and a union bound,
	        \[
	            \mathbb{P}\Bigl\{\|\mathcal{C}(\boldsymbol{X})\|_\infty >t \Bigr\}\leq \sum_{i=1}^{n-1}\mathbb{P}\left(\left|\boldsymbol{v}_i^\top \boldsymbol{X}\right| > t\right)\leq n\exp(-t^{2}/2).
	        \]
	        The result follows by taking $t=\sqrt{2\log(n/\varepsilon)}$.
	        
	        (b) We write $\boldsymbol{X} = \boldsymbol{\mu} + \boldsymbol{Z}$, where $\boldsymbol{Z} \sim N_n(0, I_n)$. Since the CUSUM transformation is linear, we have $\mathcal{C}(\boldsymbol{X}) = \mathcal{C}(\boldsymbol{\mu}) + \mathcal{C}(\boldsymbol{Z})$. By part (a) there is an event $\Omega$ with probability at least $1-\varepsilon$ on which $\|\mathcal{C}(\boldsymbol{Z})\|_\infty \leq \sqrt{2\log(n/\varepsilon)}$. Moreover, we have $\|\mathcal{C}(\boldsymbol{\mu})\|_\infty = |\boldsymbol{v}_{\tau}^\top \boldsymbol{\mu}| = |\mu_{\mathrm{L}} - \mu_{\mathrm{R}}| \sqrt{n\eta(1-\eta)} $. Hence on $\Omega$, we have by the triangle inequality that
	        \[
	            \|\mathcal{C}(\boldsymbol{X})\|_\infty \geq \|\mathcal{C}(\boldsymbol{\mu})\|_\infty  - \|\mathcal{C}(\boldsymbol{Z})\|_\infty  \geq |\mu_{\mathrm{L}} - \mu_{\mathrm{R}}| \sqrt{n\eta(1-\eta)} - \sqrt{2\log(n/\varepsilon)} > \sqrt{2\log(n/\varepsilon)},
	        \]
	        as desired.
	    \end{proof}

	\subsection{The Proof of~\Cref{cor:Generalisation}}\label{subsec:The Proof of Corollary_cor:Generalisation}
	    \begin{proof}
	        From~\Cref{lem:hypothesis_test} in main text with $\varepsilon=ne^{-nB^2/8}$, we have
	        \[
	            \mathbb{P}(h_\lambda^{\mathrm{CUSUM}}(\boldsymbol{X}) \neq Y\mid \tau,\mu_{\mathrm{L}}, \mu_{\mathrm{R}}) \leq ne^{-nB^2/8},
	        \]
	        and the desired result follows by integrating over $\pi_0$.
	    \end{proof}
	\subsection{Auxiliary Lemma}
	    \begin{lemma}\label{lem:geometric_grid_bound}
	        Define \( T^{\prime}\coloneqq\{ t_{0}\in \mathbb{Z}^{+} :{\left\lvert t_{0}-\tau \right\rvert}\leq\min(\tau,n-\tau)/2 \} \), for any \( t_{0}\in T^{\prime} \), we have
	        \begin{equation*}
	            \min_{t_{0}\in T^{\prime}}|\boldsymbol{v}_{t_{0}}^{\top}\boldsymbol{\mu}|\geq\frac{\sqrt{3}}{3}|\mu_{\mathrm{L}} - \mu_{\mathrm{R}}|\sqrt{n\eta(1-\eta)}.
	        \end{equation*}
	    \end{lemma}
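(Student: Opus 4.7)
The plan is to obtain a closed form for $|\boldsymbol{v}_{t_0}^\top\boldsymbol{\mu}|$ in each of the two branches $t_0\le\tau$ and $t_0>\tau$, and then reduce the minimisation over $T'$ to checking only the two endpoints. First, I would exploit the piecewise-constant structure of $\boldsymbol{\mu}$ (which equals $\mu_{\mathrm{L}}$ on $[1,\tau]$ and $\mu_{\mathrm{R}}$ on $(\tau,n]$) together with the explicit formula for $\boldsymbol{v}_{t_0}$ to derive, after collecting terms,
\[
|\boldsymbol{v}_{t_0}^\top\boldsymbol{\mu}|^2 = (\mu_{\mathrm{L}}-\mu_{\mathrm{R}})^2\cdot\frac{t_0(n-\tau)^2}{n(n-t_0)}\quad\text{if }t_0\le\tau,
\]
and, by a symmetric calculation, $(\mu_{\mathrm{L}}-\mu_{\mathrm{R}})^2\tau^2(n-t_0)/(nt_0)$ if $t_0>\tau$. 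Specialising either expression to $t_0=\tau$ recovers $(\mu_{\mathrm{L}}-\mu_{\mathrm{R}})^2 n\eta(1-\eta)$, which serves as a sanity check.

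Next, since $s\mapsto s/(n-s)$ is strictly increasing on $[0,n)$, the first expression is monotone increasing in $t_0$ on $\{t_0\le\tau\}$ and the second is monotone decreasing in $t_0$ on $\{t_0>\tau\}$. Consequently the minimum over $t_0\in T'$ is attained at one of the two integer endpoints of $T'$, and by monotonicity it in fact suffices to verify the bound at the possibly non-integer extremes $\tau\pm m/2$, where $m:=\min(\tau,n-\tau)$: any rounding to an admissible integer only enlarges the value.

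Squaring the target inequality, for the left endpoint the claim reduces algebraically to $t_0(3n-2\tau)\ge\tau n$, and for the right endpoint to $t_0(n+2\tau)\le 3\tau n$. I would then split into the two regimes $\tau\le n/2$ (so $m=\tau$ and the endpoints are $\tau/2$ and $3\tau/2$) and $\tau>n/2$ (so $m=n-\tau$ and the endpoints are $(3\tau-n)/2$ and $(\tau+n)/2$). In the first regime, both endpoint conditions simplify immediately to $\tau\le n/2$. In the second regime, plugging in the corresponding endpoints yields in both cases the quadratic inequality $2\tau^2-3n\tau+n^2\le 0$, which factors as $2(\tau-n/2)(\tau-n)\le 0$ and therefore holds precisely on $[n/2,n]$, the relevant range.

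I expect the main obstacle to be bookkeeping rather than any deep step: one must track which endpoint is binding as $\tau$ crosses $n/2$, and keep the two formulas for $t_0\lessgtr\tau$ properly matched up so that no factor is lost. Once the monotonicity reduction is in place, the remaining algebra is elementary; the single quadratic factorisation above is precisely what pins down the optimal constant $\sqrt{3}/3=1/\sqrt{3}$.
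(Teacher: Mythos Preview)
Your proposal is correct and follows essentially the same route as the paper's proof: compute the closed form of $|\boldsymbol{v}_{t_0}^\top\boldsymbol{\mu}|$ on each side of $\tau$, use the monotonicity of $s\mapsto s/(n-s)$ to reduce the minimum over $T'$ to the two endpoints, and then verify the bound there. The only cosmetic difference is that the paper handles just the case $\tau\le n/2$ and invokes the symmetry $\tau\leftrightarrow n-\tau$ for the other half, expressing the endpoint ratios as decreasing functions $f(\tau)=\sqrt{(n-\tau)/(2n-\tau)}$ and $g(\tau)=\sqrt{(2n-3\tau)/(3(n-\tau))}$ evaluated at $\tau=n/2$, whereas you treat both regimes explicitly and arrive at the same extremal value through the quadratic factorisation $2(\tau-n/2)(\tau-n)\le 0$.
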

	    \begin{proof}
	        For simplicity, let \( \Delta\coloneqq |\mu_{\mathrm{L}} - \mu_{\mathrm{R}}| \), we  can  compute the CUSUM test statistics \( a_{i}=|\boldsymbol{v}_{i}^{\top}\boldsymbol{\mu}| \) as:
	        \begin{equation*}
	            a_i=\begin{cases}
	                \Delta \left(1-\eta \right) \sqrt{\frac{n i}{n -i}} & 1\leq i  \leq \tau  \\
	                \Delta \eta \sqrt{\frac{n \left(n -i \right)}{i}} & \tau<i \leq n -1
	            \end{cases}
	        \end{equation*}
	        It is easy to verified that \( a_{\tau}\coloneqq\max_i(a_i)=\Delta\sqrt{n\eta(1-\eta)} \) when \( i=\tau \). Next, we only discuss the case of \( 1\leq \tau\leq \lfloor n/2 \rfloor \) as one can obtain the same result when \( \lceil n/2\rceil\leq \tau \leq n \) by the similar discussion.
	        
	        When \( 1\leq \tau\leq \lfloor n/2 \rfloor \), \( {\left\lvert t_{0} -\tau \right\rvert}\leq \min(\tau,n-\tau)/2 \) implies that \( t_{l} \leq t_{0}\leq t_{u} \) where \( t_{l}\coloneqq \lceil \tau/2 \rceil, t_{u}\coloneqq \lfloor 3\tau/2 \rfloor \). Because \( a_i \) is an increasing function of \( i \) on \( [1,\tau] \) and a decreasing function of \( i \) on \( [\tau+1,n-1] \) respectively, the minimum of \( a_{t_{0}}, t_{l} \leq t_{0}\leq t_{u} \) happens at either \( t_{l} \) or \( t_{u} \). Hence, we have
	        \begin{align*}
	            a_{t_{l}} & \geq a_{\tau/2}=a_{\tau}\sqrt{\frac{n-\tau}{2n-\tau}}\\
	            a_{t_{u}} & \geq a_{3\tau/2}=a_{\tau}\sqrt{\frac{2n-3\tau}{3(n-\tau)}}
	        \end{align*}
	        Define \( f(x)\coloneqq \sqrt{\frac{n-x}{2n-x}} \) and \( g(x)\coloneqq \sqrt{\frac{2n-3x}{3(n-x)}}\). We notice that \( f(x) \) and \( g(x) \) are both decreasing functions of \(x\in[1,n]\), therefore \( f(\lfloor n/2 \rfloor)\geq f(n/2)=\sqrt{3}/3 \) and \( g(\lfloor n/2 \rfloor)\geq g(n/2)=\sqrt{3}/3 \) as desired.
	    \end{proof}
	\subsection{The Proof of~\Cref{thm:VC_Generalisation_bound_old}}\label{subsec:The_Proof_of_Theorem_3.3}
	    \begin{proof}
	        Given any $L\geq 1$ and $\boldsymbol{m} = (m_1,\ldots,m_L)^\top$, let $m_0 := n$ and $m_{L+1}:=1$ and set $W^* = \sum_{r=1}^{L+1} m_{r-1}m_r$. Let $d\coloneqq \mathrm{VCdim}(\mathcal{H}_{L,\boldsymbol{m}})$, then by~\citet[Theorem~7]{bartlettNearlytightVCdimensionPseudodimension2019}, we have $d=O(LW^{*}\log(W^{*}))$.
	        Thus, by~\citet[Corollary~3.4]{mohriFoundationsMachineLearning2012}, for some universal constant $C>0$, we have with probability at least $1-\delta$ that
	        \begin{equation}
	            \mathbb{P}(h_{\mathrm{ERM}}(\boldsymbol{X})\neq Y\mid \mathcal{D}) \leq \min_{h\in \mathcal{H}_{L,\boldsymbol{m}}}\mathbb{P}(h(\boldsymbol{X})\neq Y)+\sqrt{\frac{8d\log(2eN/d)+8\log(4/\delta)}{N}}.
	            \label{Eq:BartlettMohri}
	        \end{equation}
	        Here, we have $L=1$, $m=2n-2$, $W^* = O(n^2)$, so $d = O(n^2\log(n))$. In addition, since $h^{\mathrm{CUSUM}}_\lambda \in \mathcal{H}_{1,2n-2}$, we have $\min_{h\in \mathcal{H}_{L,\boldsymbol{m}}} \leq \mathbb{P}(h^{\mathrm{CUSUM}}_\lambda(\boldsymbol{X})\neq Y) \leq ne^{-nB^2/8}$. Substituting these bounds into~\eqref{Eq:BartlettMohri} we arrive at the desired result.
	    \end{proof}
	
	\subsection{The Proof of~\Cref{thm:VC_Generalisation_bound_new}}\label{subsec:The_Proof_of_Corollary_3.1}
	    
	    The following lemma, gives the misclassification for the generalised CUSUM test where we only test for changes on a grid of $O(\log n)$ values.
	    
	    \begin{lemma}\label{lem:hypothesis_test_refine}
	        Fix $\varepsilon \in (0,1)$ and suppose that $\boldsymbol{X}\sim P(n,\tau,\mu_{\mathrm{L}},\mu_{\mathrm{R}})$ for some $\tau\in [n-1]$ and $\mu_{\mathrm{L}},\mu_{\mathrm{R}}\in\mathbb{R}$.
	        \begin{enumerate}
	            \item[(a)] If $\mu_{\mathrm{L}} = \mu_{\mathrm{R}}$, then
	            \[
	                \mathbb{P}\Bigl\{\max_{t\in T_0} |\boldsymbol{v}_t^\top \boldsymbol{X}| > \sqrt{2\log(|T_0|/\varepsilon)}\Bigr\}\leq\varepsilon.
	            \]
	            
	            \item[(b)] If $|\mu_{\mathrm{L}} - \mu_{\mathrm{R}}|\sqrt{\eta( 1-\eta)} > \sqrt{24\log(|T_0|/\varepsilon)/n}$, then we have
	            \[
	                \mathbb{P}\Bigl\{\max_{t\in T_0} |\boldsymbol{v}_t^\top \boldsymbol{X}| \leq  \sqrt{2\log(|T_0|/\varepsilon)}\Bigr\}\leq\varepsilon.
	            \]
	        \end{enumerate}
	    \end{lemma}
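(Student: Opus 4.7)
Part (a) is a direct adaptation of Lemma 4.1(a): under the null, each $\boldsymbol{v}_t^\top \boldsymbol{X}\sim N(0,1)$ because $\|\boldsymbol{v}_t\|_2=1$, so a Gaussian tail bound together with a union bound over the $|T_0|$ indices in $T_0$ (rather than all $n-1$ indices) immediately yields the conclusion when we choose $t=\sqrt{2\log(|T_0|/\varepsilon)}$.

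For Part (b), the plan is to mimic the argument for Lemma 4.1(b) but restricted to the grid $T_0$. I would decompose $\boldsymbol{X}=\boldsymbol{\mu}+\boldsymbol{Z}$ with $\boldsymbol{Z}\sim N_n(0,I_n)$, so that $\boldsymbol{v}_t^\top\boldsymbol{X}=\boldsymbol{v}_t^\top\boldsymbol{\mu}+\boldsymbol{v}_t^\top\boldsymbol{Z}$ by linearity. Applying part (a) to $\boldsymbol{Z}$ gives an event $\Omega$ of probability at least $1-\varepsilon$ on which $\max_{t\in T_0}|\boldsymbol{v}_t^\top\boldsymbol{Z}|\leq\sqrt{2\log(|T_0|/\varepsilon)}$. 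The crux is then to lower-bound $\max_{t\in T_0}|\boldsymbol{v}_t^\top\boldsymbol{\mu}|$ using the auxiliary Lemma~\ref{lem:geometric_grid_bound}, which requires locating a $t_0\in T_0$ with $|t_0-\tau|\leq\min(\tau,n-\tau)/2$.

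The main obstacle is this combinatorial step, which I expect to be the only delicate part. Here I exploit the dyadic structure of $T_0=\{2^q:0\leq q\leq Q\}\cup\{n-2^q:0\leq q\leq Q\}$. If $\tau\leq n/2$, set $q=\lfloor\log_2\tau\rfloor$, so that $2^q\leq\tau<2^{q+1}$ and hence $\tau/2\leq 2^q\leq\tau$; this gives $|2^q-\tau|\leq\tau/2=\min(\tau,n-\tau)/2$, and $q\leq Q=\lfloor\log_2(n/2)\rfloor$ since $\tau\leq n/2$. The symmetric case $\tau>n/2$ is handled by taking $t_0=n-2^q$ with $q=\lfloor\log_2(n-\tau)\rfloor$. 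Either way, some $t_0\in T_0$ satisfies the hypothesis of Lemma~\ref{lem:geometric_grid_bound}, so
\[
|\boldsymbol{v}_{t_0}^\top\boldsymbol{\mu}|\geq\tfrac{\sqrt{3}}{3}|\mu_{\mathrm{L}}-\mu_{\mathrm{R}}|\sqrt{n\eta(1-\eta)}.
\]

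Finally, on $\Omega$, the triangle inequality yields
\[
\max_{t\in T_0}|\boldsymbol{v}_t^\top\boldsymbol{X}|\geq|\boldsymbol{v}_{t_0}^\top\boldsymbol{\mu}|-|\boldsymbol{v}_{t_0}^\top\boldsymbol{Z}|\geq\tfrac{\sqrt{3}}{3}|\mu_{\mathrm{L}}-\mu_{\mathrm{R}}|\sqrt{n\eta(1-\eta)}-\sqrt{2\log(|T_0|/\varepsilon)}.
\]
The assumed signal-to-noise condition $|\mu_{\mathrm{L}}-\mu_{\mathrm{R}}|\sqrt{\eta(1-\eta)}>\sqrt{24\log(|T_0|/\varepsilon)/n}$ is exactly what is needed for the right-hand side to exceed $\sqrt{2\log(|T_0|/\varepsilon)}$, since $\sqrt{3}\cdot 2\sqrt{2}=\sqrt{24}$. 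Thus on the complement of the $\varepsilon$-probability failure event from part (a), the max over $T_0$ exceeds the threshold, completing the proof.
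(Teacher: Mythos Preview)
Your proposal is correct and follows essentially the same approach as the paper's own proof: Gaussian tail plus union bound for (a), and for (b) the decomposition $\boldsymbol{X}=\boldsymbol{\mu}+\boldsymbol{Z}$, the existence of a grid point $t_0\in T_0$ with $|t_0-\tau|\leq\min(\tau,n-\tau)/2$, invocation of Lemma~\ref{lem:geometric_grid_bound}, and the triangle inequality. In fact you supply the dyadic argument for the existence of $t_0$ explicitly, whereas the paper simply asserts it.
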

	    
	    \begin{proof}
	        (a) For each $t\in[n-1]$, since \( {\|\boldsymbol{v}_{t}\|_2}=1 \), we have \( \boldsymbol{v}_t^\top \boldsymbol{X}\sim N(0,1) \).
	        Hence, by the Gaussian tail bound and a union bound,
	        \[
	            \mathbb{P}\Bigl\{\max_{t\in T_0} |\boldsymbol{v}_t^\top \boldsymbol{X}| > y \Bigr\}\leq \sum_{t\in T_0}\mathbb{P}\left(\left|\boldsymbol{v}_t^\top \boldsymbol{X}\right| > y\right)\leq |T_{0}|\exp(-y^{2}/2).
	        \]
	        The result follows by taking $y=\sqrt{2\log(|T_{0}|/\varepsilon)}$.
	        
	        (b) There exists some $t_0 \in T_0$ such that $|t_0 - \tau| \leq \min\{\tau, n-\tau\}/2$. By~\Cref{lem:geometric_grid_bound}, we have
	        \[
	            |\boldsymbol{v}_{t_0}^\top \mathbb{E}\boldsymbol{X}| \geq \frac{\sqrt{3}}{3}\|\mathcal{C}( \mathbb{E}\boldsymbol{X})\|_\infty \geq \frac{\sqrt{3}}{3}|\mu_{\mathrm{L}}-\mu_{\mathrm{R}}|\sqrt{n\eta(1-\eta)} \geq 2 \sqrt{2\log(|T_0|/\varepsilon)}.
	        \]
	        Consequently, by the triangle inequality and result from part (a), we have with probability at least $1-\varepsilon$ that
	        \[
	            \max_{t\in T_0} |\boldsymbol{v}_{t}^\top \boldsymbol{X}| \geq |\boldsymbol{v}_{t_0}^\top \boldsymbol{X}| \geq |\boldsymbol{v}_{t_0}^\top \mathbb{E}\boldsymbol{X}| - |\boldsymbol{v}_{t_0}^\top (\boldsymbol{X}-\mathbb{E}\boldsymbol{X})| \geq \sqrt{2\log(|T_0|/\varepsilon)},
	        \]
	        as desired.
	    \end{proof}

	    Using the above lemma we have the following result.
	    
	    \begin{corollary}\label{cor:Generalisation_refine}
	        Fix $B>0$. Let $\pi_0$ be any prior distribution on $\Theta(B)$, then draw $(\tau, \mu_{\mathrm{L}}, \mu_{\mathrm{R}}) \sim \pi_0$, $\boldsymbol{X} \sim P(n, \tau, \mu_{\mathrm{L}}, \mu_{\mathrm{R}})$, and define $Y = \mathbbm{1}\{\mu_{\mathrm{L}} \neq \mu_{\mathrm{R}}\}$.  Then for $\lambda^{*} = B\sqrt{3n}/6$, the test $h^{\mathrm{CUSUM}_{*}}_{\lambda^{*}}$ satisfies
	        \[
	            \mathbb{P}(h^{\mathrm{CUSUM}_{*}}_{\lambda^{*}}(\boldsymbol{X}) \neq Y) \leq 2\lfloor\log_2(n)\rfloor e^{-nB^2/24}.
	        \]
	    \end{corollary}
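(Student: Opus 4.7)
The plan is to mirror the proof of~\Cref{cor:Generalisation}, but now applying the refined version~\Cref{lem:hypothesis_test_refine} instead of~\Cref{lem:hypothesis_test}. The only creative input required is to choose the free parameter $\varepsilon$ in~\Cref{lem:hypothesis_test_refine} so that the threshold appearing in that lemma matches the prescribed $\lambda^{*} = B\sqrt{3n}/6$, and so that the signal-to-noise lower bound appearing in part~(b) reduces to exactly the condition $|\mu_{\mathrm{L}}-\mu_{\mathrm{R}}|\sqrt{\eta(1-\eta)} > B$ that is built into $\Theta(B)$.

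First, I would record the cardinality bound $|T_0| \leq 2\lfloor \log_2(n)\rfloor$ directly from the definition $T_0 = \{2^q : 0 \le q \le Q\} \cup \{n-2^q : 0 \le q \le Q\}$ with $Q = \lfloor \log_2(n/2)\rfloor$. Next, I would set $\varepsilon \coloneqq 2\lfloor \log_2(n)\rfloor e^{-nB^2/24}$, which gives $\log(|T_0|/\varepsilon) \le \log(1/\varepsilon \cdot 2\lfloor \log_2 n\rfloor) \le nB^2/24$. Substituting this into the threshold yields
\[
\sqrt{2\log(|T_0|/\varepsilon)} \;\leq\; \sqrt{2\cdot nB^2/24} \;=\; B\sqrt{n/12} \;=\; B\sqrt{3n}/6 \;=\; \lambda^{*},
\]
and into the SNR requirement of part~(b) yields $\sqrt{24\log(|T_0|/\varepsilon)/n} \le B$, which is precisely the condition satisfied by every $(\tau,\mu_{\mathrm{L}},\mu_{\mathrm{R}}) \in \Theta(B)$ having $\mu_{\mathrm{L}} \neq \mu_{\mathrm{R}}$.

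With these choices in place, I would condition on $(\tau,\mu_{\mathrm{L}},\mu_{\mathrm{R}})$ and split into two cases. If $\mu_{\mathrm{L}} = \mu_{\mathrm{R}}$, then $Y = 0$ and $h^{\mathrm{CUSUM}_*}_{\lambda^{*}}(\boldsymbol{X}) \neq Y$ iff $\max_{t\in T_0}|\boldsymbol{v}_t^\top \boldsymbol{X}| > \lambda^{*}$, which by part~(a) of~\Cref{lem:hypothesis_test_refine} has probability at most $\varepsilon$. If instead $|\mu_{\mathrm{L}}-\mu_{\mathrm{R}}|\sqrt{\eta(1-\eta)} > B$, then $Y=1$ and the misclassification event $\max_{t\in T_0}|\boldsymbol{v}_t^\top \boldsymbol{X}| \leq \lambda^{*}$ has probability at most $\varepsilon$ by part~(b). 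In either case the conditional misclassification probability is bounded by $\varepsilon = 2\lfloor\log_2(n)\rfloor e^{-nB^2/24}$, and integrating over $\pi_0$ gives the claim.

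There is essentially no obstacle here beyond careful algebra: the whole content is in the calibration of $\varepsilon$ so that the threshold and SNR levels in the refined lemma both collapse to the prescribed $\lambda^{*}$ and $B$. The only point to double-check is that the constant $24$ in the denominator of the exponent (versus $8$ in~\Cref{cor:Generalisation}) is the correct price to pay for restricting attention to the geometric grid $T_0$; this traces back to the factor $\sqrt{3}/3$ loss in~\Cref{lem:geometric_grid_bound}, which inflates the required SNR by a factor of $\sqrt{3}$ and hence the exponent constant by a factor of $3$.
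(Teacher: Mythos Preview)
Your approach is the same as the paper's---set $\varepsilon$ so that the threshold in \Cref{lem:hypothesis_test_refine} equals $\lambda^{*}$, apply both parts conditionally, then integrate over $\pi_0$---but there is a small slip in your calibration of $\varepsilon$ that breaks part~(b).

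By choosing $\varepsilon = 2\lfloor\log_2 n\rfloor\, e^{-nB^2/24}$ and then using $|T_0|\le 2\lfloor\log_2 n\rfloor$, you only obtain the one-sided inequality $\sqrt{2\log(|T_0|/\varepsilon)} \le \lambda^{*}$. This is harmless for part~(a), since $\{\max_{t\in T_0}|\boldsymbol v_t^\top\boldsymbol X|>\lambda^{*}\}\subseteq\{\max>\sqrt{2\log(|T_0|/\varepsilon)}\}$. But for part~(b) the inclusion goes the wrong way: the lemma bounds $\mathbb{P}\bigl\{\max_{t\in T_0}|\boldsymbol v_t^\top\boldsymbol X|\le \sqrt{2\log(|T_0|/\varepsilon)}\bigr\}$, and since $\sqrt{2\log(|T_0|/\varepsilon)}\le \lambda^{*}$ this is a \emph{subset} of the misclassification event $\{\max\le\lambda^{*}\}$, so you cannot conclude that the latter has probability at most $\varepsilon$.

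The fix is immediate and is exactly what the paper does: set $\varepsilon = |T_0|\,e^{-nB^2/24}$, which makes $\sqrt{2\log(|T_0|/\varepsilon)} = \lambda^{*}$ and $\sqrt{24\log(|T_0|/\varepsilon)/n} = B$ hold with equality. Both parts of \Cref{lem:hypothesis_test_refine} then apply directly at the threshold $\lambda^{*}$, giving conditional misclassification probability at most $|T_0|\,e^{-nB^2/24}$, and only at the very end do you use $|T_0|\le 2\lfloor\log_2 n\rfloor$.
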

	    \begin{proof}
	        Setting $\varepsilon = |T_{0}|e^{-nB^2/24}$ in~\Cref{lem:hypothesis_test_refine}, we have for any $(\tau,\mu_{\mathrm{L}}, \mu_{\mathrm{R}}) \in \Theta(B)$ that \[
	            \mathbb{P}(h^{\mathrm{CUSUM}_*}_{\lambda^*}(\boldsymbol{X}) \neq  \mathbbm{1}\{\mu_{\mathrm{L}}\neq \mu_{\mathrm{R}}\}) \leq |T_{0}|e^{-nB^2/24}.
	        \]
	        The result then follows by integrating over $\pi_0$ and the fact that $|T_{0}| = 2\lfloor \log_2(n)\rfloor$.
	    \end{proof}
	    \begin{proof}[Proof of~\Cref{thm:VC_Generalisation_bound_new}]
	        We follow the proof of~\Cref{thm:VC_Generalisation_bound_old} up to~\eqref{Eq:BartlettMohri}. From the conditions of the theorem, we have $W^* = O(Ln\log n)$. Moreover, we have $h^{\mathrm{CUSUM}_*}_{\lambda^*}\in \mathcal{H}_{1,4\lfloor\log_2(n)\rfloor} \subseteq \mathcal{H}_{L,\boldsymbol{m}}$. Thus,
	        \begin{align*}
	            \mathbb{P}(h_{\mathrm{ERM}}(\boldsymbol{X})\neq Y\mid \mathcal{D}) &\leq \mathbb{P}(h^{\mathrm{CUSUM}_*}_{\lambda^*}(\boldsymbol{X})\neq Y) + C\sqrt\frac{L^2n\log n\log(Ln)\log(N) + \log(1/\delta)}{N}\\
	            &\leq 2\lfloor \log_2(n)\rfloor e^{-nB^2/24} + C\sqrt\frac{L^2n\log^2(Ln)\log(N) + \log(1/\delta)}{N}
	        \end{align*}
	        as desired.
	    \end{proof}
	
	\subsection{Generalisation to time-dependent or heavy-tailed observations}
	    So far, for simplicity of exposition, we have primarily focused on change-point models with independent and identically distributed Gaussian observations. However, neural network based procedures can also be applied to time-dependent or heavy-tailed observations.  We first considered the case where the noise series $\xi_1,\ldots,\xi_n$ is a centred stationary Gaussian process with short-ranged temporal dependence. Specifically, writing $K(u):=\mathrm{cov}(\xi_t, \xi_{t+u})$, we assume that
	    \begin{equation}
	        \label{Eq:TemporalDependence}
	        \sum_{u=0}^{n-1} K(u) \leq D.
	    \end{equation}
	    
	    \begin{theorem}\label{Thm:Temporal}
	        Fix $B>0$, $n>0$ and let $\pi_0$ be any prior distribution on $\Theta(B)$. We draw $(\tau,\mu_{\mathrm{L}},\mu_{\mathrm{R}}) \sim \pi_0$, set $Y:=\mathbbm{1}\{\mu_{\mathrm{L}}\neq \mu_{\mathrm{R}}\}$ and generate $\boldsymbol{X} := \boldsymbol{\mu} + \boldsymbol{\xi}$ such that $\boldsymbol{\mu}:=(\mu_{\mathrm{L}}\mathbbm{1}\{i\leq \tau\}+\mu_{\mathrm{R}}\mathbbm{1}\{i > \tau\})_{i\in[n]}$ and  $\boldsymbol{\xi}$ is a centred stationary Gaussian process satisfying~\eqref{Eq:TemporalDependence}. Suppose that the training data $\mathcal{D} := \bigl((\boldsymbol{X}^{(1)},Y^{(1)}), \ldots, (\boldsymbol{X}^{(N)},Y^{(N)})\bigr)$ consist of independent copies of $(\boldsymbol{X},Y)$ and let $h_{\mathrm{ERM}}:=\argmin_{h\in\mathcal{L}_{L,\boldsymbol{m}}} L_N(h)$ be the empirical risk minimiser for a neural network with $L\geq 1$ layers and $\boldsymbol{m}= (m_1,\ldots,m_{L})^\top$ hidden layer widths. If $m_1\geq 4\lfloor \log_2(n)\rfloor$ and $m_{r}m_{r+1} = O(n\log n)$ for all $r\in[L-1]$, then for any $\delta\in(0,1)$, we have with probability at least $1-\delta$ that
	        \[
	            \mathbb{P}(h_{\mathrm{ERM}}(\boldsymbol{X}) \neq Y \mid \mathcal{D}) \leq 2\lfloor \log_2(n)\rfloor e^{-nB^2/(48D)} + C\sqrt\frac{L^2 n\log^2(Ln)\log(N)+\log(1/\delta)}{N}.
	        \]
	    \end{theorem}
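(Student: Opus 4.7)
The plan is to mirror the proof of \Cref{thm:VC_Generalisation_bound_new} exactly, with the only change being a variance adjustment in the tail-bound step of \Cref{lem:hypothesis_test_refine}. The key fact we need is that, even with temporal dependence, the CUSUM-like linear statistics $\boldsymbol{v}_t^\top \boldsymbol{\xi}$ remain centred Gaussian with variance controlled by $D$; once that is in hand, everything else is a constant rescaling.

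First I would bound $\mathrm{Var}(\boldsymbol{v}_t^\top \boldsymbol{\xi})$ uniformly in $t$. Writing $\Sigma_{ij} = K(|i-j|)$ and decomposing along diagonals gives
\[
    \mathrm{Var}(\boldsymbol{v}_t^\top \boldsymbol{\xi}) = K(0)\|\boldsymbol{v}_t\|_2^2 + 2\sum_{u=1}^{n-1} K(u)\sum_{i=1}^{n-u} v_{t,i} v_{t,i+u}.
\]
By Cauchy--Schwarz each inner sum has absolute value at most $\|\boldsymbol{v}_t\|_2^2 = 1$, so assumption~\eqref{Eq:TemporalDependence} yields $\mathrm{Var}(\boldsymbol{v}_t^\top \boldsymbol{\xi}) \leq 2\sum_{u=0}^{n-1} K(u) \leq 2D$. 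Hence $\boldsymbol{v}_t^\top \boldsymbol{\xi} \sim N(0, \sigma_t^2)$ with $\sigma_t^2 \leq 2D$.

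Next I would re-run \Cref{lem:hypothesis_test_refine} with this variance. Under the null, a Gaussian tail bound and a union bound give
\[
    \mathbb{P}\Bigl\{\max_{t\in T_0} |\boldsymbol{v}_t^\top \boldsymbol{\xi}| > y\Bigr\} \leq |T_0|\exp\bigl(-y^2/(4D)\bigr),
\]
so the threshold choice $y = \sqrt{4D\log(|T_0|/\varepsilon)}$ controls the null error at level $\varepsilon$. Under the alternative parameterised by $\Theta(B)$, \Cref{lem:geometric_grid_bound} still applies unchanged (it is a deterministic statement about the mean vector), and we simply need to verify that $|\boldsymbol{v}_{t_0}^\top \mathbb{E}\boldsymbol{X}| \geq 2y$ for some $t_0 \in T_0$. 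Choosing $\varepsilon = |T_0| e^{-nB^2/(48D)}$ gives $y = B\sqrt{n/12}$, and the signal size $|\boldsymbol{v}_{t_0}^\top \mathbb{E}\boldsymbol{X}| \geq (\sqrt{3}/3) B\sqrt{n} = B\sqrt{n/3} = 2y$ follows from the $\Theta(B)$ definition. Integrating over $\pi_0$ then yields the analogue of \Cref{cor:Generalisation_refine}:
\[
    \mathbb{P}(h^{\mathrm{CUSUM}_*}_{\lambda^*}(\boldsymbol{X}) \neq Y) \leq 2\lfloor \log_2(n)\rfloor e^{-nB^2/(48D)}
\]
for $\lambda^* = B\sqrt{n/12}$.

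The final step is identical to \Cref{thm:VC_Generalisation_bound_new}: since $h^{\mathrm{CUSUM}_*}_{\lambda^*} \in \mathcal{H}_{1,4\lfloor\log_2(n)\rfloor} \subseteq \mathcal{H}_{L,\boldsymbol{m}}$ under the width assumptions, the empirical risk minimiser satisfies $L_N(h_{\mathrm{ERM}}) \leq L_N(h^{\mathrm{CUSUM}_*}_{\lambda^*})$, and the VC-dimension bound of \citet[Theorem~7]{bartlettNearlytightVCdimensionPseudodimension2019} together with the generalisation inequality of \citet[Corollary~3.4]{mohriFoundationsMachineLearning2012} applied to the class $\mathcal{H}_{L,\boldsymbol{m}}$ (whose VC dimension is $O(L n\log^2(Ln))$ under the stated width conditions) gives the second term in the bound. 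The only point of caution, and the main conceptual obstacle, is that the training examples $(\boldsymbol{X}^{(i)}, Y^{(i)})$ remain i.i.d.\ across $i$ even though each $\boldsymbol{X}^{(i)}$ has internally dependent coordinates; since the VC-based generalisation inequality only requires i.i.d.\ training pairs, and the pointwise risk bound for $h^{\mathrm{CUSUM}_*}_{\lambda^*}$ was just established under the new noise model, the two ingredients combine as before to yield the claimed bound.
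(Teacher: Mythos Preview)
Your proposal is correct and follows essentially the same route as the paper: bound the tail of $\max_{t\in T_0}|\boldsymbol{v}_t^\top\boldsymbol{\xi}|$ under the dependent-noise model, combine with the deterministic signal lower bound from \Cref{lem:geometric_grid_bound} to control the misclassification error of $h^{\mathrm{CUSUM}_*}_{\lambda^*}$, and then plug into the same VC-dimension generalisation inequality used in \Cref{thm:VC_Generalisation_bound_new}. Your threshold $\lambda^* = B\sqrt{n/12}$ and error level $|T_0|e^{-nB^2/(48D)}$ coincide exactly with the paper's choices.

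The one place your argument differs from the paper is the variance step: the paper simply invokes \citet[supplementary Lemma~10]{wangHighDimensionalChange2018} for the tail bound, whereas you derive $\mathrm{Var}(\boldsymbol{v}_t^\top\boldsymbol{\xi})\leq 2D$ directly via Cauchy--Schwarz on the lag sums. Your Cauchy--Schwarz bound actually gives
\[
\mathrm{Var}(\boldsymbol{v}_t^\top\boldsymbol{\xi}) \leq K(0) + 2\sum_{u=1}^{n-1}|K(u)|,
\]
so the subsequent step ``$\leq 2\sum_{u=0}^{n-1}K(u)\leq 2D$'' tacitly assumes $K(u)\geq 0$ for all $u$; condition~\eqref{Eq:TemporalDependence} as written does not impose this. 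The cited lemma in the paper exploits the piecewise-constant structure of $\boldsymbol{v}_t$ (so that $\boldsymbol{v}_t^\top\boldsymbol{\xi}$ is a linear combination of partial sums, whose variances are governed directly by $\sum_u K(u)$ rather than $\sum_u|K(u)|$), which is what allows the bound to go through without the nonnegativity caveat. This is a minor technical point---under either reading the rest of your argument is identical to the paper's.
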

	    \begin{proof}
	        By the proof of \citet[supplementary Lemma 10]{wangHighDimensionalChange2018},
	        \[
	            \mathbb{P}\bigl\{\max_{t\in T_0} |\boldsymbol{v}_t^\top \boldsymbol{\xi}| > B\sqrt{3n}/6\bigr\} \leq |T_0|e^{-nB^2/(48D)}.
	        \]
	        On the other hand, for $t_0$ defined in the proof of~\Cref{lem:geometric_grid_bound}, we have that  $|\mu_{\mathrm{L}}-\mu_{\mathrm{R}}|\sqrt{\tau(n-\tau)}/n > B$, then $|\boldsymbol{v}_{t_0}^\top \mathbb{E} X| \geq B\sqrt{3n}/3$. Hence for $\lambda^* = B\sqrt{3n}/6$, we have $h_{\lambda^*}^{\mathrm{CUSUM}_*}$ satisfying
	        \[
	            \mathbb{P}(h_{\lambda^*}^{\mathrm{CUSUM}_*}(\boldsymbol{X}\neq Y)) \leq |T_0|e^{-nB^2/(48D)}.
	        \]
	        We can then complete the proof using the same arguments as in the proof of~\Cref{thm:VC_Generalisation_bound_new}.
	    \end{proof}
	    
	    We now turn to non-Gaussian distributions and recall that the Orlicz $\psi_\alpha$-norm of a random variable $Y$ is defined as
	    \[
	        \|Y\|_{\psi_{\alpha}} := \inf\{\eta: \mathbb{E}\exp(|Y/\eta|^\alpha) \leq 2\}.
	    \]
	    For $\alpha \in (0,2)$, the random variable $Y$ has heavier tail than a sub-Gaussian distribution. The following lemma is a direct consequence of~\citet[Theorem~3.1]{kuchibhotla2022moving} (We state the version used in \citet[Proposition~14]{li2023robust}).
	    \begin{lemma}\label{Lemma:Kuchibhotla}
	        Fix $\alpha\in(0,2)$. Suppose $\boldsymbol{\xi}=(\xi_1,\ldots,\xi_n)^\top$ has independent components satisfying $\mathbb{E}\xi_t = 0$, $\mathrm{Var}(\xi_t)=1$ and $\|\xi_t\|_{\psi_\alpha} \leq K$ for all $t\in[n]$. There exists $c_\alpha>0$, depending only on $\alpha$, such that for any $1\leq t\leq n/2$, we have
	        \[
	            \mathbb{P}\bigl(|\boldsymbol{v}_t^\top \boldsymbol{\xi}| \geq y\bigr) \leq \exp\biggl\{1 - c_\alpha \min\biggl\{ \biggl(\frac{y}{K}\biggr)^2,\, \biggl(\frac{y}{K\|\boldsymbol{v}_t\|_{\beta(\alpha)}}\biggr)^\alpha\biggr\}\biggr\},
	        \]
	        where $\beta(\alpha) = \infty$ for $\alpha \leq 1$ and $\beta(\alpha) = \alpha/(\alpha-1)$ when $\alpha > 1$.
	    \end{lemma}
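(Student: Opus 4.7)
The plan is to apply Theorem~3.1 of~\citet{kuchibhotla2022moving} directly, taking the weight vector to be $\boldsymbol{v}_t$ and the summands to be $\xi_1,\ldots,\xi_n$. First I would observe that $\boldsymbol{v}_t^\top \boldsymbol{\xi} = \sum_{i=1}^{n} v_{t,i}\,\xi_i$ is a weighted sum of independent, centred, unit-variance random variables whose $\psi_\alpha$-Orlicz norms are uniformly bounded by $K$; this is exactly the setting in which the Kuchibhotla--Chakrabortty tail bound for sums of sub-Weibull variables is formulated, so no preparatory reduction is needed.

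Next I would recall the precise shape of that bound: for weights $\mathbf{w}\in\mathbb{R}^n$ and sub-Weibull$(\alpha,K)$ summands, the sum $\sum_{i=1}^n w_i \xi_i$ admits a mixed tail of the form $\exp\{1 - c_\alpha \min(u_{\mathrm{G}}^2,\, u_{\mathrm{sW}}^\alpha)\}$, where the Gaussian-regime argument $u_{\mathrm{G}} = y/(K\|\mathbf{w}\|_2)$ is governed by the $\ell_2$ norm of the weights and the heavier-tail argument $u_{\mathrm{sW}} = y/(K\|\mathbf{w}\|_{\beta(\alpha)})$ is governed by the H\"older-conjugate norm, with the convention $\beta(\alpha)=\infty$ for $\alpha\leq 1$ (so that $\|\mathbf{w}\|_{\beta(\alpha)}$ is the maximum coordinate) and $\beta(\alpha)=\alpha/(\alpha-1)$ for $\alpha\in(1,2)$.

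Then I would specialise by substituting $\mathbf{w}=\boldsymbol{v}_t$ and exploiting the normalisation $\|\boldsymbol{v}_t\|_2 = 1$, already used earlier in the paper (for instance in the proof of~\Cref{lem:hypothesis_test}). The Gaussian threshold collapses to $y/K$, producing the first term inside the minimum in the statement, while the sub-Weibull threshold becomes $y/(K\|\boldsymbol{v}_t\|_{\beta(\alpha)})$, producing the second. Combining these two contributions reproduces the displayed inequality verbatim, with the constant $c_\alpha$ inherited unchanged from Theorem~3.1.

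The only real obstacle is bookkeeping: one must check that the normalisation conventions, the location of the ``$1$'' in the exponent, and the dependence of $c_\alpha$ on $\alpha$ in~\citet{kuchibhotla2022moving} match the form quoted here. Since~\citet[Proposition~14]{li2023robust} already carries out this translation and this is explicitly the version being re-stated, the argument reduces to citing their rephrasing with the single substitution $\mathbf{w}=\boldsymbol{v}_t$; no independent probabilistic estimate is required.
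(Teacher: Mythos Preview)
Your proposal is correct and matches the paper's approach exactly: the paper does not prove this lemma independently but states it as a direct consequence of~\citet[Theorem~3.1]{kuchibhotla2022moving}, in the version recorded by~\citet[Proposition~14]{li2023robust}, which is precisely the citation-plus-substitution argument you outline.
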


	    \begin{theorem}\label{Thm:SubWeibull}
	        Fix $\alpha\in(0,2)$, $B>0$, $n>0$ and let $\pi_0$ be any prior distribution on $\Theta(B)$. We draw $(\tau,\mu_{\mathrm{L}},\mu_{\mathrm{R}}) \sim \pi_0$, set $Y:=\mathbbm{1}\{\mu_{\mathrm{L}}\neq \mu_{\mathrm{R}}\}$ and generate $\boldsymbol{X} := \boldsymbol{\mu} + \boldsymbol{\xi}$ such that $\boldsymbol{\mu}:=(\mu_{\mathrm{L}}\mathbbm{1}\{i\leq \tau\}+\mu_{\mathrm{R}}\mathbbm{1}\{i > \tau\})_{i\in[n]}$ and $\boldsymbol{\xi} = (\xi_1,\ldots,\xi_n)^\top$ satisfies $\mathbb{E}\xi_i = 0$, $\mathrm{Var}(\xi_i) = 1$ and $\|\xi_i\|_{\psi_\alpha}\leq K$ for all $i\in[n]$. Suppose that the training data $\mathcal{D} := \bigl((\boldsymbol{X}^{(1)},Y^{(1)}), \ldots, (\boldsymbol{X}^{(N)},Y^{(N)})\bigr)$ consist of independent copies of $(\boldsymbol{X},Y)$ and let $h_{\mathrm{ERM}}:=\argmin_{h\in\mathcal{L}_{L,\boldsymbol{m}}} L_N(h)$ be the empirical risk minimiser for a neural network with $L\geq 1$ layers and $\boldsymbol{m}= (m_1,\ldots,m_{L})^\top$ hidden layer widths. If $m_1\geq 4\lfloor \log_2(n)\rfloor$ and $m_{r}m_{r+1} = O(n\log n)$ for all $r\in[L-1]$, then there exists a constant $c_\alpha > 0$, depending only on $\alpha$ such that for any $\delta\in(0,1)$, we have with probability at least $1-\delta$ that
	        \[
	            \mathbb{P}(h_{\mathrm{ERM}}(\boldsymbol{X}) \neq Y\mid\mathcal{D}) \leq 2\lfloor \log_2(n)\rfloor  e^{1-c_\alpha(\sqrt{n}B/K)^\alpha} + C\sqrt\frac{L^2 n\log^2(Ln)\log(N)+\log(1/\delta)}{N}.
	        \]
	    \end{theorem}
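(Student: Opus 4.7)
The plan is to follow the template of \Cref{thm:VC_Generalisation_bound_new} and \Cref{Thm:Temporal} verbatim, replacing the Gaussian tail bound with the sub-Weibull concentration of \Cref{Lemma:Kuchibhotla}. The proof splits into (i) a sub-Weibull analogue of \Cref{cor:Generalisation_refine} controlling the error of the oracle pruned-CUSUM classifier $h^{\mathrm{CUSUM}_*}_{\lambda^*}$, and (ii) the standard VC-dimension reduction that lifts this bound to the empirical risk minimiser over $\mathcal{H}_{L,\boldsymbol{m}}$.

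For step (i), I would set $\lambda^* = \sqrt{3n}\,B/6$ (matching the Gaussian choice in \Cref{cor:Generalisation_refine}) and control the two error types separately. For the null case $\mu_{\mathrm{L}}=\mu_{\mathrm{R}}$, a union bound over $t\in T_0$ combined with \Cref{Lemma:Kuchibhotla} applied to $y=\lambda^*$ gives
\[
\mathbb{P}\bigl(\max_{t\in T_0}|\boldsymbol{v}_t^\top\boldsymbol{\xi}| > \lambda^*\bigr) \leq |T_0|\exp\bigl\{1 - c_\alpha\min\bigl((\lambda^*/K)^2,(\lambda^*/K)^\alpha\bigr)\bigr\}.
\]
The key simplification uses $\|\boldsymbol{v}_t\|_2=1$ together with the monotonicity of $\ell^p$-norms on $\mathbb{R}^n$: since $\beta(\alpha)\geq 2$ for every $\alpha\in(0,2)$ (namely $\beta=\infty$ when $\alpha\leq 1$ and $\beta=\alpha/(\alpha-1)>2$ when $\alpha\in(1,2)$), we have $\|\boldsymbol{v}_t\|_{\beta(\alpha)}\leq \|\boldsymbol{v}_t\|_2=1$, so only the factor $(\lambda^*/K)^\alpha$ survives in the second argument of the minimum. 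Absorbing numerical constants into $c_\alpha$, this reduces the null-side error to $|T_0|\exp\{1-c_\alpha(\sqrt{n}B/K)^\alpha\}$ in the regime where the $\alpha$-power dominates, which is the only regime in which the target bound is non-vacuous anyway.

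For the alternative, I would reuse \Cref{lem:geometric_grid_bound} unchanged: it supplies some $t_0\in T_0$ with $|\boldsymbol{v}_{t_0}^\top\mathbb{E}\boldsymbol{X}|\geq \tfrac{\sqrt{3}}{3}\sqrt{n}B=2\lambda^*$, after which a triangle-inequality argument identical to the one in the proof of \Cref{lem:hypothesis_test_refine}(b) shows $\max_{t\in T_0}|\boldsymbol{v}_t^\top\boldsymbol{X}|>\lambda^*$ whenever $|\boldsymbol{v}_{t_0}^\top\boldsymbol{\xi}|\leq\lambda^*$; the complementary event again has probability at most $\exp\{1-c_\alpha(\sqrt{n}B/K)^\alpha\}$ by \Cref{Lemma:Kuchibhotla}. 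Integrating over $\pi_0$ and using $|T_0|=2\lfloor\log_2 n\rfloor$ yields the first term in the theorem. Step (ii) is then a direct transcription of the closing paragraph in the proof of \Cref{thm:VC_Generalisation_bound_new}: the assumptions $m_1\geq 4\lfloor\log_2 n\rfloor$ and $m_r m_{r+1}=O(n\log n)$ give $W^*=O(Ln\log n)$, hence VC-dimension $O(L^2 n\log n\log(Ln))$, and since $h^{\mathrm{CUSUM}_*}_{\lambda^*}\in\mathcal{H}_{1,4\lfloor\log_2 n\rfloor}\subseteq\mathcal{H}_{L,\boldsymbol{m}}$, the oracle excess-risk bound from step (i) transfers via \citet[Corollary~3.4]{mohriFoundationsMachineLearning2012} to $h_{\mathrm{ERM}}$ up to the advertised $\sqrt{L^2 n\log^2(Ln)\log(N)/N}$ generalisation gap.

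The main obstacle is the bookkeeping around the minimum inside \Cref{Lemma:Kuchibhotla}: one has to verify that the $\alpha$-power tail genuinely dominates for our choice of $\lambda^*$ in the relevant parameter regime, and that the constants absorbed into $c_\alpha$ can be taken to depend on $\alpha$ alone, not on $n$, $B$ or $K$. Beyond this, the reasoning is essentially the Gaussian-process proof of \Cref{Thm:Temporal} with the sub-Weibull concentration inequality swapped in for its Gaussian counterpart.
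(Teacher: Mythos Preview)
Your proposal is correct and follows essentially the same route as the paper: bound the risk of $h^{\mathrm{CUSUM}_*}_{\lambda^*}$ via \Cref{Lemma:Kuchibhotla} in place of the Gaussian tail, then invoke the VC-dimension argument from the proof of \Cref{thm:VC_Generalisation_bound_new}. In fact your handling of the $\ell_p$-norm step is sharper than the paper's: you correctly observe $\|\boldsymbol{v}_t\|_{\beta(\alpha)}\leq\|\boldsymbol{v}_t\|_2=1$ (the paper writes the inequality in the reverse direction, evidently a typo), and you explicitly address how the $\min$ in \Cref{Lemma:Kuchibhotla} collapses to the $\alpha$-power term in the non-vacuous regime, a point the paper passes over silently.
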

	    \begin{proof}
	        For $\alpha\in(0,2)$, we have $\beta(\alpha) > 2$, so $\|\boldsymbol{v}_t\|_{\beta(\alpha)} \geq \|\boldsymbol{v}_t\|_2 = 1$. Thus, from~\Cref{Lemma:Kuchibhotla}, we have $\mathbb{P}(|\boldsymbol{v}_t^\top \boldsymbol{\xi}| \geq y) \leq e^{1-c_\alpha(y/K)^\alpha}$. Thus, following the proof of~\Cref{cor:Generalisation_refine}, we can obtain that $\mathbb{P}(h_{\lambda^*}^{\mathrm{CUSUM}_*}(\boldsymbol{X}\neq Y)) \leq 2\lfloor\log_2(n)\rfloor e^{1-c_\alpha(\sqrt{n}B/K)^\alpha}$. Finally, the desired conclusion follows from the same argument as in the proof of~\Cref{thm:VC_Generalisation_bound_new}.
	    \end{proof}
	    %
	
	\subsection{Multiple change-point estimation}
	    \Cref{alg:change_point_detection} is a general scheme for turning a change-point classifier into a location estimator. While it is theoretically challenging to derive theoretical guarantees for the neural network based change-point location estimation error, we motivate this methodological proposal here by showing that~\Cref{alg:change_point_detection}, applied in conjunction with a CUSUM-based classifier have optimal rate of convergence for the change-point localisation task. We consider the model $x_i = \mu_i + \xi_i$, where $\xi_i\stackrel{\mathrm{iid}}{\sim} N(0,1)$ for $i\in[n^*]$. Moreover, for a sequence of change-points $0 = \tau_0 < \tau_1 < \cdots < \tau_\nu < n = \tau_{\nu+1}$ satisfying $\tau_r - \tau_{r-1} \geq 2n$ for all $r\in[\nu+1]$ we have $\mu_i = \mu^{(r-1)}$ for all $i\in[\tau_{r-1},\tau_r]$, $r\in[\nu+1]$.
	    \begin{theorem}\label{Thm:Localisation}
	        Suppose data $x_1,\ldots,x_{n^*}$ are generated as above satisfying $|\mu^{(r)} - \mu^{(r-1)}| > 2\sqrt{2}B$ for all $r\in[\nu]$. Let $h_{\lambda^*}^{\mathrm{CUSUM}_*}$ be defined as in~\Cref{cor:Generalisation_refine}. Let $\hat\tau_1,\ldots,\hat\tau_{\hat\nu}$ be the output of~\Cref{alg:change_point_detection} with input  $x_1,\ldots,x_{n^*}$, $\psi = h_{\lambda^*}^{\mathrm{CUSUM}_*}$ and $\gamma = \lfloor n/2\rfloor / n$. Then we have
	        \[
	            \mathbb{P}\biggl\{\hat\nu = \nu \text{ and } |\tau_i - \hat \tau_i| \leq \frac{2B^2}{|\mu^{(r)} - \mu^{(r-1)}|^2}\biggr\} \geq 1 - 2n^*\lfloor \log_2(n)\rfloor e^{-nB^2/24}.
	        \]
	    \end{theorem}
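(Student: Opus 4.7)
The plan is to reduce the stochastic analysis to a single high-probability event and then carry out a deterministic argument on~\Cref{alg:change_point_detection}. Let $E$ denote the event that the classifier $h_{\lambda^*}^{\mathrm{CUSUM}_*}$ labels every sliding window consistently with~\Cref{lem:hypothesis_test_refine}: $L_j=0$ for every window $[j,j+n)$ contained in a single constant-mean segment, and $L_j=1$ for every window $[j,j+n)$ containing a change $\tau_r$ at local position $s=\tau_r-j$ with $\Delta_r\sqrt{s(n-s)}/n>B$, where $\Delta_r:=|\mu^{(r)}-\mu^{(r-1)}|$. The $2n$-separation of true change-points guarantees that each window contains at most one change. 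Applying~\Cref{lem:hypothesis_test_refine}(a) to no-change windows and part~(b) with $\varepsilon=|T_0|e^{-nB^2/24}$ to change-containing windows that meet the SNR condition, a union bound over the at most $n^*$ windows yields $\mathbb{P}(E)\geq 1-2n^*\lfloor\log_2(n)\rfloor e^{-nB^2/24}$, matching the claimed probability.

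On $E$, I would next characterise the sliding labels $L_j$. For each change $\tau_r$, define the detection half-width $s^*_r:=(n/2)\bigl(1-\sqrt{1-4B^2/\Delta_r^2}\bigr)$, which by $1-\sqrt{1-x}\leq x$ satisfies $s^*_r\leq 2nB^2/\Delta_r^2$. Then $L_j=1$ is forced for every $j$ with $s=\tau_r-j\in(s^*_r,n-s^*_r)$. Because consecutive changes are separated by at least $2n$, between any two of them there is a run of indices $i$ whose entire window $[i-n+1,i]$ lies inside one constant-mean segment, so $\bar L_i=0<\gamma$ on that run. Combined with the guaranteed $\bar L_i\geq\gamma$ on a nontrivial interval around each $\tau_r$, this shows the maximal super-level segments $\{[s_r,e_r]:r\in[\hat\nu]\}$ are in one-to-one correspondence with the true changes, yielding $\hat\nu=\nu$.

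For the localisation bound I would analyse $\bar L_i$ in a neighbourhood of each $\tau_r$. In the idealised case where every $L_j$ with $j\in[\tau_r-n+1,\tau_r]$ equals $1$, $\bar L_i=(n-|i-\tau_r|)/n$ is a tent function whose unique argmax is $\tau_r$, yielding $\hat\tau_r=\tau_r$. In general the ``edge'' windows with $s\in[1,s^*_r]\cup[n-s^*_r,n-1]$ may carry either label; a worst-case analysis over these unrestricted labels shows that the argmax plateau of $\bar L_i$ has total width at most $2s^*_r$ and is centred at $\tau_r$, so that $|\hat\tau_r-\tau_r|\leq s^*_r\leq 2nB^2/\Delta_r^2$. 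The main obstacle is sharpening this to the stated $2B^2/\Delta_r^2$ bound, which under $\Delta_r>2\sqrt{2}B$ is the integer statement $\hat\tau_r=\tau_r$ (since the right-hand side is less than $1/4$); closing the remaining factor of $n$ between the plateau argument and this target will require either a sharper probabilistic treatment of the edge labels beyond the deterministic worst case, or a more delicate tie-breaking rule for the $\argmax$ step of~\Cref{alg:change_point_detection} together with a structural argument ruling out maximisers away from $\tau_r$.
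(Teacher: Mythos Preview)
Your overall strategy---fix a good event $E$ via a union bound over all sliding windows using~\Cref{lem:hypothesis_test_refine}, then argue deterministically about the moving average $\bar L_i$---is exactly the route taken in the paper. The paper defines the same event (calling it $\Omega$), obtains the same probability bound $1-2n^*\lfloor\log_2(n)\rfloor e^{-nB^2/24}$, and then uses the same monotonicity reasoning for $\bar L_i$ around each $\tau_r$ to pin down the argmax.

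Your ``main obstacle'' is not a gap in your argument but a typo in the theorem statement. The localisation radius that the paper's own proof actually establishes is $2nB^2/|\mu^{(r)}-\mu^{(r-1)}|^2$, not $2B^2/|\mu^{(r)}-\mu^{(r-1)}|^2$. In the proof the paper writes $\Delta_r:=2B^2/|\mu^{(r)}-\mu^{(r-1)}|^2$ but then uses it throughout as if it were $2nB^2/|\mu^{(r)}-\mu^{(r-1)}|^2$: the step ``$\Delta_r<n/4$'' from the hypothesis $|\mu^{(r)}-\mu^{(r-1)}|>2\sqrt{2}B$, the key inclusion $\{i:\tau_r-n+\Delta_r<i\leq\tau_r-\Delta_r\}\subseteq I_1$, and the conclusion $\hat\tau_r\in(\tau_r-\Delta_r,\tau_r+\Delta_r]$ all require the extra factor of $n$ (as you can check directly from the SNR condition $|\mu^{(r)}-\mu^{(r-1)}|\sqrt{s(n-s)}/n\geq B$). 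So the bound you reach, $|\hat\tau_r-\tau_r|\leq s_r^*\leq 2nB^2/\Delta_r^2$ in your notation, is precisely what the paper's proof delivers; there is no further sharpening to perform and no need for a finer probabilistic treatment of the edge windows or a special tie-breaking rule.
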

	    \begin{proof}
	        For simplicity of presentation, we focus on the case where $n$ is a multiple of 4, so $\gamma = 1/2$. Define
	        \begin{align*}
	            I_0 &:= \{i: \mu_{i+n-1} = \mu_i\},\\
	            I_1 & := \biggl\{i: |\mu_{i+n-1}-\mu_i|\max_{r\in[\nu]} \sqrt{\frac{(\tau_r-i)(i+n-\tau_r)}{n^2}} \geq B\biggr\}.
	        \end{align*}
	        By~\Cref{lem:hypothesis_test_refine} and a union bound, the event
	        \[
	        \Omega = \bigl\{h_{\lambda^*}^{\mathrm{CUSUM}_*}(\boldsymbol{X}^*_{[i,i+n)}) = k,\text{ for all $i\in I_k$, $k=0,1$}\bigr\}
	                \]
	                has probability at least $1 - 2n^*\lfloor \log_2(n)\rfloor e^{-nB^2/24}$. We work on the event $\Omega$ henceforth. Denote $\Delta_r := 2B^2/|\mu^{(r)}-\mu^{(r-1)}|^2$.  Since $|\mu^{(r)}-\mu^{(r-1)}| > 2\sqrt{2}B$, we have $\Delta_r< n/4$. Note that for each $r\in [\nu]$, we have $\{i:\tau_{r-1} < i\leq \tau_r -n \text{ or } \tau_{r} < i\leq \tau_{r+1} -n\} \subseteq I_0$ and $\{i : \tau_r-n+\Delta_r < i \leq \tau_r -\Delta_r\}\subseteq I_1$. Consequently, $\bar L_i$ defined in~\Cref{alg:change_point_detection} is below the threshold $\gamma = 1/2$ for all $i \in (\tau_{r-1}+n/2, \tau_r-n/2] \cup (\tau_r+n/2, \tau_{r+1}-n/2]$, monotonically increases for $i\in (\tau_r-n/2, \tau_r-\Delta]$ and monotonically decreases for $i\in (\tau_r+\Delta, \tau_r+n/2]$ and is above the threshold $\gamma$ for $i \in (\tau_r-\Delta,\tau_r+\Delta]$. Thus, exactly one change-point, say $\hat\tau_r$, will be identified on $(\tau_{r-1}+n/2, \tau_{r+1}-n/2]$ and $\hat\tau_r = \argmax_{i\in(\tau_{r-1}+n/2, \tau_{r+1}-n/2]} \bar L_i \in (\tau_r-\Delta, \tau_r+\Delta]$ as desired. Since the above holds for all $r\in[\nu]$, the proof is complete.
	    \end{proof}
	    
	    Assuming that $\log(n^*) \asymp \log(n)$ and choosing $B$ to be of order $\sqrt{\log n}$, the above theorem shows that using the CUSUM-based change-point classifier $\psi = h_{\lambda^*}^{\mathrm{CUSUM}_*}$ in conjunction with~\Cref{alg:change_point_detection} allows for consistent estimation of both the number of locations of multiple change-points in the data stream. In fact, the rate of estimating each change-point, $2B^2/|\mu^{(r)}-\mu^{(r-1)}|^2$, is minimax optimal up to logarithmic factors \citep[see, e.g.][Proposition~6]{verzelen2020optimal}. An inspection of the proof of~\Cref{Thm:Localisation} reveals that the same result would hold for any $\psi$ for which the event $\Omega$ holds with high probability. In view of the representability of $h_{\lambda^*}^{\mathrm{CUSUM}_*}$ in the class of neural networks, one would intuitively expect that a similar theoretical guarantee as in~\Cref{Thm:Localisation} would be available to the  empirical risk minimiser in the corresponding neural network function class. However, the particular way in which we handle the generalisation error in the proof of~\Cref{thm:VC_Generalisation_bound_new} makes it difficult to proceed in this way, due to the fact that the distribution of the data segments obtained via sliding windows have complex dependence and no longer follow a common prior distribution $\pi_{0}$ used in~\Cref{thm:VC_Generalisation_bound_old}.

\section{Simulation and Result}\label{sec:simulation_and_result}
	\subsection{Simulation for Multiple Change-types}
	    In this section, we illustrate the numerical study for one-change-point but with multiple change-types: change in mean, change in slope and change in variance.
	    
	    The data set with change/no-change in mean is generated from \( P(n,\tau,\mu_{\mathrm{L}},\mu_{\mathrm{R}}) \).
	    We employ the model of change in slope from~\citet{fearnheadDetectingChangesSlope2019}, namely
	    \begin{equation*}
	        x_{t}=f_t+\xi_{t}=\begin{cases}
	            \phi_{0}+\phi_{1}t+\xi_{t}&\quad \text{if } 1\leq t\leq \tau \\
	            \phi_{0}+(\phi_{1}-\phi_{2})\tau+\phi_{2} t+\xi_{t}&\quad \tau+1\leq t\leq n,
	        \end{cases}
	    \end{equation*}
	    where \( \phi_{0},\phi_{1} \) and \( \phi_{2} \) are parameters that can guarantee the continuity of two pieces of linear function at time \( t=\tau \). We use the following model to generate the data set with change in variance.
	    \begin{equation*}
	        y_{t}=\begin{cases}
	            \mu+\varepsilon_{t}\quad \varepsilon_{t}\sim N(0,\sigma_{1}^{2}), & \text{ if } t \leq \tau \\
	            \mu+\varepsilon_{t}\quad \varepsilon_{t}\sim N(0,\sigma_{2}^{2}), & \text{ otherwise }
	        \end{cases}
	    \end{equation*}
	    where \( \sigma_{1}^{2}, \sigma_{2}^{2} \) are the variances of two Gaussian distributions.\ \( \tau \) is the change-point in variance. When \( \sigma_{1}^{2}= \sigma_{2}^{2} \), there is no-change in model. The labels of no change-point, change in mean only, change in variance only, no-change in variance and change in slope only are 0, 1, 2, 3, 4 respectively. For each label, we randomly generate \( N_{sub} \) time series. In each replication of \( N_{sub} \), we update these parameters: \( \tau, \mu_{\mathrm{L}}, \mu_{\mathrm{R}}, \sigma_{1}, \sigma_{2},\alpha_{1}, \phi_{1},\phi_{2}\). To avoid the boundary effect, we randomly choose \( \tau \) from the discrete uniform distribution \( U(n^{\prime}+1,n-n^{\prime}) \) in each replication, where \( 1\leq n^{\prime}< \lfloor n/2 \rfloor, n^{\prime}\in \mathbb{N} \). The other parameters are generated as follows:
	    \begin{itemize}
	        \item \(  \mu_{\mathrm{L}}, \mu_{\mathrm{R}}\sim U(\mu_{l},\mu_{u}) \) and \( \mu_{dl}\leq \left\vert  \mu_{\mathrm{L}}-\mu_{\mathrm{R}} \right\vert\leq\mu_{du}\), where \( \mu_{l},\mu_{u} \) are the lower and upper bounds of  \( \mu_{\mathrm{L}},\mu_{\mathrm{R}} \).\ \( \mu_{dl},\mu_{du} \) are the lower and upper bounds of  \( \left\vert  \mu_{\mathrm{L}}-\mu_{\mathrm{R}} \right\vert \).
	        \item \(  \sigma_{1}, \sigma_{2}\sim U(\sigma_{l},\sigma_{u}) \) and \( \sigma_{dl}\leq \left\vert  \sigma_{1}-\sigma_{2} \right\vert\leq\sigma_{du}\), where \( \sigma_{l},\sigma_{u}\) are the lower and upper bounds of  \( \sigma_{1},\sigma_{2} \).\ \( \sigma_{dl},\sigma_{du} \) are the lower and upper bounds of  \( \left\vert  \sigma_{1}-\sigma_{2} \right\vert \).
	        \item \(  \phi_{1}, \phi_{2}\sim U(\phi_{l},\phi_{u}) \) and \( \phi_{dl}\leq \left\vert  \phi_{1}-\phi_{2} \right\vert\leq\phi_{du}\), where \( \phi_{l},\phi_{u}\) are the lower and upper bounds of  \( \phi_{1},\phi_{2} \).\ \( \phi_{dl},\phi_{du} \) are the lower and upper bounds of  \( \left\vert  \phi_{1}-\phi_{2} \right\vert \).
	    \end{itemize}
	    Besides, we let \(\mu=0\), \( \phi_{0}=0\) and the noise follows normal distribution with mean 0. For flexibility, we let the noise variance of change in mean and slope be \(  0.49 \) and \( 0.25 \) respectively. Both Scenarios 1 and 2 defined below use the neural network architecture displayed in~\Cref{fig:Arch-ResNet.eps}.
	    
	    \textbf{Benchmark}.~\citet{amini2017} reviewed the methodologies for change-point detection in different types. To be simple, we employ the Narrowest-Over-Threshold (NOT)~\citep{baranowskiNarrowestoverthresholdDetectionMultiple2019} and single variance change-point detection~\citep{chen2012parametric} algorithms to detect the change in mean, slope and variance respectively. These two algorithms are available in \textbf{R} packages:~\href{https://CRAN.R-project.org/package=not}{\textbf{not}} and~\href{https://CRAN.R-project.org/package=changepoint}{\textbf{changepoint}}. The oracle likelihood based tests $\text{LR}^{\mathrm{oracle}}$ means that we pre-specified whether we are testing for change in mean, variance or slope. For the construction of adaptive likelihood-ratio based test $\text{LR}^{\mathrm{adapt}}$, we first separately apply 3 detection algorithms of change in mean, variance and slope to each time series, then we can compute 3 values of Bayesian information criterion (BIC) for each change-type based on the results of change-point detection. Lastly, the corresponding label of minimum of BIC values is treated as the predicted label.
	    
	    \textbf{Scenario 1: Weak SNR}.
	    Let \( n=400 \), \( N_{sub}=2000 \) and \( n^{\prime}=40\). The data is generated by the parameters settings in~\Cref{tab:The_Parameters_for_Weak_and_Strong_Signals}. We use the model architecture in~\Cref{fig:Arch-ResNet.eps} to train the classifier. The learning rate is 0.001, the batch size is 64, filter size in convolution layer is 16, the kernel size is \( (3, 30) \), the epoch size is 500. The transformations are  (\( x, x^2\)). We also use the inverse time decay technique to dynamically reduce the learning rate. The result which is displayed in~\Cref{tab:The accuracy of LR and NN} of main text shows that the test accuracy of  $\text{LR}^{\mathrm{oracle}}$, $\text{LR}^{\mathrm{adapt}}$ and $\text{NN}$  based on 2500 test data sets are  0.9056, 0.8796 and 0.8660 respectively.
	    \begin{table}
	        \caption{\label{tab:The_Parameters_for_Weak_and_Strong_Signals}The parameters for weak and strong signal-to-noise ratio (SNR).}
	        \centering
	        \fbox{%
	            \begin{tabular}{*{5}c}
	                Chang in mean & \( \mu_{l} \) & \( \mu_{u} \)& \( \mu_{dl} \)& \( \mu_{du} \) \\ \hline
	                Weak SNR & -5 & 5 & 0.25 & 0.5\\
	                Strong SNR  & -5 & 5 & 0.6 & 1.2\\ \hline
	                Chang in variance & \( \sigma_{l} \) & \( \sigma_{u} \)& \( \sigma_{dl} \)& \( \sigma_{du} \) \\ \hline
	                Weak SNR & 0.3 & 0.7 & 0.12 & 0.24\\
	                Strong SNR  & 0.3 & 0.7 & 0.2 & 0.4 \\ \hline
	                Change in slope & \( \phi_{l} \) & \( \phi_{u} \)& \( \phi_{dl} \)& \( \phi_{du} \) \\ \hline
	                Weak SNR & -0.025 & 0.025 & 0.006 & 0.012\\
	                Strong SNR  & -0.025 & 0.025 &  0.015 & 0.03 \\ \hline
	            \end{tabular}}
	    \end{table}
	    
	    \textbf{Scenario 2: Strong SNR}. The parameters for generating strong-signal data are listed in~\Cref{tab:The_Parameters_for_Weak_and_Strong_Signals}. The other hyperparameters are same as in Scenario 1.\ The test accuracy of  $\text{LR}^{\mathrm{oracle}}$, $\text{LR}^{\mathrm{adapt}}$ and $\text{NN}$  based on 2500 test data sets are  0.9924, 0.9260 and 0.9672 respectively. We can see that the neural network-based approach achieves higher classification accuracy than the adaptive likelihood based method.
	\subsection{Some Additional Simulations}\label{subsec:Some_Additional_Simulations}
	    
	    \medskip
	    \subsubsection{Simulation for simultaneous changes}\label{subsubsec:Simulation_for_simultaneous_changes}
	        
	        In this simulation, we compare the classification accuracies of likelihood-based classifier and NN-based classifier under the circumstance of simultaneous changes. For simplicity, we only focus on two classes: no change-point (Class 1) and change in mean and variance at a same change-point (Class 2). The change-point location \( \tau \) is randomly drawn from \(\mathrm{Unif}\{40,\ldots, n-41\} \) where \( n = 400 \) is the length of time series. Given \( \tau \), to generate the data of Class 2, we use the parameter settings of change in mean and change in variance in~\Cref{tab:The_Parameters_for_Weak_and_Strong_Signals} to randomly draw \( \mu_{\mathrm{L}}, \mu_{\mathrm{R}} \) and \( \sigma_{1}, \sigma_{2} \) respectively. The data before and after the change-point \( \tau \) are generated from \( N(\mu_{\mathrm{L}},\sigma_{1}^{2}) \) and \( N(\mu_{\mathrm{R}},\sigma_{2}^{2}) \) respectively. To generate the data of Class 1, we just draw the data from \( N(\mu_{\mathrm{L}},\sigma_{1}^{2}) \). Then, we repeat each data generation of Class 1 and 2 \( 2500 \) times as the training dataset. The test dataset is generated in the same procedure as the training dataset, but the testing size is 15000. We use two classifiers: likelihood-ratio (LR) based  classifier~\citep[][p.59]{chen2012parametric} and a 21-residual-block neural network (NN) based classifier displayed in~\Cref{fig:Arch-ResNet.eps} to evaluate the classification accuracy of simultaneous change v.s.\ no change. The result are displayed in~\Cref{tab:The accuracy of LR and NN revision}. We can see that under weak SNR, the NN has a good performance than LR-based method while it performs as well as the LR-based method under strong SNR.\
	        \begin{table}
	            \caption{\label{tab:The accuracy of LR and NN revision} Test classification accuracy of likelihood-ratio (LR) based classifier~\citep[][p.59]{chen2012parametric} and our residual neural network (NN) based classifier with 21 residual blocks for setups with weak and strong signal-to-noise ratios (SNR). Data are generated as a mixture of no change-point (Class 1), change in mean and variance at a same change-point (Class 2). We report the true positive rate of each class and the accuracy in the last row.  The optimal threshold value of LR is chosen by the grid search method on the training dataset.}
	            \centering
	            \fbox{
	                \begin{tabular}{*{5}c}
	                    \multirow{2}{*}{}&  \multicolumn{2}{c}{Weak SNR} &  \multicolumn{2}{c}{Strong SNR}\\
	                    \cmidrule(lr){2-3}\cmidrule(lr){4-5}
	                    & LR  & NN &  LR & NN \\
	                    \hline
	                    Class 1 & 0.9823 & 0.9668 &  1.0000 & 0.9991 \\
	                    Class 2	&   0.8759 & 0.9621 &   0.9995 & 0.9992\\
	                    Accuracy & 	 0.9291 & 0.9645 & 0.9997 & 0.9991
	                \end{tabular}}
	        \end{table}
	    \subsubsection{Simulation for heavy-tailed noise}\label{subsubsec:Simulation_for_heavy_tail_noise}
	        In this simulation, we compare the performance of Wilcoxon change-point test~\citep{dehling2013changepoint}, CUSUM, simple neural network $\mathcal{H}_{L,\boldsymbol{m}}$ as well as truncated $\mathcal{H}_{L,\boldsymbol{m}}$ for heavy-tailed noise. Consider the model: \( X_{i}=\mu_{i}+\xi_{i},\quad i\geq 1, \) where \( (\mu_{i})_{i\geq 1} \) are signals and \( (\xi_{i})_{i\geq 1} \) is a stochastic process. To test the null hypothesis
	        \begin{equation*}
	            \mathbb{H}: \mu_{1}=\mu_{2}=\cdots=\mu_{n}
	        \end{equation*} against the alternative
	        \begin{equation*}
	            \mathbb{A}:~\text{There exists } 1\leq k \leq n-1~\text{such that } \mu_{1}=\cdots=\mu_{k}\neq \mu_{k+1}=\cdots=\mu_{n}.
	        \end{equation*}
	        \citet{dehling2013changepoint} proposed the so-called Wilcoxon type of cumulative sum statistic
	        \begin{equation}\label{eq:wilcoxon_sum}
	            T_{n}\coloneqq \max_{1\leq k<n}{\left\lvert\frac{2\sqrt{k(n-k)}}{n} \frac{1}{n^{3/2}}\sum_{i=1}^{k}\sum_{j=k+1}^{n}\left( \mathbf{1}_{\{X_{i}<X_{j}\}} -1/2\right) \right\rvert}
	        \end{equation}
	        to detect the change-point in time series with outlier or heavy tails. Under the null hypothesis \( \mathbb{H} \), the limit distribution of \( T_{n} \)\footnote{The definition of \( T_{n} \) in~\citet[Theorem 3.1]{dehling2013changepoint} does not include \( 2\sqrt{k(n-k)}/n \). However, the repository of the \textbf{R} package \textbf{robts}~\citep{R:robts} normalises the Wilcoxon test by this item, for details see function \textbf{wilcoxsuk} in~\href{https://r-forge.r-project.org/scm/viewvc.php/pkg/src/wilcox.c?view=markup\&root=robts}{here}. In this simulation, we adopt the definition of~\eqref{eq:wilcoxon_sum}.} can be approximately by the supreme of standard Brownian bridge process \( (W^{(0)}(\lambda))_{0\leq\lambda\leq 1} \) up to a scaling factor~\citep[][Theorem 3.1]{dehling2013changepoint}. In our simulation, we choose the optimal thresh value based on the training dataset by using the grid search method.
	        
	        The truncated simple neural network means that we truncate the data by the \( z \)-score in data preprocessing step, i.e.\ given vector \( \boldsymbol{x}= ( x_{1},x_{2},\ldots,x_{n} )^{\top} \), then \( x_{i}[{\left\lvert x_{i}-\bar{x} \right\rvert}>Z\sigma_{x}]=\bar{x}+\text{sgn}(x_{i}-\bar{x})Z\sigma_{x} \), \( \bar{x} \) and \( \sigma_{x} \) are the mean and standard deviation of \( \boldsymbol{x} \).
	        
	        The training dataset is generated by using the same parameter settings of~\Cref{fig:simulation_result}(d) of the main text. The result of misclassification error rate (MER) of each method is reported in~\Cref{fig:S3WilcoxonTruncation.eps}. We can see that truncated simple neural network has the best performance. As expected, the Wilcoxon based test has better performance than the simple neural network based tests. However, we would like to mention that the main focus of~\Cref{fig:simulation_result} of the main text is to demonstrate the point that simple neural networks can replicate the performance of CUSUM tests. Even though, the prior information of heavy-tailed noise is available, we still encourage the practitioner to use simple neural network by adding the \( z \)-score truncation in data preprocessing step.
	        \begin{figure}[htbp]
	            \centering
	            \includegraphics[width=0.7\textwidth]{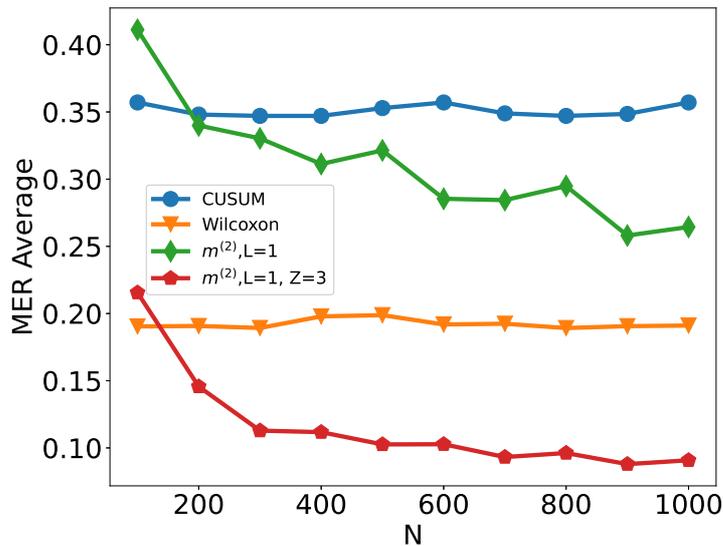}
	            \caption{Scenario S3 with Cauchy noise by adding Wilcoxon type of change-point detection method~\citep{dehling2013changepoint} and simple neural network with truncation in data preprocessing. The average misclassification error rate (MER) is computed on a test set of size $N_{\mathrm{test}}=15000$,  against training sample size $N$ for detecting the existence of a change-point on data series of length $n=100$. We compare the performance of the CUSUM test, Wilcoxon test, $\mathcal{H}_{1,m^{(2)}}$ and $\mathcal{H}_{1,m^{(2)}}$ with \( Z=3 \) where $m^{(2)} = 2n-2$ and \( Z=3 \) means the truncated \(z\)-score, i.e.\ given vector \( \protect \boldsymbol{x}= ( x_{1},x_{2},\ldots,x_{n} )^{\top} \), then \( x_{i}[{\left|x_{i}-\bar{x} \right|}>Z\sigma_{x}]=\bar{x}+\mathrm{sgn}(x_{i}-\bar{x})Z\sigma_{x} \), \( \bar{x} \) and \( \sigma_{x} \) are the mean and standard deviation of \( \protect \boldsymbol{x} \).}\label{fig:S3WilcoxonTruncation.eps}
	        \end{figure}
	    \subsubsection{Robustness Study}\label{subsubsec:Robustness_Study}
	        This simulation is an extension of numerical study of~\Cref{sec:Simulation_Study} in main text. We trained our neural network using training data generated under scenario S1 with $\rho_t=0$ (i.e.\ corresponding to~\Cref{fig:simulation_result}(a) of the main text), but generate the test data under settings corresponding to~\Cref{fig:simulation_result}(a, b, c, d). In other words, apart the top-left panel, in the remaining panels of~\Cref{fig:simulation_resultA2Others}, the trained network is misspecified for the test data. We see that the neural networks continue to work well in all panels, and in fact have performance similar to those in~\Cref{fig:simulation_result}(b, c, d) of the main text. This indicates that the trained neural network has likely learned features related to the change-point rather than any distributional specific artefacts.
	        \begin{figure}[htbp]
	            \begin{minipage}{1\linewidth}
	                \makebox[.5\linewidth]{\includegraphics[width=.45\linewidth, height=0.36\textwidth]{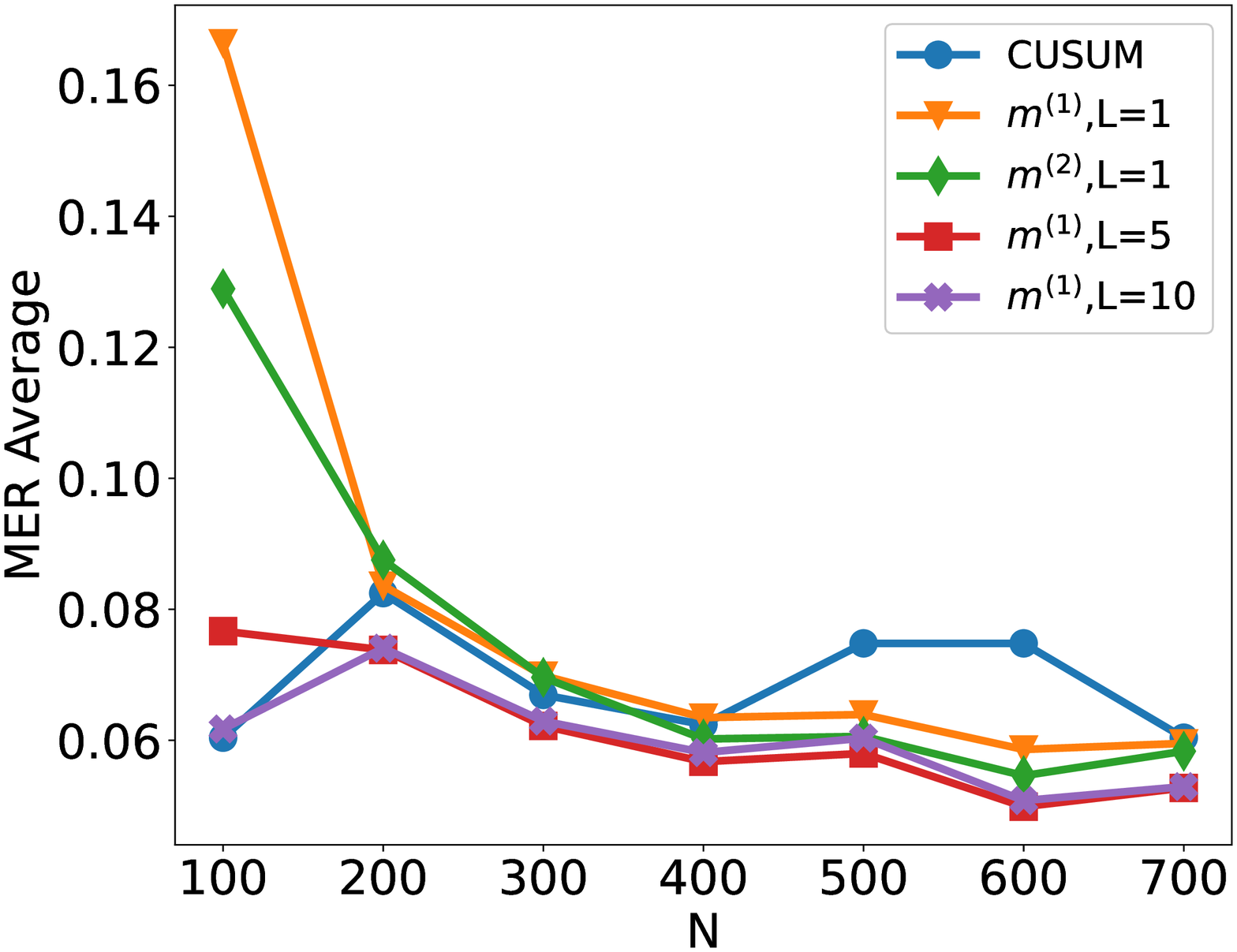}}%
	                \makebox[.5\linewidth]{\includegraphics[width=.45\linewidth, height=0.36\textwidth]{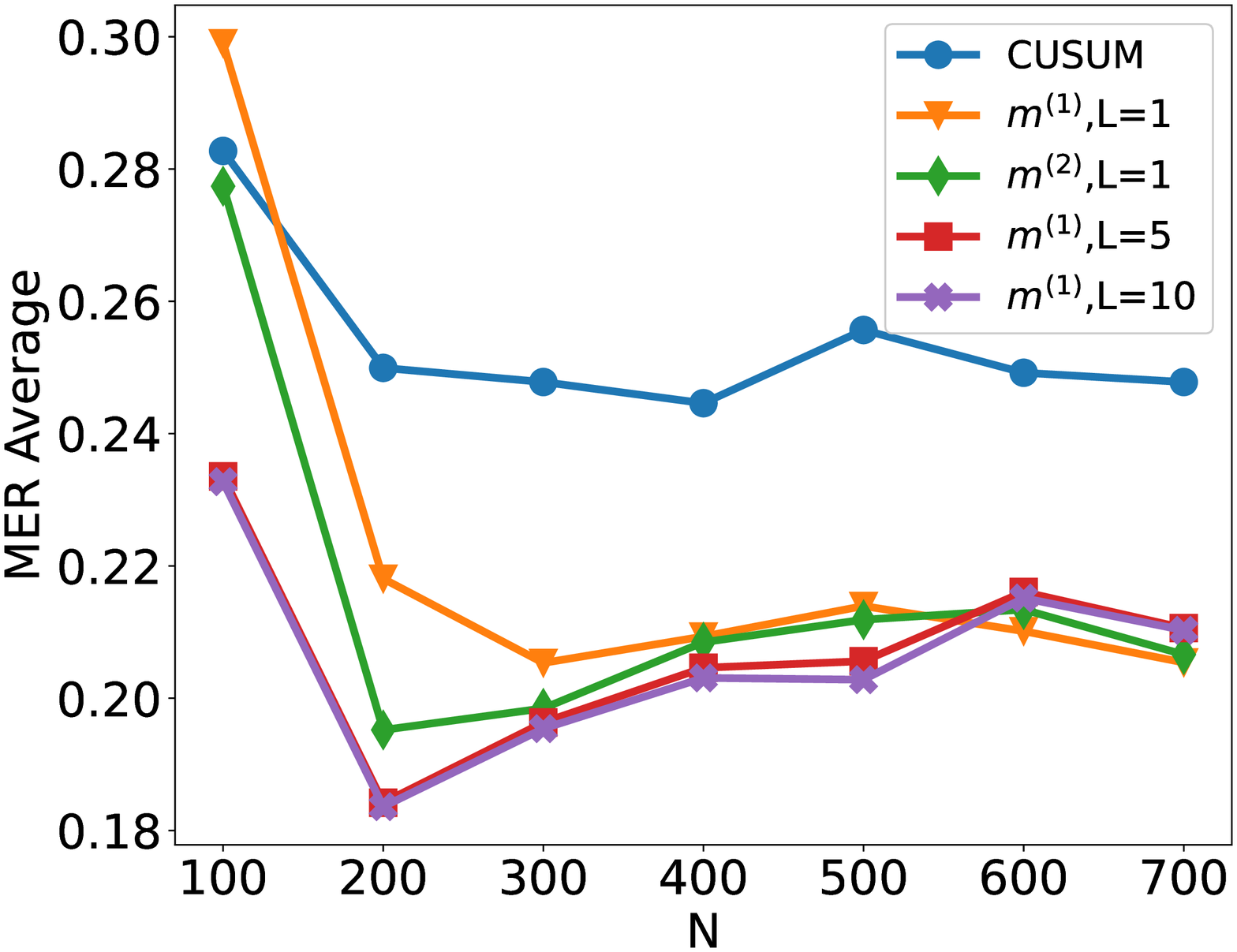}}
	                \makebox[.5\linewidth]{\small (a) Trained S1 (\( \rho_{t}= 0 \)) \( \to \) S1 (\( \rho_{t}= 0 \))}%
	                \makebox[.5\linewidth]{\small (b)Trained S1 (\( \rho_{t}= 0 \)) \( \to \) S1$^{\prime}$ (\( \rho_{t}= 0.7 \))}%
	                
	                \medskip
	                
	                \makebox[.5\linewidth]{\includegraphics[width=.45\linewidth, height=0.36\textwidth]{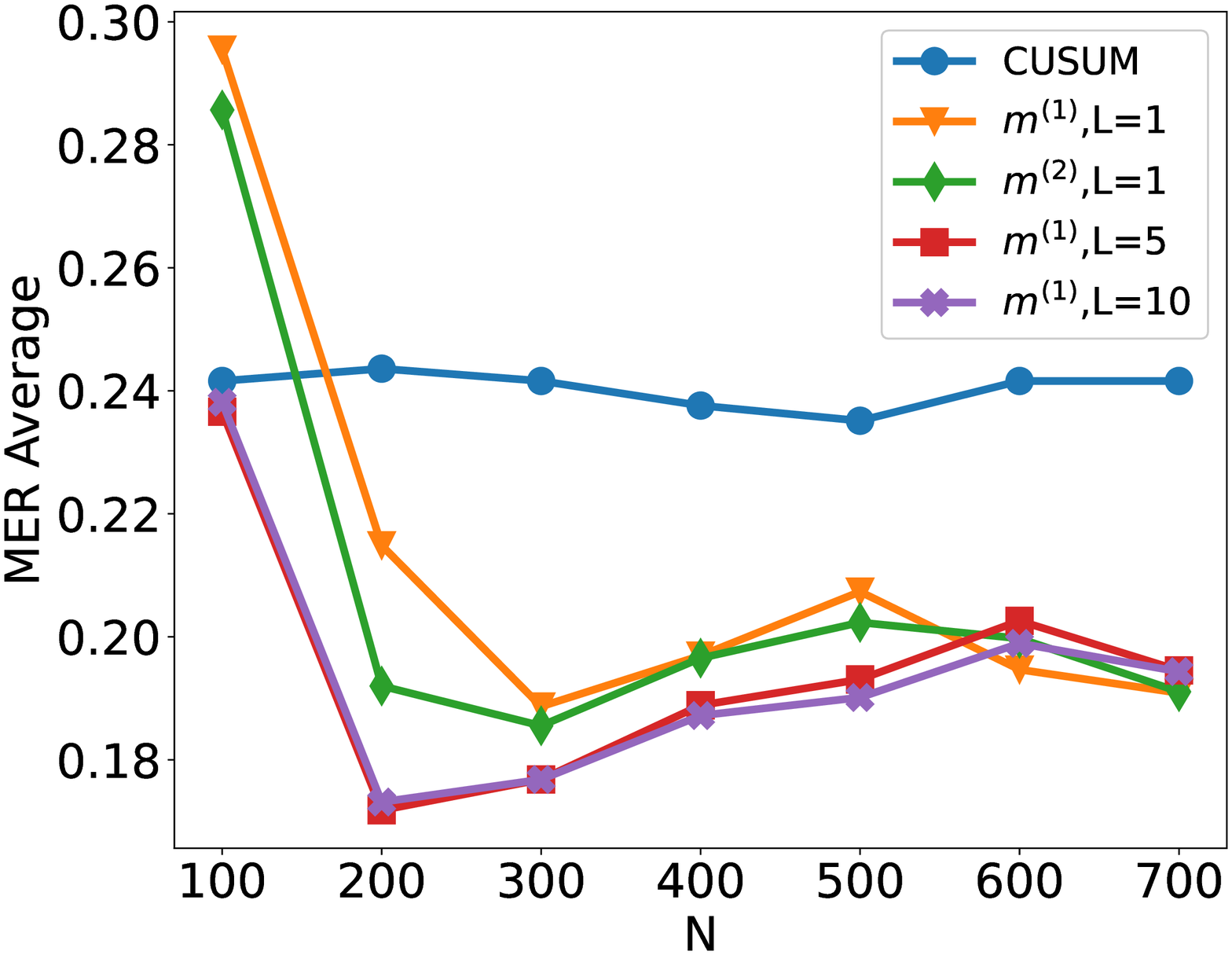}}
	                \makebox[.5\linewidth]{\includegraphics[width=.45\linewidth, height=0.36\textwidth]{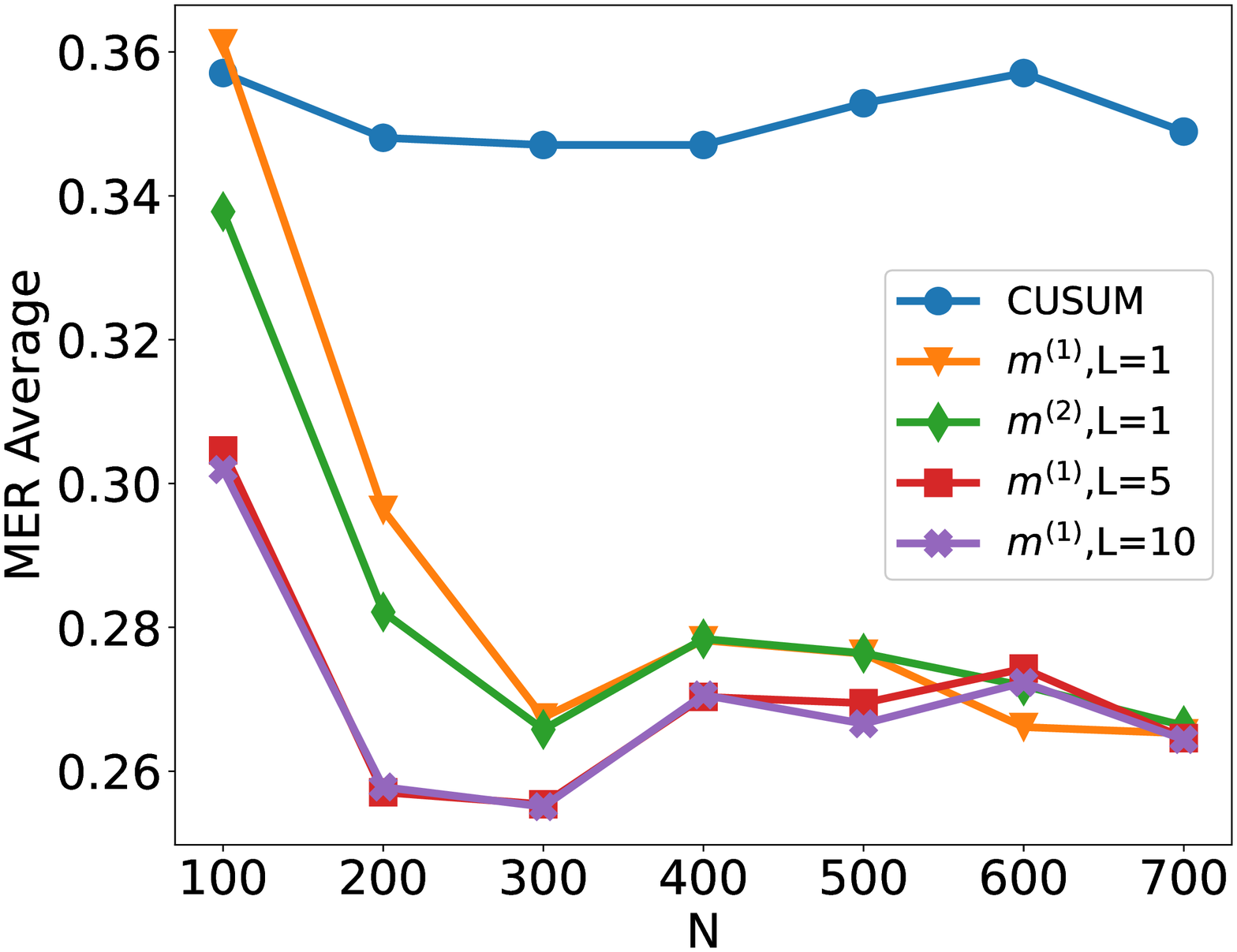}}
	                \makebox[.5\linewidth]{\small (c) Trained S1 (\( \rho_{t}= 0 \)) \( \to \) S2}%
	                \makebox[.5\linewidth]{\small (d) Trained S1 (\( \rho_{t}= 0 \)) \( \to \) S3}%
	            \end{minipage}
	            \caption{Plot of the test set MER, computed on a test set of size $N_{\mathrm{test}}=30000$,  against training sample size $N$ for detecting the existence of a change-point on data series of length $n=100$. We compare the performance of the CUSUM test and neural networks from four function classes: $\mathcal{H}_{1,m^{(1)}}$,$\mathcal{H}_{1,m^{(2)}}$, $\mathcal{H}_{5,m^{(1)}\mathbf{1}_{5}}$ and $\mathcal{H}_{10,m^{(1)}\mathbf{1}_{10}}$ where $m^{(1)} = 4\lfloor\log_2(n)\rfloor$ and $m^{(2)} = 2n-2$ respectively under scenarios S1, S1$^\prime$, S2 and S3 described in~\Cref{sec:Simulation_Study}. The subcaption ``A \( \to \) B'' means that we apply the trained classifier ``A'' to target testing dataset ``B''.}\label{fig:simulation_resultA2Others}
	        \end{figure}
	        
	    \subsubsection{Simulation for change in autocorrelation}\label{subsubsec:Simulation_for_change_in_autocorrelation}
	        
	        In this simulation, we discuss how we can use neural networks to recreate test statistics for various types of changes. For instance, if the data follows an AR(1) structure, then changes in autocorrelation can be handled by including transformations of the original input of the form $(x_{t}x_{t+1})_{t=1,\ldots,n-1}$. On the other hand, even if such transformations are not supplied as the input, a deep neural network of suitable depth is able to approximate these transformations and consequently successfully detect the change~\citep[Lemma~A.2]{schmidt-hieberNonparametricRegressionUsing2020}.  This is illustrated in~\Cref{fig:AR}, where we compare the performance of neural network based classifiers of various depths constructed with and without using the transformed data as inputs. 
	        \begin{figure}[htbp]
	            \begin{minipage}{1\linewidth}
	                \makebox[.5\linewidth]{\includegraphics[width=.45\linewidth]{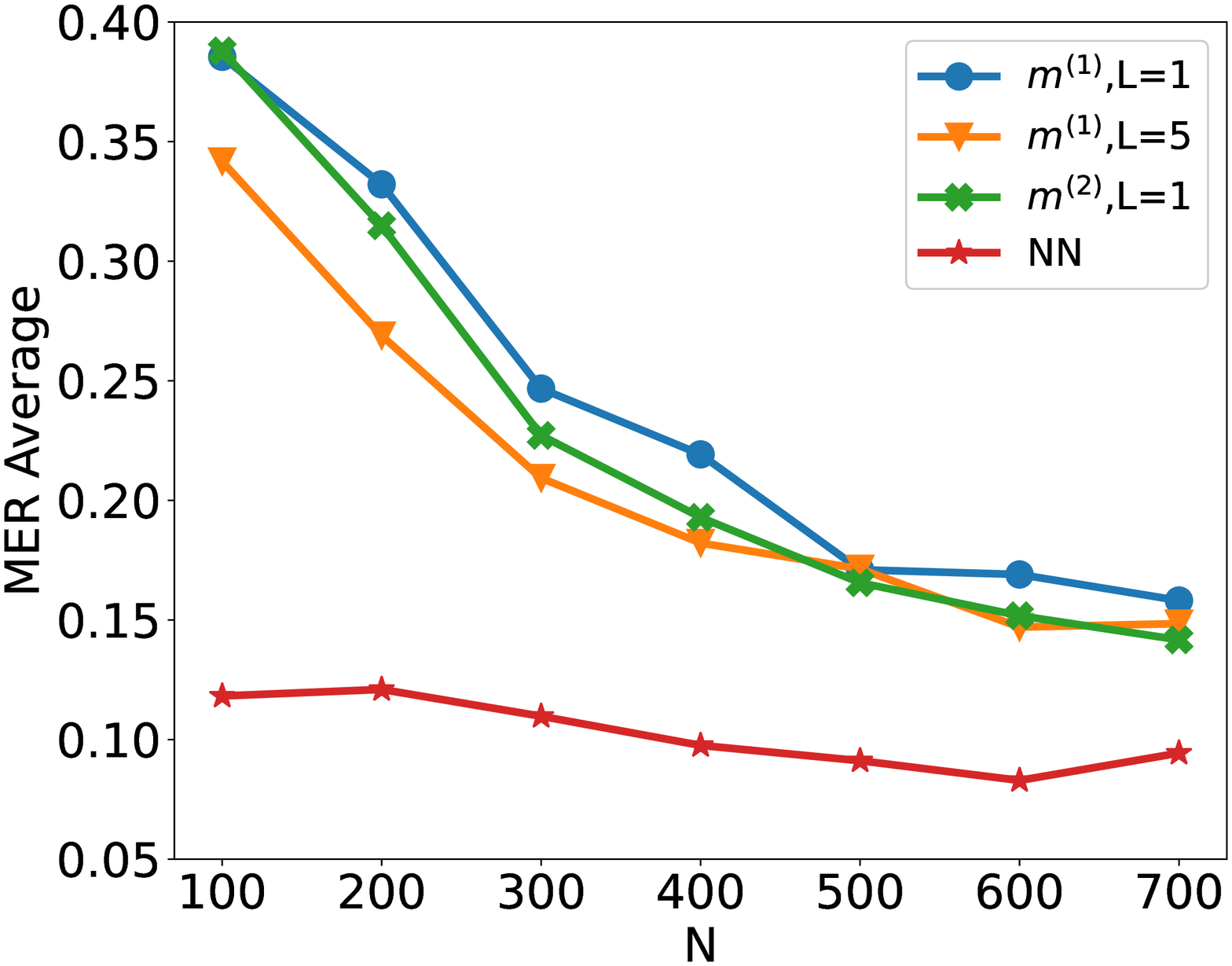}}%
	                \makebox[.5\linewidth]{\includegraphics[width=.45\linewidth]{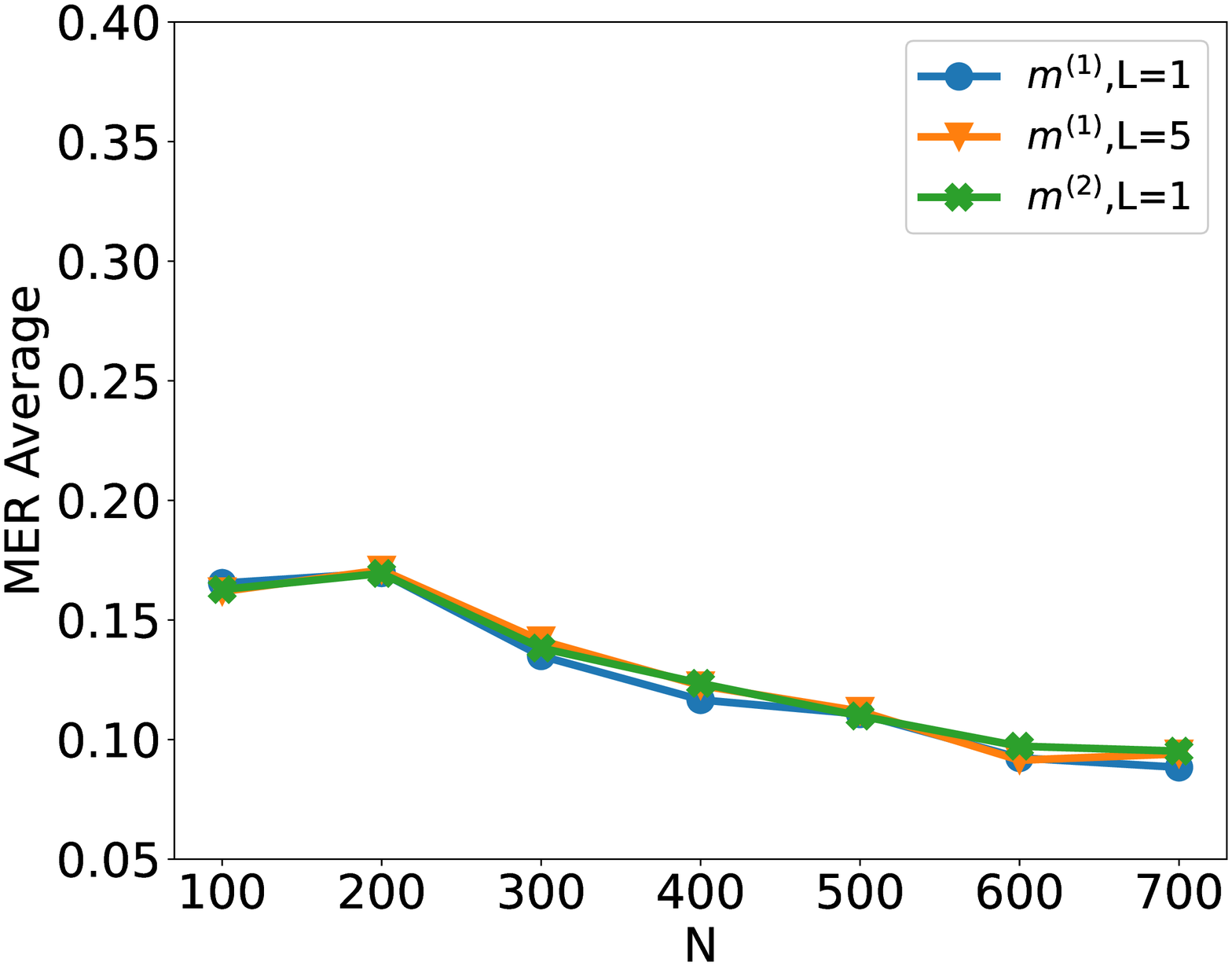}}
	                \makebox[.5\linewidth]{\small (a) Original Input}%
	                \makebox[.5\linewidth]{\small (b) Original and $x_{t}x_{t+1}$ Input}%
	            \end{minipage}
	            \caption{Plot of the test set MER, computed on a test set of size $N_{\mathrm{test}}=30000$,  against training sample size $N$ for detecting the existence of a change-point on data series of length $n=100$. We compare the performance of  neural networks from four function classes: $\mathcal{H}_{1,m^{(1)}}$,$\mathcal{H}_{1,m^{(2)}}$, $\mathcal{H}_{5,m^{(1)}\mathbf{1}_{5}}$ and neural network with 21 residual blocks where $m^{(1)} = 4\lfloor\log_2(n)\rfloor$ and $m^{(2)} = 2n-2$ respectively.  The change-points are randomly chosen from \( \mathrm{Unif}\{10,\ldots,89\} \). Given change-point $\tau$, data are generated from the autoregressive model $x_t = \alpha_t x_{t-1} + \epsilon_t$ for $\epsilon_t\stackrel{\mathrm{iid}}{\sim} N(0, 0.25^2)$ and $\alpha_t = 0.2\mathbf{1}_{\{t<\tau\}} + 0.8\mathbf{1}_{\{t\geq \tau\}}$.}\label{fig:AR}
	        \end{figure}
	    
	    \subsubsection{Simulation on change-point location estimation}
	        Here, we describe simulation results on the performance of change-point location estimator constructed using a combination of simple neural network-based classifier and~\Cref{alg:change_point_detection} from the main text.   Given a sequence of length \(n^{\prime}= 2000 \), we draw \( \tau\sim\text{Unif}\{750,\ldots,1250\} \). Set \( \mu_{L}=0 \) and draw \( \mu_{R}|\tau\) from 2 different uniform distributions: \( \text{Unif}([-1.5b,-0.5b]\cup [0.5b, 1.5b])  \) (Weak) and \( \text{Unif}([-3b,-b]\cup [b, 3b])  \) (Strong), where \( b\coloneqq\sqrt{\frac{8n^{\prime}\log(20n^{\prime})}{\tau(n^{\prime}-\tau)}}\) is chosen in line with~\Cref{lem:hypothesis_test} to ensure a good range of signal-to-noise ratio. We then generate $\boldsymbol{x} = (\mu_{\mathrm{L}}\mathbbm{1}_{\{t\leq \tau\}} + \mu_{\mathrm{R}}\mathbbm{1}_{\{t> \tau\}} + \varepsilon_{t})_{t\in[n^{\prime}]}$, with the noise $\boldsymbol{\varepsilon} = (\varepsilon_{t})_{t\in[n^{\prime}]}\sim N_{n'}(0,I_{n'})$. We then draw independent copies $\boldsymbol{x}_1,\ldots,\boldsymbol{x}_{N'}$ of $\boldsymbol{x}$. For each \( \boldsymbol{x}_{k} \), we randomly choose 60 segments with length \( n\in\{300, 400, 500, 600\} \), the segments which include \( \tau_{k} \) are labelled `1', others are labelled `0'. The training dataset size is \( N=60N^{\prime} \) where \( N^{\prime}=500\). We then draw another $N_{\text{test}}=3000$ independent copies of $\boldsymbol{x}$ as our test data for change-point location estimation. We study the performance of change-point location estimator produced by using~\Cref{alg:change_point_detection} together with a single-layer neural network, and compare it with the performance of CUSUM, MOSUM and Wilcoxon statistics-based estimators.  As we can see from the~\Cref{fig:rmse}, under Gaussian models where CUSUM is known to work well, our simple neural network-based procedure is competitive. On the other hand, when the noise is heavy-tailed, our simple neural network-based estimator greatly outperforms CUSUM-based estimator.
	        \begin{figure}[htbp]
	            \begin{minipage}{1\linewidth}
	                \makebox[.5\linewidth]{\includegraphics[width=.45\linewidth]{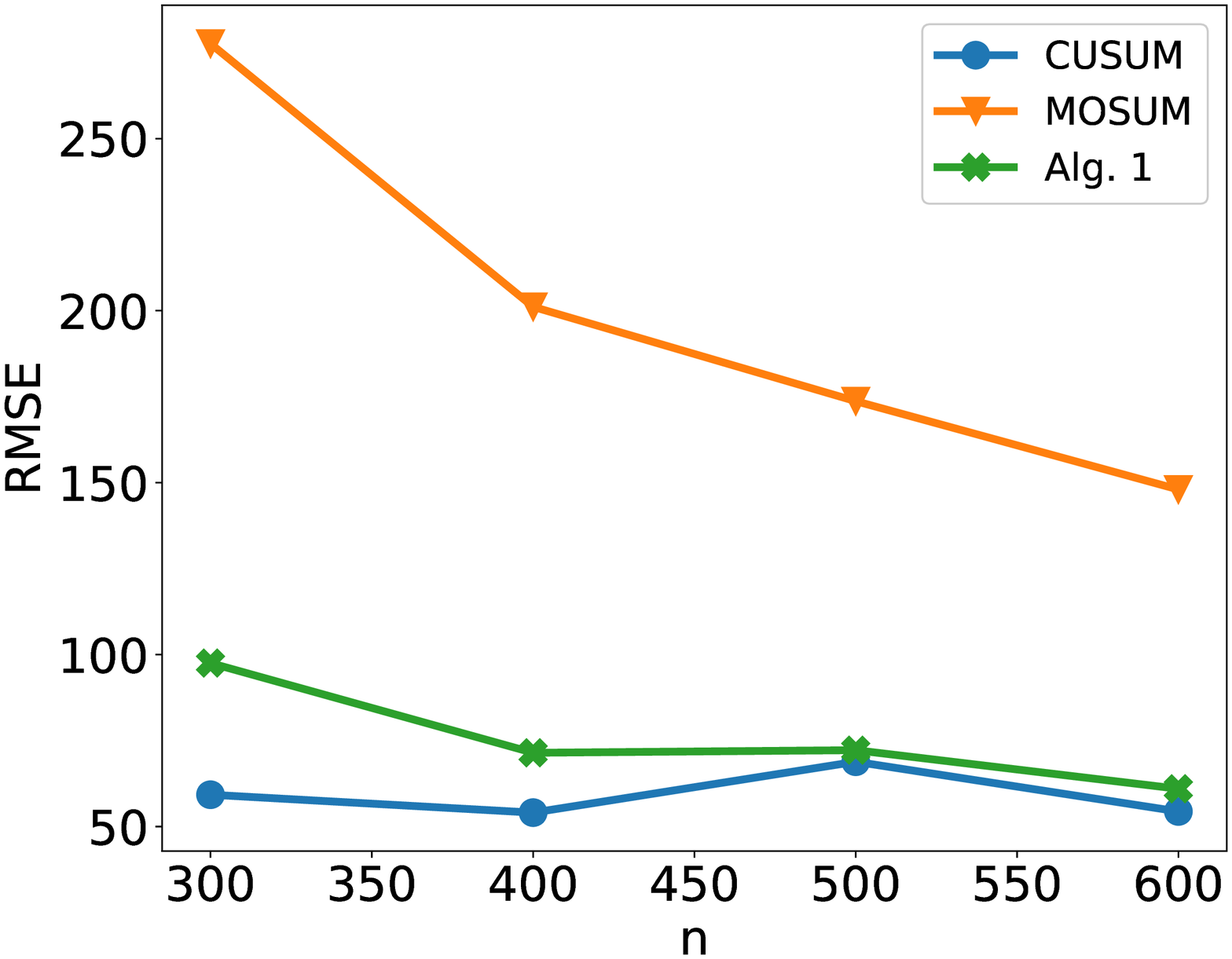}}%
	                \makebox[.5\linewidth]{\includegraphics[width=.45\linewidth]{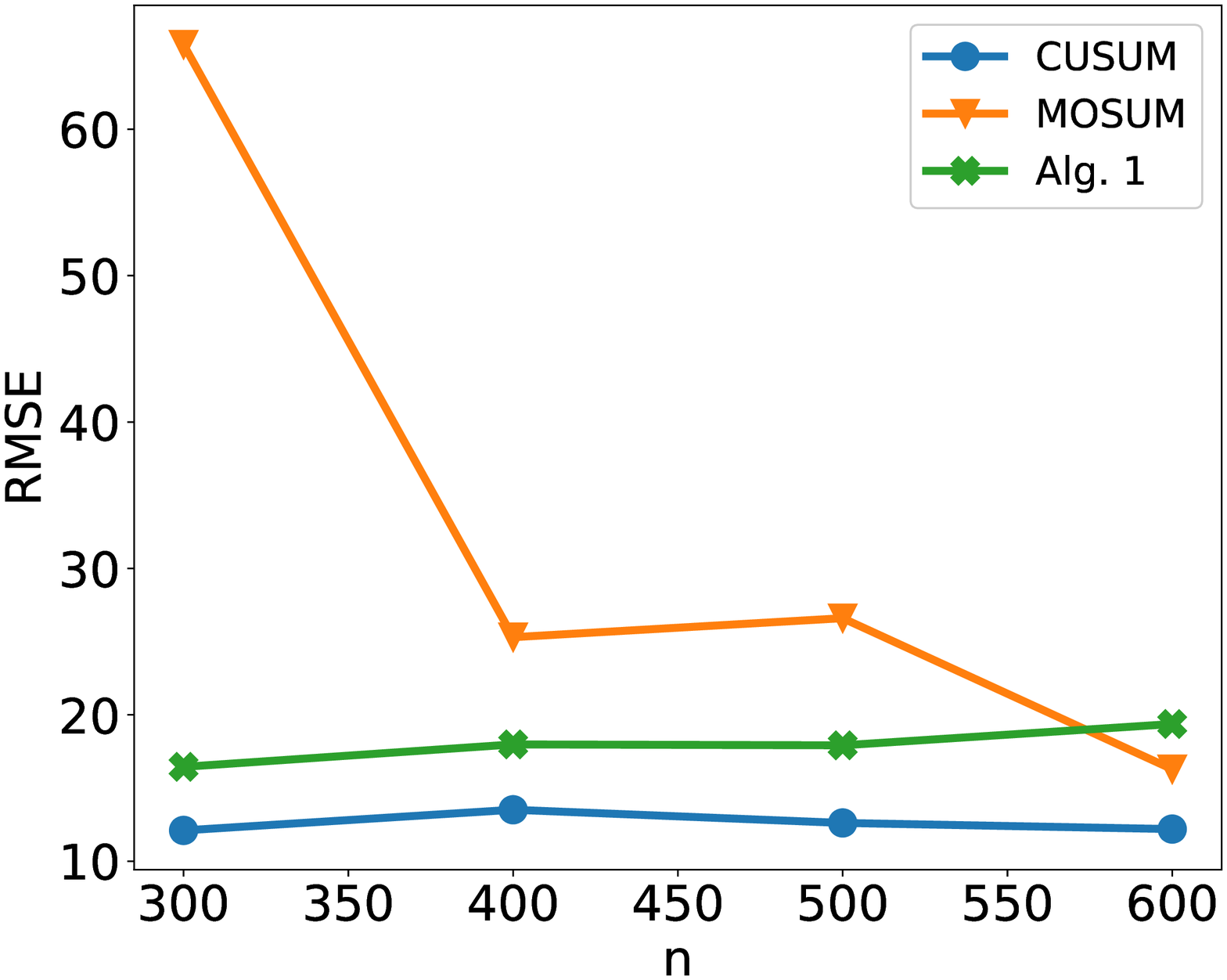}}
	                \makebox[.5\linewidth]{\small (a) S1  with \( \rho_{t}=0 \), weak SNR}%
	                \makebox[.5\linewidth]{\small (b) S1  with \( \rho_{t}=0 \), strong SNR}%
	                
	                \medskip
	                \makebox[.5\linewidth]{\includegraphics[width=.45\linewidth]{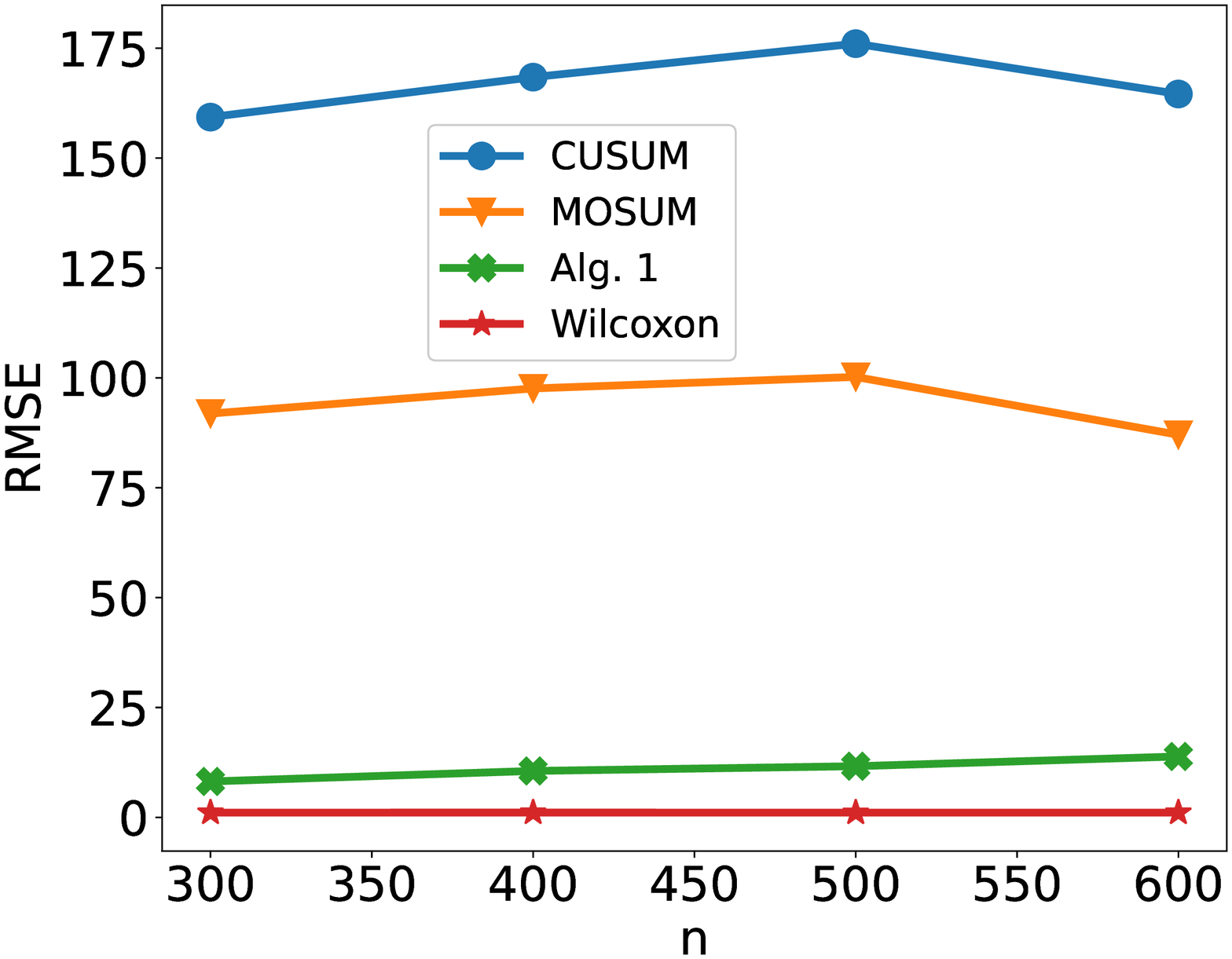}}%
	                \makebox[.5\linewidth]{\includegraphics[width=.45\linewidth]{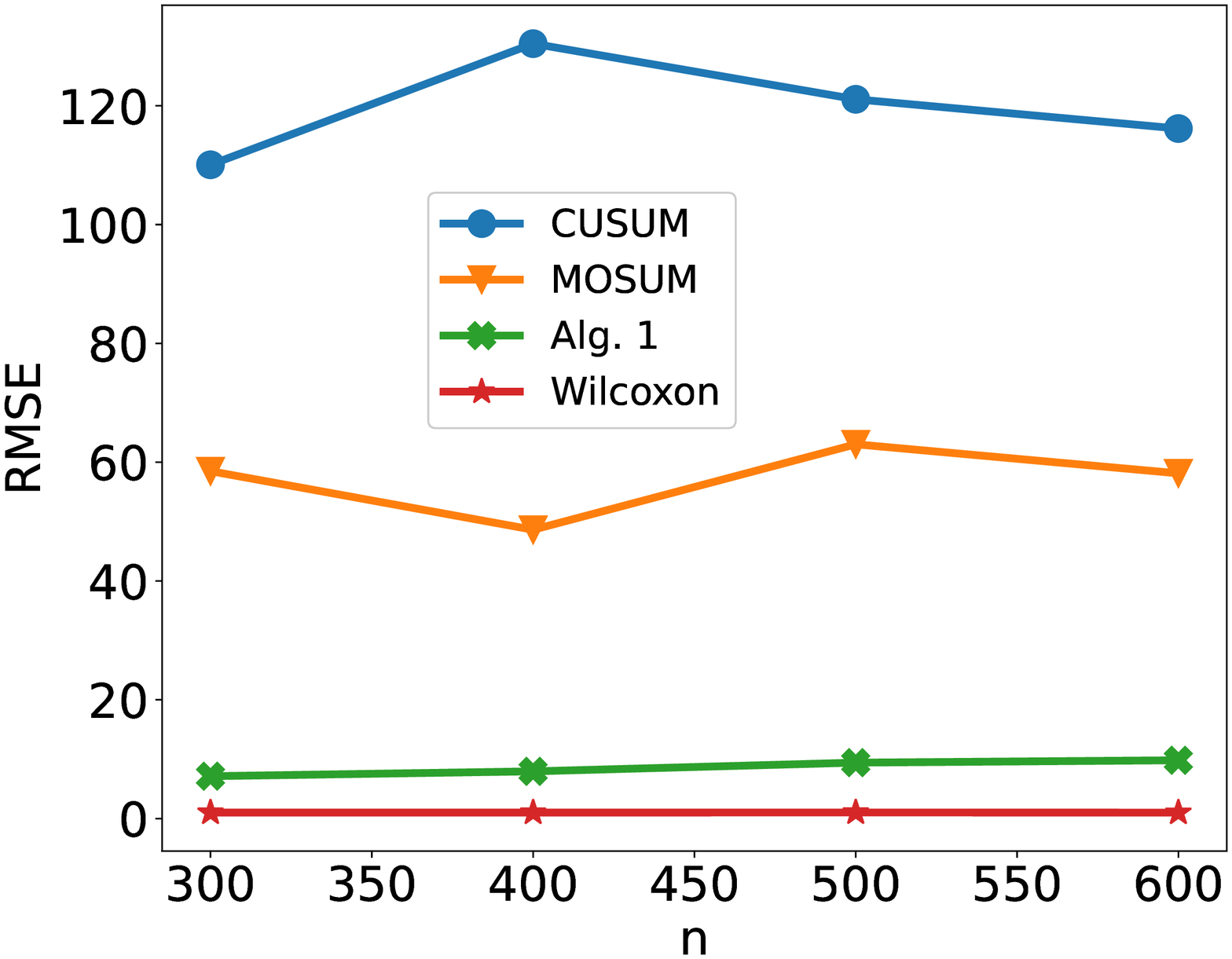}}
	                \makebox[.5\linewidth]{\small (c) S3, weak SNR}%
	                \makebox[.5\linewidth]{\small (d) S3, strong SNR}%
	            \end{minipage}
	            \caption{Plot of the root mean square error (RMSE)  of change-point estimation (S1 with \( \rho_{t}=0 \) and S3), computed on a test set of size $N_{\text{test}}=3000$,  against bandwidth $n$ for detecting the existence of a change-point on data series of length $n^{*}=2000$. We compare the performance of the change-point detection by CUSUM, MOSUM,~\Cref{alg:change_point_detection} and Wilcoxon (only for S3) respectively. The RMSE here is defined by \(\sqrt{1/N \sum_{i=1}^{N}(\hat{\tau}_{i}-\tau_{i})^{2}} \) where \( \hat{\tau}_{i} \) is the estimator of change-point for the \( i \)-th observation and \( \tau_{i} \) is the true change-point. The weak and strong signal-to-noise ratio (SNR) correspond to \( \mu_{R}|\tau\sim\text{Unif}([-1.5b,-0.5b]\cup [0.5b, 1.5b])  \) and \( \mu_{R}|\tau\sim\text{Unif}([-3b,-b]\cup [b, 3b])  \) respectively.}\label{fig:rmse}
	        \end{figure}
	        \clearpage
\section{Real Data Analysis}\label{sec:More_Details_of_Numerical_Study_and_Real_Data_Analysis}
	The HASC (Human Activity Sensing Consortium) project aims at understanding the human activities based on the sensor data. This data includes 6 human activities: ``stay'', ``walk'', ``jog'', ``skip'', ``stair up'' and ``stair down''. Each activity lasts at least 10 seconds,  the sampling frequency is 100 Hz.
	\subsection{Data Cleaning}
	    The HASC offers sequential data where there are multiple change-types and multiple change-points, see~\Cref{fig:figures/HASC2011.eps} in main text. Hence, we can not directly feed them into our deep convolutional residual neural network. The training data fed into our neural network requires fixed length $n$ and either one change-point or no change-point existence in each time series. Next, we describe how to obtain this kind of training data from HASC sequential data. In general, Let \( \boldsymbol{x}={(x_{1},x_{2},\ldots,x_{d})}^{\top}, d\geq1  \) be the \( d \)-channel vector.
	    Define \( \boldsymbol{X}\coloneqq (\boldsymbol{x}_{t_{1}},\boldsymbol{x}_{t_{2}},\ldots,\boldsymbol{x}_{t_{n^{*}}})\) as  a realization of \( d \)-variate time series where \( \boldsymbol{x}_{t_{j}}, j=1,2,\ldots,n^{*} \) are the observations of  \( \boldsymbol{x} \) at \( n^{*} \) consecutive time stamps \( t_{1},t_{2},\ldots,t_{n^{*}} \). Let \( \boldsymbol{X}_{i}, i=1,2,\ldots,N^* \) represent the observation from the \(i\)-th subject.\ \( \boldsymbol{\tau}_{i}\coloneqq(\tau_{i,1},\tau_{i,2},\ldots,\tau_{i,K})^{\top}, K\in \mathbb{Z}^{+}, \tau_{i,k}\in[2,n^{*}-1], 1\leq k\leq K \) with convention \( \tau_{i,0}=0 \) and \( \tau_{i,K+1}=n^{*} \) represents the change-points of the \( i\)-th observation which are well-labelled in the sequential data sets. Furthermore, define \( n\coloneqq\min_{i\in[N^{*}]}\min_{k\in[K+1]}(\tau_{i,k}-\tau_{i,k-1}) \). In practice, we require that \( n \) is not too small, this can be achieved by controlling the sampling frequency in experiment, see HASC data. We randomly choose \( q \) sub-segments with length \( n \) from \( \boldsymbol{X}_{i} \) like  the gray dash rectangles in~\Cref{fig:figures/HASC2011.eps} of main text. By the definition of \( n \), there is at most one change-point in each sub-segment. Meanwhile, we assign the label to each sub-segment according to the type and existence of change-point. After that, we stack all the sub-segments to form a tensor \( \mathcal{X} \) with dimensions of \( (N^{*}q, d, n) \). The label vector is denoted as \( \mathcal{Y} \) with length \( N^{*}q \).
	    To guarantee that there is at most one change-point in each segment,  we set the length of segment \( n=700 \). Let \( q=15 \), as the change-points are well labelled, it is easy to draw 15 segments without any change-point, i.e., the segments with labels: ``stay'', ``walk'', ``jog'', ``skip'', ``stair up'' and ``stair down''. Next, we randomly draw 15 segments (the red rectangles in~\Cref{fig:figures/HASC2011.eps} of main text) for each transition point.
	\subsection{Transformation}
	    
	    \Cref{sec:CUSUM test and its generalisations are neural networks} in main text suggests that changes in the mean/signal may be captured by feeding the raw data directly. For other type of change, we recommend appropriate transformations before training the model depending on the interest of change-type. For instance, if we are interested in changes in the second order structure, we suggest using the square transformation; for change in auto-correlation with order \( p \) we could input the cross-products of data up to a \( p \)-lag. In multiple change-types, we allow applying several transformations to the data in data pre-processing step. The mixture of raw data and transformed data is treated as the training data.

	    We employ the square transformation here. All the segments are mapped onto scale \( [-1,1] \) after the transformation.
	    The frequency of training labels are list in~\Cref{fig: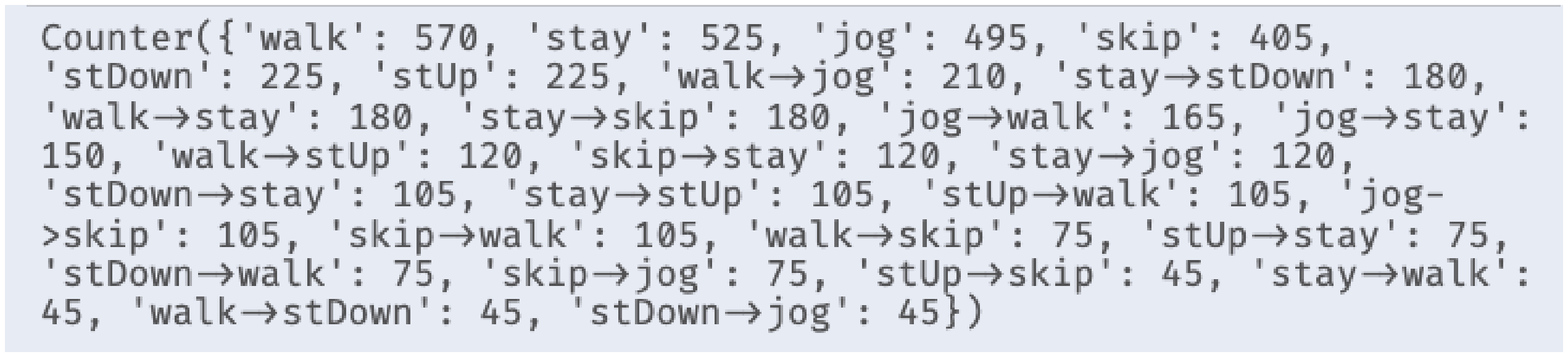}. Finally, the shapes of training and test data sets are \( (4875, 6, 700) \) and \( (1035, 6, 700) \) respectively.
	
	\subsection{Network Architecture}\label{subsec:Network_Architechure}
	    We propose a general deep convolutional residual neural network architecture to identify the multiple change-types based on the residual block technique~\citep{heDeepResidualLearning2016} (see~\Cref{fig:Arch-ResNet.eps}). There are two reasons to explain why we choose residual block as the  skeleton frame.
	    \begin{itemize}
	        \item The problem of vanishing gradients~\citep{bengioLearningLongtermDependencies1994,glorotUnderstandingDifficultyTraining2010}. As the number of convolution layers goes significantly deep, some layer weights might vanish in back-propagation which hinders the convergence. Residual block can solve this issue by the so-called ``shortcut connection'', see the flow chart in~\Cref{fig:Arch-ResNet.eps}.
	        \item Degradation.~\citet{heDeepResidualLearning2016} has pointed out that when the number of convolution layers increases significantly, the accuracy might get saturated and degrade quickly. This phenomenon is reported and verified in~\citet{he2015convolutional} and~\citet{heDeepResidualLearning2016}.
	    \end{itemize}
	    
	    \begin{figure}[htbp]
	        \centering
	        \makebox{\includegraphics[width=0.78\textwidth,height=0.28\textwidth]{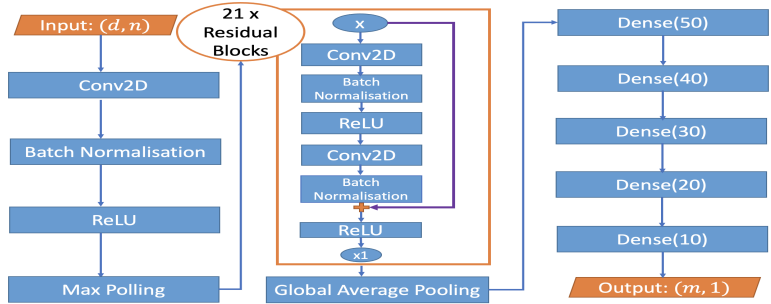}}
	        \caption{Architecture of our general-purpose change-point detection neural network. The left column shows the standard layers of neural network with input size $(d,n)$, \( d \) may represent the number of transformations or channels; We use 21 residual blocks and one global average pooling in the middle column; The right column includes 5 dense layers with nodes in bracket and output layer.  More details of the neural network architecture appear in the supplement.}\label{fig:Arch-ResNet.eps}
	    \end{figure}
	    
	    There are 21 residual blocks in our deep neural network, each residual block contains 2 convolutional layers.  Like the suggestion in~\citet{ioffeBatchNormalizationAccelerating2015} and~\citet{heDeepResidualLearning2016}, each convolution layer is followed by one Batch Normalization (BN) layer and one ReLU layer. Besides, there exist 5 fully-connected convolution layers right after the residual blocks, see the third column of~\Cref{fig:Arch-ResNet.eps}. For example, \textbf{Dense(50)} means that the dense layer has 50 nodes and is connected to a dropout layer with dropout rate 0.3. To further prevent the effect of overfitting, we also implement the \( L_{2} \) regularization in each fully-connected layer~\citep{ngFeatureSelectionL12004}. As the number of labels in HASC is 28, see~\Cref{fig:figures/label_dict.eps}, we drop the dense layers ``Dense(20)'' and ``Dense(10)'' in~\Cref{fig:Arch-ResNet.eps}. The output layer has size \( (28,1) \).
	    
	    We remark two discussable issues here.\ (a) For other problems, the number of residual blocks, dense layers and the hyperparameters may vary depending on the complexity of the problem. In~\Cref{sec:Detecting multiple changes and multiple change-types} of main text, the architecture of neural network for both synthetic data and real data has 21 residual blocks considering the trade-off between time complexity and model complexity.  Like the suggestion in~\citet{heDeepResidualLearning2016}, one can also add more residual blocks into the architecture to improve the accuracy of classification.\ (b) In practice, we would not have enough training data; but there would be potential ways to overcome  this via either using Data Argumentation or increasing \( q \). In some extreme cases that we only mainly have data with no-change, we can artificially add changes into such data in line with the type of change we want to detect.
	\subsection{Training and Detection}\label{Training and Detection}
	    \begin{figure}[ht]
	        \centering
	        \makebox{\includegraphics[width=0.9\textwidth]{figures/label_dict.eps}}
	        \caption{Label Dictionary}\label{fig:figures/label_dict.eps}
	    \end{figure}
	    \begin{figure}[ht]
	        \centering
	        \makebox{\includegraphics[width=0.9\textwidth]{figures/label_freq.eps}}
	        \caption{Label Frequency}\label{fig:figures/label_freq.eps}
	    \end{figure}
	    \begin{figure}[ht]
	        \centering
	        \makebox{\includegraphics[width=0.6\textwidth]{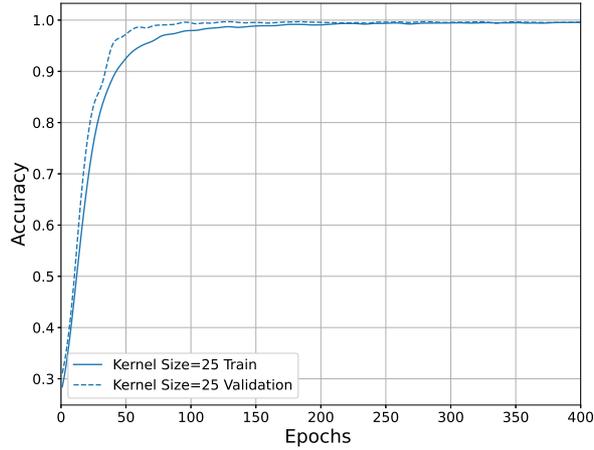}}
	        \caption{The Accuracy Curves}\label{fig:figures/RealDataKS25+acc.eps}
	    \end{figure}
	    \begin{figure}[ht]
	        \centering
	        \makebox{\includegraphics[width=0.8\textwidth]{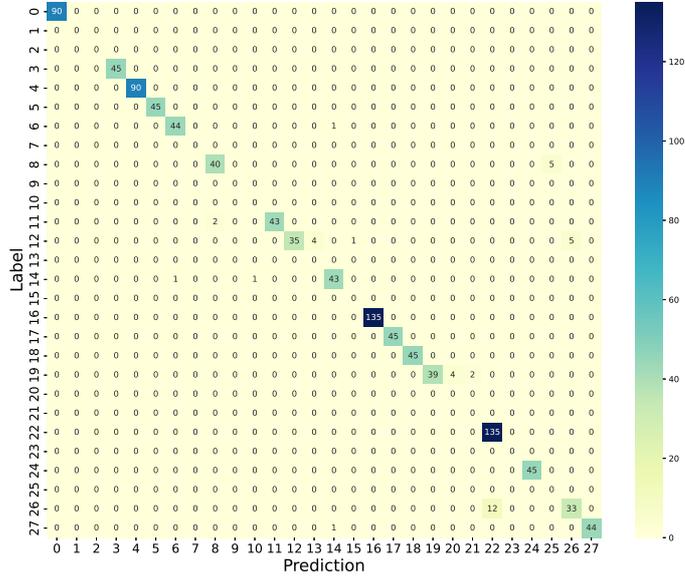}}
	        \caption{Confusion Matrix of Real Test Dataset}\label{fig:figures/RealDataKS25Confusion_matrix.eps}
	    \end{figure}
	    \begin{figure}[ht]
	        \centering
	        \makebox{\includegraphics[width=0.9\textwidth]{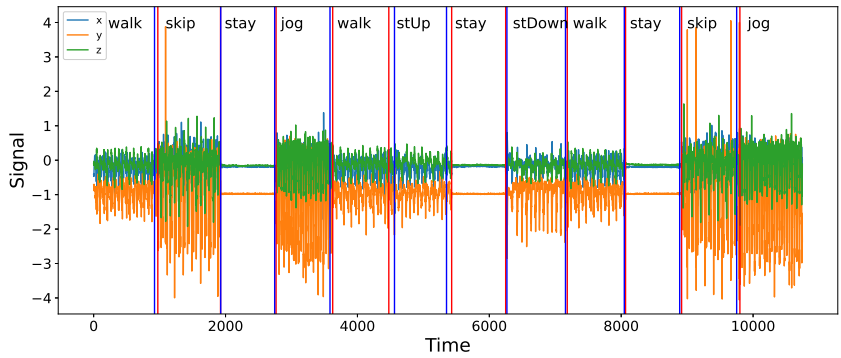}}
	        \caption{Change-point Detection of Real Dataset for Person 7 (2nd sequence). The red line at 4476 is the true change-point, the blue line on its right is the estimator. The difference between them is caused by the similarity of ``Walk'' and ``StairUp''.}\label{fig:figures/RealDataCPDEst2.eps}
	    \end{figure}
	    \begin{figure}[ht]
	        \centering
	        \makebox{\includegraphics[width=0.9\textwidth]{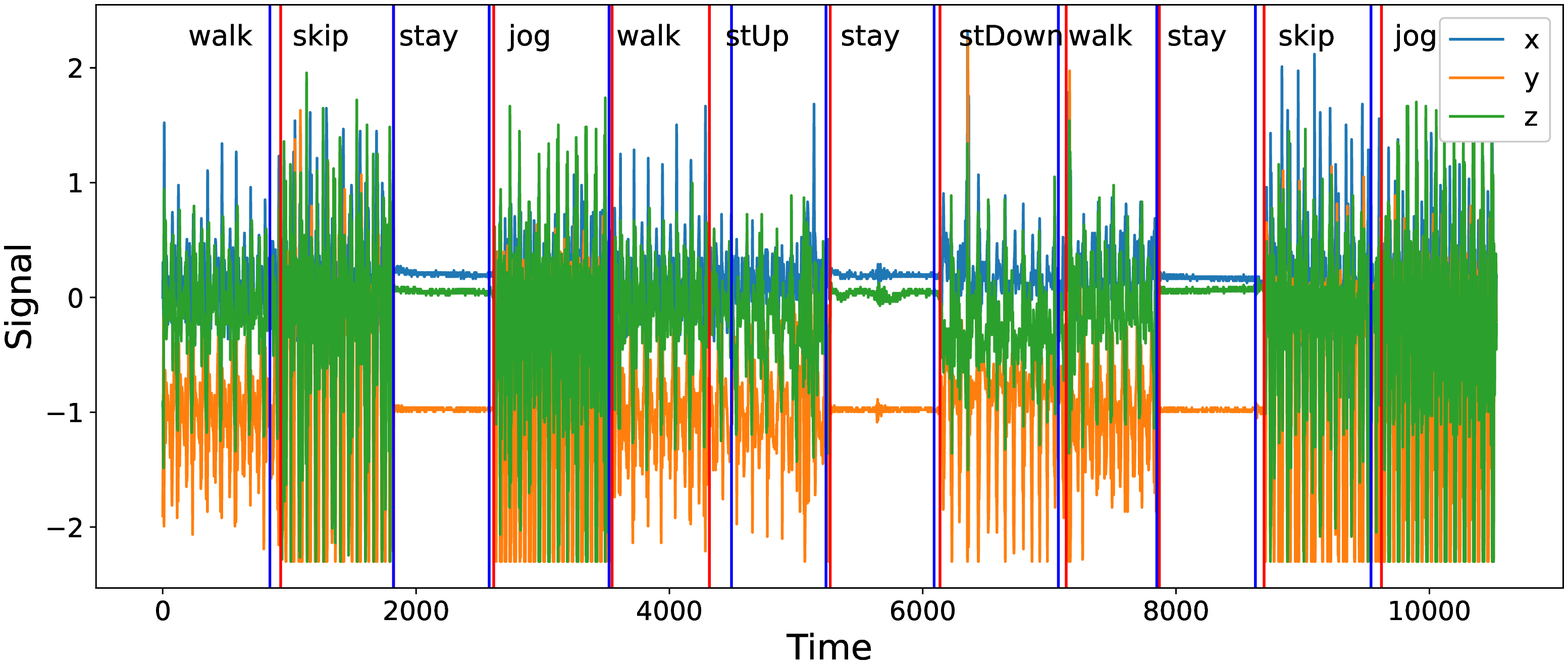}}
	        \caption{Change-point Detection of Real Dataset for Person 7 (3rd sequence). The red vertical lines represent the underlying change-points, the blue vertical lines represent the estimated change-points.}\label{fig:figures/RealDataCPDEst3.eps}
	    \end{figure}
	    
	    There are 7 persons observations in this dataset. The first 6 persons sequential data are treated as the training dataset, we use the last person's data to validate the trained classifier. Each person performs each of 6 activities: ``stay'', ``walk'', ``jog'', ``skip'', ``stair up'' and ``stair down''  at least 10 seconds.
	    The transition point between two consecutive activities can be treated as the change-point. Therefore, there are 30 possible types of change-point. The total number of labels is 36 (6 activities and 30 possible transitions). However, we only found 28 different types of label in this real dataset, see~\Cref{fig:figures/label_dict.eps}.

	    The initial learning rate is 0.001, the epoch size is 400. Batch size is 16, the dropout rate is 0.3, the filter size is 16 and the kernel size is \( (3,25) \). Furthermore, we also use 20\% of the training dataset to validate the classifier during training step.
	    
	    \Cref{fig:figures/RealDataKS25+acc.eps} shows the accuracy curves of training and validation. After 150 epochs, both solid and dash curves approximate to 1. The test accuracy is 0.9623, see the confusion matrix in~\Cref{fig:figures/RealDataKS25Confusion_matrix.eps}. These results show that our neural network classifier performs well both in the training and test datasets.
	    
	    Next, we apply the trained classifier to 3 repeated sequential datasets of Person 7 to detect the change-points. The first sequential dataset has shape \( (3, 10743) \). First,  we extract the \( n \)-length sliding windows with stride 1 as the input dataset. The input size becomes \( (9883, 6, 700) \). Second,  we use~\Cref{alg:change_point_detection} to detect the change-points where we relabel the activity label as ``no-change'' label and transition label as ``one-change'' label respectively.
	    ~\Cref{fig:figures/RealDataCPDEst2.eps,fig:figures/RealDataCPDEst3.eps} show the results of multiple change-point detection for other 2 sequential data sets from the 7-th person.
\end{document}